\newdefinition{definition}{Definition}
\newtheorem{theorem}{Theorem}
\newproof{proof}{Proof}
\newtheorem{lemma}{Lemma}
\tikzset{
    inputnode/.style={circle, draw, fill=orange!10, minimum size=20pt, inner sep=0pt, font=\small},
    middlenode/.style={circle, draw, fill=blue!10, minimum size=20pt, inner sep=0pt, font=\small},
    same_module/.style={rectangle, draw, fill=green!10, minimum size=20pt, inner sep=0pt, font=\small},
    hiddennode/.style={circle, draw, fill=blue!10, minimum size=20pt, inner sep=0pt, font=\small},
    outputnode/.style={circle, draw, fill=green!10, minimum size=20pt, inner sep=0pt, font=\small},
    defnode/.style={minimum size=20pt, inner sep=0pt, font=\small},
    signal/.style={thick, ->, >=latex, color=gray!50} 
}
\begin{document}
\let\WriteBookmarks\relax
\def\floatpagepagefraction{1}
\def\textpagefraction{.001}
\shorttitle{UniSymNet: A \textbf{Uni}fied \textbf{Sym}bolic \textbf{Net}work with Sparse Encoding and Bi-level Optimization}
\shortauthors{X. Li, J. Zhang, D.Li et~al.}
\tnotetext[1]{Supported by the National Natural Science Foundation of China, No. 12201024.}
\title [mode = title]{UniSymNet: A \textbf{Uni}fied \textbf{Sym}bolic \textbf{Net}work with Sparse Encoding and Bi-level Optimization}

\author[1]{Xinxin Li}[
                        auid=000,bioid=1,
                        orcid=0009-0008-8962-0084
                        ]

\ead{51265500102@stu.ecnu.edu.cn}
\affiliation[1]{organization={School of Mathematical Sciences, East China Normal University},
                city={Shang Hai},
                postcode={200241}, 
                country=China}

\author[2,3]{Juan Zhang}
\cormark[1]
\affiliation[2]{organization={Institute of Artiﬁcial Intelligence, Beihang University},
                postcode={100191}, 
                city={Beijing},
                country={China}}

\affiliation[3]{organization={Shanghai Zhangjiang Institute of Mathematics},
                postcode={201203}, 
                city={Shanghai},
                country={China}}
\ead{zhang_juan@buaa.edu.cn}

\author[4,5]{Da Li}
\affiliation[4]{organization={Institute of Applied Physics and Computational Mathematics},
                postcode={100094}, 
                city={Beijing},
                country={China}}

\affiliation[5]{organization={Academy for Advanced Interdisciplinary Studies, Northeast Normal University},
                postcode={130024}, 
                city={Changchun},
                state={Jilin},
                country={China}}

\ead{dli@nenu.edu.cn}

\author[2]{Xingyu Liu}
\ead{ZY2342118@buaa.edu.cn}

\author[3, 4]{Jin Xu}
\ead{xujin22@gscaep.ac.cn}

\credit{Data curation, Writing - Original draft preparation}

\author[3,4]{Junping Yin}
\cormark[1]
\ead{yinjp829829@126.com}

\cortext[cor1]{Corresponding author}







\begin{abstract}
Automatically discovering mathematical expressions is a challenging issue to precisely depict natural phenomena, in which Symbolic Regression (SR) is one of the most widely utilized techniques. Mainstream SR algorithms target on searching for the optimal symbolic tree, but the increasing complexity of the tree structure often limits their performance. Inspired by neural networks, symbolic networks have emerged as a promising new paradigm. However, existing symbolic networks still face certain challenges: binary nonlinear operators $\{\times, \div\}$ cannot be naturally extended to multivariate, training with fixed architecture often leads to higher complexity and overfitting. In this work, we propose a \textbf{Uni}fied \textbf{Sym}bolic \textbf{Net}work that unifies nonlinear binary operators into nested unary operators, thereby transforming them into multivariate operators. The capability of the proposed UniSymNet is deduced from rigorous theoretical proof, resulting in lower complexity \textcolor{black}{and stronger expressivity.} Unlike the conventional neural network training, we design a bi-level optimization framework: the outer level pre-trains a Transformer with sparse label encoding \textcolor{black}{scheme} to guide UniSymNet structure selection, while the inner level employs objective-specific strategies to optimize network parameters. This allows for flexible adaptation of UniSymNet structures to different data, leading to reduced expression complexity. The UniSymNet is evaluated on low-dimensional Standard Benchmarks and high-dimensional SRBench, and shows excellent symbolic solution rate, high fitting accuracy, and relatively low expression complexity.
\end{abstract}

\begin{keywords}
Symbolic regression \sep Transformer model \sep Symbolic network 
\end{keywords}

\maketitle

\section{Introduction}
\label{sec:introduction}
In the natural sciences, discovering interpretable and simplified models from data is a core goal. For example, Kepler discovered precise mathematical laws governing planetary motion through astronomical observations, laying the foundation for Newton’s theory of gravitation.

Symbolic Regression (SR) aligns with this goal by discovering mathematical expressions that capture the underlying relationships in the data. Specifically, given a dataset \(\{(x_i, y_i)\}_{i=1}^N\), the aim is to identify a function \(f(x)\) such that \(y_i \approx f(x_i)\) for all \(i\). SR has proven effective in various fields, from the derivation of physical laws based on empirical data \citep{udrescu2020ai} to the advancement of research in materials science \citep{wang2019symbolic}. However, unlike traditional regression, SR requires discovering functional forms and identifying parameters without prior structural assumptions, which makes SR challenging. SR generally involves two stages: (i) searching the functional structure space for viable functional skeletons, then (ii) optimizing parameters for the selected structure. How to represent the functional structure space remains a challenge due to the diversity and complexity of real physical equations.

Most research in SR uses symbolic trees to construct the functional structure space. Specifically, the symbolic tree representation method encodes basic operators and variables as tree nodes and represents equations in the form of a pre-order traversal for the forward pass process. In this way, SR tasks are transformed into the problem of searching for the optimal symbolic tree within a vast symbolic tree space. To tackle this, Genetic Programming (GP)-based methods \citep{schmidt2009distilling} employ biology-inspired strategies to conduct this search, while Reinforcement Learning (RL)-based methods \citep{petersen2020deep, tenachi2023deep} leverage risk-seeking policy gradients to guide the search process. These methods\textcolor{black}{, learning from scratch,} contrast with Transformer-based methods \citep{biggio2021neural, kamienny2022end}, which adapt the machine translation paradigm from NLP \citep{vaswani2017attention,devlin2019bert} to symbolic tree search \citep{valipour2021symbolicgpt}. The pre-training mechanisms in Transformer-based methods allow them to improve with experience and data.

Recently, based on the theoretical foundation of the Universal Approximation Theorem\citep{hornik1989multilayer}, deep neural networks demonstrate remarkable capabilities to approximate complex functions and capture intricate nonlinear patterns within high-dimensional data spaces. In this context, EQL \citep{martius2017extrapolation} introduced modifications to feedforward networks by replacing standard activation functions with operators such as \( \cos \) and \( \times \), thus transforming them into symbolic networks. \textcolor{black}{We refer to symbolic networks that incorporate this design as EQL-type networks.} Since then, an increasing number of SR methods based on symbolic network representations have emerged \citep{sahoo2018learning,liu2023snr,li2024neural}, showing competitive performance against methods based on symbolic tree representations. Moreover, compared to the black-box nature of standard neural networks, symbolic networks are transparent and possess better extrapolation capabilities, which has led to their increasing use in scientific exploration and discovery \citep{long2019pde,kim2020integration}. 

In symbolic network, $\{ +, -\}$ and scalar multiplication operator are implemented through affine transformations, while$\{\times,\div,\mathrm{pow}\}$ are implemented using activation functions. This \textcolor{black}{design} has two advantages: 1) it allows $\{+,-\}$ to extend from binary operators in symbolic trees to multivariate operators, and 2) the symbolic network can support cross-layer connections through the $\mathrm{id}$ operator\citep{wu2023discovering}. These features help \textcolor{black}{to} reduce the complexity of the equation representation. However, as noted in current research \citep{martius2017extrapolation, sahoo2018learning, li2024neural}, $\mathrm{pow}$ operators are typically restricted to activation functions with predefined real exponents (e.g., squaring \((\cdot)^2\) or taking the square root \(\sqrt{\cdot}\)), which inherently limits the expressive power of network.

Integrating operators $\{\times,\div,\mathrm{pow}\}$ into neural architectures as naturally as $\{+,-\}$ and scalar multiplication operators would be more consistent with human intuition. In algebra, the isomorphism between the multiplicative group \((\mathbb{R}^+, \times)\) and the additive group \((\mathbb{R}, +)\) is established through the homomorphic mapping \(\ln\) and its inverse \(\exp\)\citep{lang2012algebra}, which provides a unified framework for translating nonlinear operators into linear ones. This transformation has been successfully applied in machine learning to enhance numerical stability and computational efficiency, such as log softmax\citep{goodfellow2016deep} and logarithmic likelihood of probability calculation. \textcolor{black}{Motivated by the above, we propose a new $\Psi$ representation (Equation \ref{eq:binary_unification_constrained}), which represents binary operators \(\{\times, \div, \mathrm{pow}\}\) using nested unary operators \(\{\ln, \exp\}\). This formulation naturally establishes a \textbf{Uni}fied \textbf{Sym}bolic \textbf{Net}work consisting solely of \textbf{Un}ary operators (UniSymNet).} In the UniSymNet, \(\{\times, \div\}\) are transformed from binary operators in \textcolor{black}{EQL-type networks} into multivariate operators, similar to \(\{+, -\}\), \textcolor{black}{leading to more compact representations.}



\textcolor{black}{UniSymNet not only incorporates the $\Psi$ representation into its network architecture but also introduces a Transformer-guided structural search strategy. This strategy leverages prior knowledge acquired through pre-training to flexibly and efficiently search network structures in the process of discovering new equations. Unlike conventional approaches that rely on backpropagation within a fixed network structure, this design mitigates overfitting and favors the discovery of simpler mathematical expressions. This aligns with a fundamental goal of natural science: deriving interpretable and parsimonious models from data. To realize this design, UniSymNet adopts a bi-level optimization framework, where the outer optimization employs a pre-trained Transformer to guide structural selection, and the inner optimization applies objective-specific optimization strategies to learn the parameters of the symbolic network.}

In summary, we introduce the main contributions of this work as follows:
\begin{itemize}
\item We introduce UniSymNet, a symbolic network \textcolor{black}{with $\Psi$ representation, which} unifies binary operators $\{\times, \div, \\ \mathrm{pow}\}$ as nested unary operators $\{\ln,\exp\}$. The capability of the UniSymNet is deduced from rigorous theoretical proof, resulting in lower complexity \textcolor{black}{and stronger expressivity.}

\item \textcolor{black}{We propose a Transformer-guided structural search strategy that integrates a sparse label encoding scheme during Transformer pre-training. The pre-trained Transformer guides the structural discovery of UniSymNet, introducing a learning-from-experience mechanism.}


\item We design objective-specific optimization strategies to learn the parameters of the symbolic network and analyze the strengths of different optimization methods.  

\end{itemize}
The remainder of this paper is organized as follows: Section \ref{sec: related Work} reviews recent works in SR. Section \ref{sec: methods} details the framework of UniSymNet, including its core mechanisms. Sections \ref{sec: experiments} and \ref{sec: results} describe the experimental settings and detailed results, respectively. Section \ref{sec: Discussion} discusses \textcolor{black}{$\Psi$ representation and }the specific differences between UniSymNet and symbolic tree representations. Section \ref{sec: conclusion} summarizes \textcolor{black}{the} contributions of our method and outlines future research directions.
\section{Related Works}
\label{sec: related Work}

\subsection{\mbox{Genetic Programming for Symbolic Regression}} 

\textcolor{black}{Traditional Symbolic Regression (SR) methods primarily rely on genetic algorithms \citep{forrest1993genetic}, particularly Genetic Programming (GP) \citep{koza1994genetic}.} GP-based methods iteratively evolve symbolic trees through selection, crossover, and mutation, optimizing their fit to observed data. Subsequent advancements, such as the development of Eureqa \citep{dubvcakova2011eureqa}, improved the scalability and efficiency of GP-based SR. However, these methods\citep{lalearning,virgolin2021improving} still face challenges, including high computational costs, sensitivity to hyperparameters, and difficulties in scaling well to large datasets or high-dimensional problems. 

\subsection{Neural Networks for Symbolic Regression} 

In more recent studies, neural networks have been integrated with Symbolic Regression tasks, primarily to enhance the search for mathematical expressions. For example, AIFeynman \citep{udrescu2020ai} used fully connected neural networks to identify simplified properties of underlying function forms (e.g., multiplicative separability), effectively narrowing down the search space by decomposing complex problems into smaller subproblems. Similarly, DSR \citep{petersen2020deep} utilized a recurrent neural network (RNN) to generate symbolic trees, refining its expression generation based on a reward signal that adjusts the likelihood of generating better-fitting expressions. \(\Phi\)-SO\citep{tenachi2023deep} extended this method by embedding unit constraints to incorporate physical information into the symbolic framework. 

Unlike prior methods that employ neural networks to accelerate symbolic search, recent methods explore neural networks as novel frameworks for constructing interpretable symbolic representations. 
\textcolor{black}{EQL \citep{martius2017extrapolation} and EQL$^{\div}$ \citep{sahoo2018learning} are representative examples of these recent methods and also belong to the EQL-type methods discussed in Section \ref{sec:introduction}.} 

EQL transformed fully connected networks into symbolic networks by using operators such as {$\sin$, $cos$, $\times$} as activation functions, but it didn't introduce $\div$ operator. Therefore, in EQL\(^{\div}\), the nonlinear binary operators \(\{\times, \div\}\) are incorporated into the network through the architectural design illustrated in Fig. \ref{fig: The overall architecture of EQL.}.
\begin{figure}[!ht]
    \begin{center}
    \resizebox{\linewidth}{!}{
    \begin{tikzpicture}[shorten >=1pt,->, node distance=1.5cm and 1cm]
        \node[defnode] (I) at (0, +0.75) {$x$};
        \foreach \i/\name in {0/0, 1/1, 2/\cdots, 3/n-1,4/n}
            \node[inputnode] (I\i) at (0, -\i){} ;
        \node[defnode] (M) at (2, +0.75) {$y^{(1)}$};
        \foreach \i in {0, 1, 2, 3, 4}
            \node[middlenode] (M1\i) at (2, -\i){} ;
        \node[defnode] (H) at (3, +0.75) {$z^{(1)}$};
        \foreach \i/\name in {0/$op^{(1)}_1$, 1/$op^{(1)}_2$, 2/$\cdots$, 3/$op^{(1)}_{d_1}$}
            \node[hiddennode] (H1\i) at (3, -\i-0.5) {\name};
        \node[defnode] (M) at (5, +0.75) {$y^{(2)}$};
        \foreach \i in {0, 1, 2, 3, 4}
            \node[middlenode] (M2\i) at (5, -\i){} ;
        \node[defnode] (H) at (6, +0.75) {$z^{(2)}$};
        \foreach \i/\name in {0/$op^{(2)}_1$, 1/$op^{(2)}_2$, 2/$\cdots$, 3/$op^{(2)}_{d_2}$}
            \node[hiddennode] (H2\i) at (6, -\i-0.5) {\name};
        \node[defnode] (M) at (7.5, +0.75) {$y^{(L-1)}$};
        \foreach \i in {0, 1, 2, 3, 4}
            \node[middlenode] (M3\i) at (7.5, -\i) {};
        \node[defnode] (L) at (8.75, +0.75) {$z^{(L-1)}$};
        \foreach \i/\name in {0/$op^{(l)}_1$, 1/$op^{(l)}_2$, 2/$\cdots$, 3/$op^{(l)}_{d_l}$}
            \node[hiddennode] (H3\i) at (8.75, -\i-0.5){\name};
        \node[defnode] (M) at (10.35, -0.85) {$y^{(L)}$};

        \node[hiddennode] (O) at (10.25, -1.5){} ;
        \node[hiddennode] (O1) at (10.25, -2.25){} ;
        \node[defnode] (M) at (11.25, -0.85) {$z^{(L)}$};
        \node[hiddennode] (1) at (11.25, -2) {$\div$};
        \node[outputnode] (2) at (12.25, -2) {};

        \foreach \i in {0,1,2,3,4}
            \foreach \j in {0,1,2,3,4}
                \draw[signal] (I\i) -- (M1\j);
        \foreach \i in {0,1,2,3}
            \draw[signal] (M1\i) -- (H1\i);
        \foreach \i in {4}
            \draw[signal] (M1\i) -- (H13);
        \node[defnode] at (6.75, 0) {$\cdots$};
        \node[defnode] at (6.75, -1) {$\cdots$};
        \node[defnode] at (6.75, -3) {$\cdots$};
        \node[defnode] at (6.75, -2) {$\cdots$};
        \node[defnode] at (12.25, -1.5) {$y$};
        \foreach \i in {0,1,2,3}
            \foreach \j in {0,1,2,3,4}
                \draw[signal] (H1\i) -- (M2\j);
        \foreach \i in {0,1,2,3}
            \draw[signal] (M2\i) -- (H2\i);
         \foreach \i in {4}
            \draw[signal] (M2\i) -- (H23);   
        \foreach \i in {0,1,2,3}
            \draw[signal] (M3\i) -- (H3\i);
        \foreach \i in {4}
            \draw[signal] (M3\i) -- (H33); 
        \foreach \i in {0,1,2,3}
            \draw[signal] (H3\i) -- (O);
        \foreach \i in {0,1,2,3}
            \draw[signal] (H3\i) -- (O1);
        \draw[signal] (O) -- (1);
        \draw[signal] (O1) -- (1);
        \draw[signal] (1) -- (2);
    \end{tikzpicture}
    }
    \centering
    \end{center}
    \caption{The overall architecture of EQL$^\div$.}
    \label{fig: The overall architecture of EQL.}
\end{figure}
In such symbolic networks, the \( l \)-th layer contains \( p_l \) unary operators and \( q_l \) binary operators, where \( q_l = d_l - p_l \) and \( d_l \) denotes the total number of operators in this layer. Since each unary operator consumes one input \textcolor{black}{node, }whereas each binary operator requires two input nodes, the output after affine transformation must provide \( n_l = p_l + 2q_l \) input nodes to satisfy the computational requirements of this layer. This above output state, denoted as \( y^{(l)} \), therefore contains \( n_l \) nodes prior to operator application. Following the application of operators, the resulting vector \( z^{(l)} \) contains \( d_l\) nodes, the forward propagation rules are defined as follows:  
\begin{equation}
\begin{aligned}
    z^{(0)}&=x,\quad x\in\mathbb{R}^{d_0},\\
    {y}^{(l)} &= {W}^{(l)}{z}^{(l-1)} + {b}^{(l)}, \quad l = 1, 2, \dots, L, \\
    z_i^{(l)} &= op_i^{(l)}(y_i^{(l)}), \quad i = 1, 2, \dots, p_l, \\
    z_i^{(l)} &= op_{i+p_l}^{(l)}(y_{p_l+2i-1}^{(l)},y_{p_l+2i}^{(l)}), \quad i = 1,2, \dots, q_l, 
\end{aligned}
\label{eq: EQL forward}
\end{equation}  
where $z^{(l-1)}\in \mathbb{R}^{d_{l-1}}$ is the previous layer's output,  $W^{(l)}\in \mathbb{R}^{n_{l}\times d_{l-1}}$ ,$b^{(l)}\in \mathbb{R}^{n_{l}}$ are the weight matrix and bias vector for the current layer, respectively. In this way, EQL$^\div$ not only has strong fitting and expressive capabilities but also offers better interpretability compared to black-box models. However, \textcolor{black}{EQL-type methods typically constrain the operator \(\mathrm{pow}\) to fixed exponents when used as activation functions, which limits their flexibility in modeling arbitrary polynomial relationships. In addition, because they rely on backpropagation within a fixed neural network architecture, they often generate overly complex expressions and are prone to overfitting.} \textcolor{black}{Compared with EQL-type networks, UniSymNet introduces the \(\Psi\) representation (Equation \ref{eq:binary_unification_constrained}), which represents the operators \(\{\times,\div,\mathrm{pow}\}\) through nested unary operations. This enhances the expressive power of the network while reducing representational complexity. Moreover, UniSymNet leverages a Transformer to guide structure search, enabling the discovery of expressions with lower complexity and alleviating overfitting.}

Meanwhile, DySymNet \citep{li2024neural} employed an RNN controller to sample various symbolic network architectures guided by policy gradients, demonstrating effectiveness in solving high-dimensional problems. \textcolor{black}{However, these methods typically perform symbolic regression from scratch and, as a result, do not benefit from accumulated experience. In contrast, thanks to the Transformer-guided search strategy, UniSymNet does not need to start the learning process from scratch each time. As a result, it not only significantly accelerates the search at test time but also produces more interpretable expressions by reusing prior knowledge, unlike methods such as DySymNet that rely on RL exploration.
}

In addition, recent studies have proposed novel representation methods: ParFam\citep{scholl2025parfam} incorporates residual neural networks with rational function layers to parameterize mathematical \textcolor{black}{expressions,} GraphDSR \citep{liu2025mathematical} models mathematical expressions as directed acyclic graphs (DAGs) and optimizes symbolic constants through graph neural networks (GNNs).

\subsection{Transformer Model for Symbolic Regression}

The Transformer model, introduced by \citep{vaswani2017attention}, represents a pivotal advancement in deep learning. Its remarkable success has driven its extensive adoption across various domains, including SR tasks. \textcolor{black}{Among the earliest Transformer-based SR methods,} NeSymReS\citep{biggio2021neural} was the first to use a large-scale pre-trained Transformer model to discover symbolic trees from input-output pairs, improving performance with more data and computational resources, but it is limited to low-dimensional problems. \textcolor{black}{In parallel,} \textcolor{black}{SymbolicGPT \citep{valipour2021symbolicgpt} converted the point cloud of the dataset into order-invariant vector embeddings and used a GPT model to translate them into mathematical expressions in string form that describe the point sets.} \textcolor{black}{Recently, a growing number of Transformer-based methods have emerged. The first major line of extending Transformer-based SR focuses on end-to-end expression generation. Along this direction,} \textcolor{black}{\citet{kamienny2022end} proposed an end-to-end symbolic regression method using a Transformer model to predict complete mathematical expressions, including constants, achieving fast inference while maintaining high accuracy.} \textcolor{black}{Moreover, SymFormer \citep{vastl2024symformer} introduced a novel constant-encoding scheme and did not restrict the precision of the generated constants compared to the End-to-end method \citep{kamienny2022end}.} \textcolor{black}{The second major line integrates Transformer models with other optimization strategies.} \textcolor{black}{TPSR\citep{shojaee2023transformer} combined pre-trained Transformer models with Monte Carlo Tree Search to optimize program sequence generation by considering non-differentiable performance feedback.} \textcolor{black}{Additionally, some methods extend genetic programming into Transformer-based SR frameworks, with PGGP\citep{han2025transformer} incorporating Transformer-guided initialization and mutation and TSGP\citep{anthes2025transformer} employing a Transformer to generate semantic-aware offspring.} \textcolor{black}{The third direction of research centers on Transformer-based multimodal frameworks. MMSR\citep{li2025mmsr} aligned the data modality with the expression modality and achieved cross-modal fusion through contrastive learning. In addition, several recent studies \citep{li2024visymre,chen2025bootstrapping} further extend this line of research by introducing visual modalities. Most of the above methods encode mathematical expressions as symbolic trees and use them as labels to train Transformer models. In contrast, our work focuses on encoding symbolic networks that provide a more efficient representation.}

\begin{figure*}
	\centering
	\includegraphics[width=\textwidth]{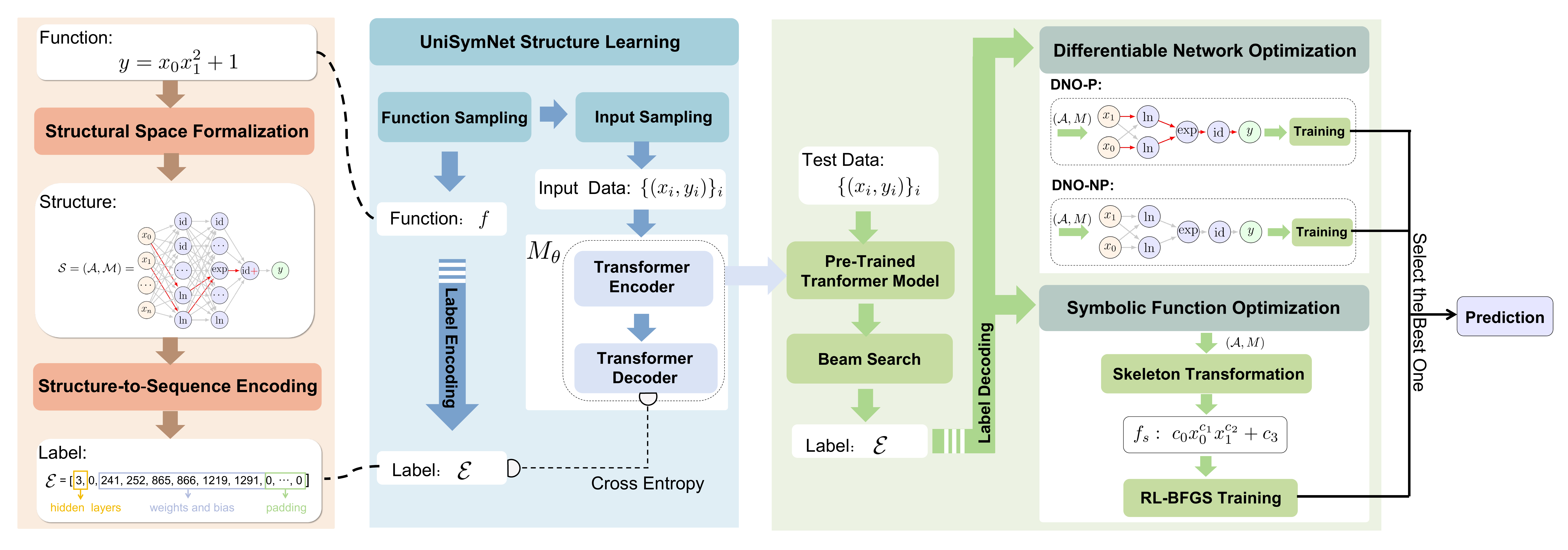}
	\caption{\textcolor{black}{The overall architecture of our method. \textbf{Structural Space Formalization} establishes the mapping between functions and network structures, and \textbf{Structure-to-Sequence Encoding} builds the mapping between network structures and labels. \textbf{UniSymNet Structure Learning} trains a Transformer to learn the relationship between input data and the corresponding network structures. For new test data \(\{x_i, y_i\}_i\), the pre-trained Transformer outputs a predicted sequence \(\mathcal{E}\). Different optimization strategies, \textbf{Symbolic Function Optimization} and \textbf{Differentiable Network Optimization}, then use the Transformer’s guidance to optimize parameters and select the best prediction according to the target objective.}}
	\label{fig: framework}
\end{figure*}
\section{Methods}
\label{sec: methods}

Given a dataset $\{(x_i,y_i)\in \mathbb{R}^{d}\times \mathbb{R}\}_{i=1}^{N}$, SR seeks to identify a mathematical function $f$ that approximates the relationship $y_i \approx f(x_i)$ for all observations. This process inherently involves a dual optimization challenge: (1) exploring the space of potential functional skeletons, and (2) determining optimal parameters for each candidate structure. Formally, we frame SR as a bi-level optimization problem:
\begin{equation}
    \label{eq: bi-level optimization problem}
    s^*, c^* = \mathop{\arg\min}_{s \in \mathcal{S}} \left( \min_{c \in \mathcal{C}_s} \sum_{i=1}^N \left( y_i - f_s(x_i; c) \right)^2 \right),
\end{equation}
where $\mathcal{S}$ represents the space of functional skeletons (each defining a function $f_s(x; c)$), and $\mathcal{C}_s$ denotes the parameter space for structure $s$. The inner optimization identifies the optimal parameters $c$ for a given skeleton, while the outer optimization selects the \textcolor{black}{best} skeleton.


\textcolor{black}{Fig.~\ref{fig: framework} presents the overall framework of our method, where a Transformer model learns structure–data mappings and subsequently guides symbolic equation discovery through parameter optimization.} Specifically, our method solves the above bi-level optimization problem, consisting of outer optimization for structural discovery and inner optimization for parameter estimation. For the outer part, our method has three key components: Structural Space Formalization, Structure-to-Sequence Encoding, and \textcolor{black}{UniSymNet Structure Learning}. For the inner part, we implement objective-specific optimization strategies: Symbolic Function Optimization and Differentiable Network Optimization. Then we will subsequently introduce the above contents.

\subsection{Structural Space Formalization}
\label{sec: Structural Space Formalization}
\begin{figure}[!h]
    \begin{center}
    \resizebox{0.5\textwidth}{!}{
    \begin{tikzpicture}[shorten >=1pt,->, node distance=1.5cm and 1cm]
        \node[defnode] (I) at (0, +0.75) {$x$};
        \foreach \i/\name in {0/0, 1/1, 2/\cdots, 3/n}
            \node[inputnode] (I\i) at (0, -\i){} ;
        \node[defnode] (M) at (2, +0.75) {$y^{(1)}$};
        \foreach \i in {0, 1, 2, 3}
            \node[middlenode] (M1\i) at (2, -\i){} ;
        \node[defnode] (H) at (3, +0.75) {$z^{(1)}$};
        \foreach \i/\name in {0/$op^{(1)}_{1 }$, 1/$op^{(1)}_2$, 2/$\cdots$, 3/$op^{(1)}_{d_1}$}
            \node[hiddennode] (H1\i) at (3, -\i) {\name};
        \node[defnode] (M) at (5, +0.75) {$y^{(2)}$};
        \foreach \i in {0, 1, 2, 3}
            \node[middlenode] (M2\i) at (5, -\i){} ;
        \node[defnode] (H) at (6, +0.75) {$z^{(2)}$};
        \foreach \i/\name in {0/$op^{(2)}_1$, 1/$op^{(2)}_2$, 2/$\cdots$, 3/$op^{(2)}_{d_2}$}
            \node[hiddennode] (H2\i) at (6, -\i) {\name};
        \node[defnode] (M) at (7.5, +0.75) {$y^{(L-1)}$};
        \foreach \i in {0, 1, 2, 3}
            \node[middlenode] (M3\i) at (7.5, -\i) {};
        \node[defnode] (L) at (8.75, +0.75) {$z^{(L-1)}$};
        \foreach \i/\name in {0/$op^{(l)}_1$, 1/$op^{(l)}_2$, 2/$\cdots$, 3/$op^{(l)}_{d_{l}}$}
            \node[hiddennode] (H3\i) at (8.75, -\i){\name};
        \node[defnode] (M) at (10.35, -0.85) {$y^{(L)}$};
        \node[hiddennode] (O) at (10.25, -1.5){} ;
        \node[defnode] (M) at (11.25, -0.85) {$z^{(L)}$};
        \node[hiddennode] (1) at (11.25, -1.5) {$id$};
        \node[outputnode] (2) at (12.25, -1.5) {};

        \foreach \i in {0,1,2,3}
            \foreach \j in {0,1,2,3}
                \draw[signal] (I\i) -- (M1\j);
        \foreach \i in {0,1,2,3}
            \draw[signal] (M1\i) -- (H1\i);
        \node[defnode] at (6.75, 0) {$\cdots$};
        \node[defnode] at (6.75, -1) {$\cdots$};
        \node[defnode] at (6.75, -3) {$\cdots$};
        \node[defnode] at (6.75, -2) {$\cdots$};
        \node[defnode] at (12.25, -1.5) {$y$};
        \foreach \i in {0,1,2,3}
            \foreach \j in {0,1,2,3}
                \draw[signal] (H1\i) -- (M2\j);
        \foreach \i in {0,1,2,3}
            \draw[signal] (M2\i) -- (H2\i);
        \foreach \i in {0,1,2,3}
            \draw[signal] (M3\i) -- (H3\i);

        \foreach \i in {0,1,2,3}
            \draw[signal] (H3\i) -- (O);
        \draw[signal] (O) -- (1);
        \draw[signal] (1) -- (2);
    \end{tikzpicture}
    }
    \centering
    \end{center}
    \caption{The overall architecture of UniSymNet.}
    \label{fig: The overall architecture of UniSymNet.}
\end{figure}

The architectural \(\mathcal{A}\) of UniSymNet is illustrated in Fig. \ref{fig: The overall architecture of UniSymNet.}, where $x$ is the input of network, $L$ denotes the number of hidden layers, and ${op}$ denotes the unary operator selected from the library \(\mathcal{O}_u = \{id, \sin, \cos, \exp, \ln\}\). 
Specifically, \(\mathcal{A}\) refers to depth \(L\), the number of operators per layer \(d_l\), and the set of operators at the \(l\)-th layer \(\{{op}_i^{(l)}\}_{i=1}^{d_l}\).

With the architecture $\mathcal{A}$, the forward propagation of a fully-connected network is governed by the following principles:
\begin{equation}
\begin{aligned}
    z^{(0)}&=x,\quad x\in\mathbb{R}^{d_0},\\
    {y}^{(l)} &= {W}^{(l)}{z}^{(l-1)} + {b}^{(l)}, \quad l = 1, 2, \dots, L, \\
    z_i^{(l)} &= op_i^{(l)}(y_i^{(l)}), \quad i = 1, 2, \dots, d_l, 
\end{aligned}
\label{eq: UniSymNet forward}
\end{equation}  
where $z^{(l-1)}\in \mathbb{R}^{d_{l-1}}$ is the previous layer's output,  $W^{(l)}\in \mathbb{R}^{d_{l}\times d_{l-1}}$ ,$b^{(l)}\in \mathbb{R}^{d_{l}}$ are the weight matrix and bias vector for the current layer, respectively. 

To effectively represent the functional skeleton of a symbolic network structure, we convert a fully-connected neural network into a sparsely-connected one by employing mask matrices,  \(\mathcal{M} = \{M^{(l)}_{w}, M^{(l)}_{b}\}_{l=1}^{L}\), as defined in Equation Equation \ref{eq: Mask matrix}. Specifically, during forward propagation, if $M^{(l)}_{w}(i,j) = 0$, it means that the $i$-th node in the $l$-th layer is not connected to the $j$-th node in the $(l-1)$-th layer. In this way, the sparse structure induced by the mask matrices reduces the overall connectivity of the network.
\begin{equation}
W^{(l)}=W^{(l)} \odot M^{(l)}_w,\  b^{(l)}=b^{(l)} \odot M^{(l)}_b,\quad l = 1, \cdots, L, \\
\label{eq: Mask matrix}
\end{equation}
where $\odot$ denotes the Hadamard product, \(M^{(l)}_w\) has the same dimensions as the weight matrix \(W^{(l)}\), and \(M^{(l)}_b\) has the same dimensions as the bias vector \(b^{(l)}\).

\begin{definition}
    \label{def: The structure of UniSymNet}
The structure of UniSymNet is defined as $S = (\mathcal{A}, \mathcal{M})$, where $\mathcal{A}$ denotes the network architecture and $\mathcal{M}$ is the associated mask matrix. The structural space $\mathcal{S}$ is the set of all combinations of $\mathcal{A}$ and $\mathcal{M}$, i.e., $\mathcal{S} = \{ (\mathcal{A}, \mathcal{M})\}.$
\end{definition}

We define the structure $S$ of UniSymNet and structural space $\mathcal{S}$ in Definition \ref{def: The structure of UniSymNet}. \textcolor{black}{Defining the structural space alone is not sufficient.} It is essential to establish a systematic mapping between the \(\mathcal{S}\) and the functional structure space, so that each structure $S$ of UniSymNet corresponds to a unique functional skeleton $f_s$. Before introducing this mapping, we first present a key representation $\Psi$ introduced in UniSymNet.

Although the binary operators \(\mathcal{O}_b = \{+, -, \times, \div, \mathrm{pow}\}\) are not explicitly defined in the network, the \textcolor{black}{above} network can still efficiently represent these operations.
The linear operators \(\mathcal{O}_b^{lin}  = \{+,-\} \) have been integrated into the affine transformations described in Equation \ref{eq: UniSymNet forward}. And the nonlinear operators \(\mathcal{O}_b^{non-lin}=\{\times, \div, \mathrm{pow}\}\) can be represented by nested compositions of unary operators \(\{\ln, \exp\}\). \textcolor{black}{Moreover, we also represent the $\text{abs}$ operator using nested compositions of the unary operators $\{\ln, \exp\}$, but with a different order of nesting.} Specifically, we establish unified symbolic representation $\Psi$ as follows:
\begin{equation}
\label{eq:binary_unification_constrained}
\begin{aligned}
&\Psi(x_1 \times x_2) \triangleq \exp\left( \ln x_1 + \ln x_2 \right), \\
&\Psi(x_1 \div x_2) \triangleq \exp\left( \ln x_1 - \ln x_2 \right), \\
&\textcolor{black}{\Psi(\mathrm{pow}(x_1,x_2)) \triangleq \exp\left( x_2 \cdot \ln x_1 \right),}\\
&\textcolor{black}{\Psi(\text{abs}(x)) \triangleq \ln(\exp(x)).}
\end{aligned}
\end{equation}
In the UniSymNet, the nonlinear binary operators can be systematically expressed as:
\begin{equation}
    \label{eq: unified trans}
    \Psi(\mathcal \mathcal{O}_b^{\text{non-lin}}) \triangleq \mathcal{O}_u^{\otimes n} \circ \mathcal{O}_b^{\text{lin}},
\end{equation}
where $\mathcal{O}_u^{\otimes n}$ indicates nested compositions of unary operators \(\{\ln, \exp\}\), $\circ$ represents the interaction between two operations. It should be noted that \(\Psi^{-1}\) refers to the inverse transformation, which recovers the original operators by reversing the nesting of unary operators.

\textcolor{black}{The impact of the $\Psi$ representation on network expressivity and complexity highlights its advantages. In the following, we examine how the $\Psi$ representation affects the network’s expressivity and complexity. For comparison, we use UniSymNet (with the $\Psi$ representation) and EQL-type networks (without the $\Psi$ representation) as reference models.
}

\noindent\textcolor{black}{\textbf{Expressivity: }EQL-type networks require predefined activation functions to represent non-integer powers (e.g., $x^{1/2}, x^{1/3}$) and cannot represent equations involving variable exponents (e.g., $x^{y}$), which inherently limits their expressiveness. In contrast, UniSymNet flexibly encodes such functions using nested ${\ln, \exp}$ operations; for example, $2x^{1/2} = 2\exp(\frac{1}{2}\ln x),\ x^y=\exp(y\ln x) $. Moreover, while addition and subtraction are naturally modeled as multi-ary operators via linear mappings, EQL-type networks restrict multiplication and division to binary forms. UniSymNet generalizes these operators into multi-ary versions using ${\ln, \exp}$, such as $\prod_{i=1}^n x_i = \exp(\sum_{i=1}^n \ln x_i)$.}

\noindent\textcolor{black}{\textbf{Complexity: }}In Theorem \ref{thm:exact-neural-rep} and Theorem \ref{thm:composition-generalization}, we show that applying the representation $\Psi$ can reduce depth and node count in a symbolic network under specific conditions, thereby lowering its complexity. The proof of theorems will be given in the Appendix \ref{appendix: proof of Theorem}.
\begin{theorem}
\label{thm:exact-neural-rep}
Let \( P(\mathbf{x}) = \sum_{k=1}^{K} c_k \prod_{i=1}^{d} x_i^{m_{k i}} \), where \( \mathbf{x} \in \mathbb{R}^d \), \( c_k \in \mathbb{R} \), \( m_{k i} \in \mathbb{R} \setminus \{0\} \) and $d,K\in \mathbb{N}$. Suppose that $\exists (k_0, i_0)$, $s.t. \ m_{k_0 i_0} \neq 1$. Then, for any function \( P(\mathbf{x}) \), there exists a UniSymNet architecture that can represent it exactly using depth \( L_1 \) and  \( N_1 \) nodes. Likewise, it can also be represented by an EQL-type network with depth \( L_2 \) and \( N_2 \) nodes.

For \( d \geq 2 \), it holds that \( L_1 \leq L_2 \) and \( N_1 \leq N_2 \). That means, for any such function \( P(\mathbf{x}) \) with \( d \geq 2 \), the UniSymNet architecture achieves representational efficiency that is better than or equal to that of EQL-type networks in terms of both depth and node count.
\end{theorem}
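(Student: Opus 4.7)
My approach is constructive: exhibit explicit networks of each type, count depths and nodes, then compare.

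For the UniSymNet side I would use the identity $\prod_{i=1}^{d} x_i^{m_{ki}} = \exp\bigl(\sum_{i=1}^{d} m_{ki}\,\ln x_i\bigr)$ to build a two-hidden-layer realization: a first hidden layer of $d$ nodes with activation $\ln$ and an identity-style masked affine so node $i$ outputs $\ln x_i$; a second hidden layer of $K$ nodes with activation $\exp$, whose affine input uses weight row $(m_{k1},\dots,m_{kd})$ and zero bias so that node $k$ outputs $\prod_i x_i^{m_{ki}}$; and an $id$ output layer that returns $\sum_k c_k\,z^{(2)}_k$. This gives $L_1 = 3$ and $N_1 = d + K + 1$.

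For the EQL-type side I would build the natural realization under the EQL constraints: a first hidden layer of $Kd$ unary $\mathrm{pow}$ nodes computing each $x_i^{m_{ki}}$; then, because binary $\times$ takes only two inputs, a balanced multiplicative tree of depth $\lceil\log_2 d\rceil$ collapsing the $d$ factors of each of the $K$ monomials into a single product, contributing $K(d-1)$ more nodes; finally an affine output layer combining monomials with weights $c_k$. This yields $L_2 = \lceil\log_2 d\rceil + 2$ and $N_2 = K(2d-1) + 1$. For $d \geq 2$ we immediately get $L_2 \geq 3 = L_1$, and the node comparison reduces to $N_2 - N_1 = 2K(d-1) - d \geq 0$, which holds for all $d\geq 2$ and $K\geq 1$.

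The hard part is the lower bound on $L_2$ and $N_2$, rather than the construction: I must rule out some cleverer EQL-type arrangement that shares intermediate products across monomials, or that uses $\ln/\exp$ unaries in a way that covertly mimics UniSymNet. The hypothesis $\exists(k_0,i_0)\colon m_{k_0 i_0}\neq 1$ is exactly what makes the bound tight; without it, $P$ collapses to a single monomial $C\prod_i x_i$ realizable in EQL with just $\times$ in depth $\lceil\log_2 d\rceil + 1$, which is $2<3$ when $d=2$, breaking the depth inequality. With the hypothesis in force, I would argue structurally that any EQL-type realization must contain at least one nonlinear node per variable--monomial pair $(k,i)$ with $m_{ki}\neq 1$ (since a $\mathrm{pow}$ activation with a specific exponent cannot be reused across monomials with different exponents without inserting further nonlinearities), and that combining the $d$ factors of any single monomial binarily forces depth at least $\lceil\log_2 d\rceil$. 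Together these two structural facts match the construction bounds and seal both inequalities for $d\geq 2$.
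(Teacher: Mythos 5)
Your construction is essentially the paper's: the same $\ln$/$\exp$ two-hidden-layer UniSymNet obtained from $\Psi$ (giving $L_1=3$, $N_1=d+K+1$) and the same pow-layer-plus-binary-multiplication-tree EQL-type network of depth $\lceil\log_2 d\rceil+2$, differing only in bookkeeping — you count the $Kd$ power nodes explicitly and close the node comparison with the elementary inequality $2K(d-1)\geq d$, whereas the paper drops the pow-layer node count for $K\geq 2$ and instead invokes its Lemma~\ref{lemma: N_1,N_2} together with $\sum_{i=1}^{\lceil\log_2 d\rceil}\lceil d/2^i\rceil\geq d-1$. The ``hard part'' you rightly flag — a genuine lower bound over \emph{all} EQL-type realizations, e.g.\ ones that share sub-products across monomials or smuggle in $\ln$/$\exp$ unaries — is treated no more rigorously in the paper, which simply asserts that a non-unit exponent forces one power layer and that binary $\times$ forces depth $\lceil\log_2 d\rceil$; so your proposal matches the paper's argument and its level of rigor.
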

\textcolor{black}{To further illustrate Theorem \ref{thm:exact-neural-rep}, we provide the  example in Fig. \ref{fig:Comparison between EQL-type and UniSymNet.}.}
\begin{figure}[!ht]
    \centering
    \begin{minipage}[t]{0.4\linewidth}
        \centering
        \resizebox{\linewidth}{!}{
        \begin{tikzpicture}
            \foreach \i/\name in {0/$\epsilon$, 1/$h$, 2/$m$, 3/$v$}
                \node[inputnode] (I\i) at (0, -\i){\name} ;
            \foreach \i/\name in {0/$id$, 1/$(\cdot)^2$, 2/$id$, 3/$(\cdot)^2$}
                \node[hiddennode] (H1\i) at (1.5, -\i) {\name};
            \foreach \i/\name in {0/$\times$, 1/$\times$}
                \node[hiddennode] (H2\i) at (3, -\i-1) {\name};
            \foreach \i/\name in {0/$\div$}
                \node[hiddennode] (H3\i) at (4, -\i-1.5){\name};

            \foreach \i in {0,1,2,3}
                \foreach \j in {0,1,2,3}
                    \draw[signal] (I\i) -- (H1\j);
            \foreach \i in {0,1,2,3}
                \foreach \j in {0,1}
                    \draw[signal] (H1\i) -- (H2\j);
            \foreach \i in {0,1}
                \foreach \j in {0}
                    \draw[signal] (H2\i) -- (H3\j);

            \draw[thick, ->, >=latex,  red] (I0) -- (H10); 
            \draw[thick, ->, >=latex,  red] (I1) -- (H11); 
            \draw[thick, ->, >=latex,  red] (I2) -- (H12); 
            \draw[thick, ->, >=latex,  red] (I3) -- (H13);  
            \draw[thick, ->, >=latex,  red] (H10) -- (H20); 
            \draw[thick, ->, >=latex,  red] (H11) -- (H20); 
            \draw[thick, ->, >=latex,  red] (H12) -- (H21); 
            \draw[thick, ->, >=latex,  red] (H13) -- (H21);  
            \draw[thick, ->, >=latex,  red] (H21) -- (H30); 
            \draw[thick, ->, >=latex,  red] (H20) -- (H30);  
        \end{tikzpicture}
        }
        \subcaption{\textcolor{black}{EQL-type}}
    \end{minipage}
    \hspace{0.05\linewidth}
    \begin{minipage}[t]{0.32\linewidth}
        \centering
        \resizebox{\linewidth}{!}{
        \begin{tikzpicture}
            \foreach \i/\name in {0/$\epsilon$, 1/$h$, 2/$m$, 3/$v$}
                \node[inputnode] (I\i) at (0, -\i){\name} ;
            \foreach \i/\name in {0/$\ln$, 1/$\ln$, 2/$\ln$, 3/$\ln$}
                \node[hiddennode] (H1\i) at (1.5, -\i) {\name};
            \foreach \i/\name in {0/$\exp$}
                \node[hiddennode] (H2\i) at (3, -\i-1.5){\name};

            \foreach \i in {0,1,2,3}
                \foreach \j in {0,1,2,3}
                    \draw[signal] (I\i) -- (H1\j);
            \foreach \i in {0,1,2,3}
                \foreach \j in {0}
                    \draw[signal] (H1\i) -- (H2\j);

            \draw[thick, ->, >=latex,  red] (I0) -- (H10); 
            \draw[thick, ->, >=latex,  red] (I1) -- (H11); 
            \draw[thick, ->, >=latex,  red] (I2) -- (H12); 
            \draw[thick, ->, >=latex,  red] (I3) -- (H13);  
            \draw[thick, ->, >=latex,  red] (H10) -- (H20); 
            \draw[thick, ->, >=latex,  red] (H11) -- (H20); 
            \draw[thick, ->, >=latex,  red] (H12) -- (H20); 
            \draw[thick, ->, >=latex,  red] (H13) -- (H20); 
        \end{tikzpicture}
        }
        \subcaption{\textcolor{black}{UniSymNet}}
    \end{minipage}
    \caption{\textcolor{black}{Comparison between EQL-type and UniSymNet for equation $F=\frac{4\pi\epsilon h^2}{m q^2}$. The equation, where $K=1, d=4, c_1=4\pi, m_{11}=1, m_{12}=2, m_{13}=-1, m_{14}=-2$,  satisfies the condition in Theorem \ref{thm:exact-neural-rep}, i.e., $\exists (k_0, i_0)$ such that $m_{k_0 i_0} \neq 1$.}}
    \label{fig:Comparison between EQL-type and UniSymNet.}
\end{figure}
\textcolor{black}{We observe that $L_1=2, N_1=5$, whereas $L_2=3, N_2=7$. Therefore, $L_1 < L_2, N_1 < N_2$. It means that UniSymNet represents the same equation with shallower depth and fewer nodes.} It is worth noting that functions in the form of $P(\mathbf{x})$ are often used in the PDE Discover\citep{long2019pde} task. Furthermore, we present Theorem \ref{thm:composition-generalization}, which extends Theorem \ref{thm:exact-neural-rep}.
\begin{theorem}
\label{thm:composition-generalization}
Under the hypotheses of Theorem~\ref{thm:exact-neural-rep}, consider replacing each \(x_i\) with \(g_i(\mathbf{x})\), where \(g_i(\mathbf{x})\) is an arbitrary function represented using a UniSymNet (\(L_{1i}\) layers, \(N_{1i}\) nodes) or an EQL-type network (\(L_{2i}\) layers, \(N_{2i}\) nodes). If $\max\{L_{1i}\} \le \max\{L_{2i}\}$ and $\sum_iN_{1i} \le \sum_i N_{2i}$, then for \(P(g_1(\mathbf{x}),\ldots,g_d(\mathbf{x}))\), the conclusion of Theorem \ref{thm:exact-neural-rep} still holds.
\end{theorem}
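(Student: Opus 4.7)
My plan is to construct the composed network for $P(g_1(\mathbf{x}),\ldots,g_d(\mathbf{x}))$ modularly. I place the $d$ sub-networks representing the $g_i$ in parallel as a bottom block, equalize their depths to $\max_i L_{1i}$ (resp.\ $\max_i L_{2i}$ for the EQL-type side) by inserting $id$-layers at the top of each shorter branch, and feed the $d$ resulting outputs into the input layer of the UniSymNet (resp.\ EQL-type network) for the outer polynomial $P$ supplied by Theorem~\ref{thm:exact-neural-rep}. Identity padding is legal on both sides because $id \in \mathcal{O}_u$. The only architectural subtlety is that the first-layer weight mask $M^{(1)}_w$ of the $P$-block, described in Section~\ref{sec: Structural Space Formalization}, must be chosen to connect the concatenated $g_i$-outputs to the correct nodes of the $P$-block, which is a routine bookkeeping step.

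Granting this construction, the depth of the composed UniSymNet is $L^{\mathrm{U}} = \max_i L_{1i} + L_1$, while the composed EQL-type network has depth $L^{\mathrm{E}} = \max_i L_{2i} + L_2$. The depth inequality is then immediate: the hypothesis $\max_i L_{1i} \le \max_i L_{2i}$ and the bound $L_1 \le L_2$ supplied by Theorem~\ref{thm:exact-neural-rep} combine to give $L^{\mathrm{U}} \le L^{\mathrm{E}}$. For the node count, the totals decompose as
\[
N^{\mathrm{U}} = \sum_{i=1}^d N_{1i} + N_1 + P^{\mathrm{U}}, \qquad N^{\mathrm{E}} = \sum_{i=1}^d N_{2i} + N_2 + P^{\mathrm{E}},
\]
where $P^{\mathrm{U}}$ and $P^{\mathrm{E}}$ count the $id$-padding nodes used to align the branches. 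The hypothesis $\sum_i N_{1i} \le \sum_i N_{2i}$ together with $N_1 \le N_2$ from Theorem~\ref{thm:exact-neural-rep} takes care of the main terms, reducing the claim to controlling $P^{\mathrm{U}}$ against $P^{\mathrm{E}}$.

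The main obstacle is precisely this padding comparison: if the UniSymNet sub-networks exhibit a larger spread between $L_{1i}$ and $\max_j L_{1j}$ than their EQL-type counterparts, the UniSym side could a priori incur strictly more alignment padding, even though every individual count is smaller. I plan to neutralize this in one of two ways. The first is to absorb the padding into the sub-network node counts by treating each $N_{\cdot i}$ as the size of a self-contained module already padded to a common depth, which is the convention implicit in the hypothesis. The second, more quantitative route is to observe that each branch contributes at most $\bigl(\max_j L_{\cdot j} - L_{\cdot i}\bigr)\cdot w_i$ identity nodes, where $w_i$ is the output width of $g_i$; since Theorem~\ref{thm:exact-neural-rep} applied branch-wise gives monotonicity of both the depth gaps and the output widths in the UniSym direction, the inequality $P^{\mathrm{U}} \le P^{\mathrm{E}}$ follows. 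Either route closes the node bound and, combined with the depth inequality, establishes Theorem~\ref{thm:composition-generalization}.
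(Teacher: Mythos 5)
Your construction is essentially the paper's own proof: the paper composes the networks modularly, takes total depth to be $\max_i L_{\cdot i}$ plus the depth of the $P$-block from Theorem~\ref{thm:exact-neural-rep}, takes total node count to be $\sum_i N_{\cdot i}$ plus the $P$-block's node count, and concludes by adding the two pairs of inequalities. The padding issue you raise is simply not addressed in the paper — it silently adopts your first resolution, i.e.\ each $N_{\cdot i}$ is understood as the size of a self-contained module so that no extra alignment nodes are counted; that is the convention you should adopt, because your second route does not actually follow from the hypotheses (they bound only $\max_i L_{1i}$ against $\max_i L_{2i}$ and $\sum_i N_{1i}$ against $\sum_i N_{2i}$, and give no control over the per-branch depth gaps $\max_j L_{\cdot j} - L_{\cdot i}$ or output widths that your bound $P^{\mathrm{U}} \le P^{\mathrm{E}}$ would require).
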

Theorem \ref{thm:exact-neural-rep} and \ref{thm:composition-generalization} show that, for high-dimensional tasks with the functional form $P(\cdot)$, UniSymNet should be preferentially selected due to its superior representational efficiency in both architectural depth and node complexity.

To establish a direct connection between the structure \( S \) and its functional skeleton $f_s$, we introduce the systematic mapping \(\Phi(S):S\xrightarrow{}f_s\) formalized by the Equation \ref{eq: mapping}:  
\begin{equation}
\label{eq: mapping}
\begin{aligned}
    z^{(0)} &= [x_0,x_1\cdots,x_{d_0}]^\top,  \\
    y^{(l)} &= \left( C_w^{(l)} \odot M_w^{(l)} \right) z^{(l-1)} + \left( C_b^{(l)} \odot M_b^{(l)} \right), \ l = 1, \cdots, L, \\
    z^{(l)} &= \left[ op_1^{(l)}(y_1^{(l)}), \dots, op_{d_l}^{(l)}(y_{d_l}^{(l)}) \right]^\top, \\
    f_s &= \Psi^{-1}(z^{(L)}). \\
\end{aligned}
\end{equation}  
where \( C_w^{(l)} \) and \( C_b^{(l)} \) denote symbolic matrices, whose shapes align with the matrices \( M_w^{(l)} \) and \( M_b^{(l)} \). \textcolor{black}{During computation in the mapping, constants serving equivalent roles merged to simplify expressions.} For example, \( c_0(c_1 x_0 + c_2) \) will be transformed into \( c_0 x_0 + c_1 \), where redundant constants are absorbed.  

\subsection{Structure-to-Sequence Encoding}
\label{sec: Structure-to-Sequence Encoding}

In Section~\ref{sec: Structural Space Formalization}, we define the structure of UniSymNet as \( S = (\mathcal{A}, \mathcal{M}) \). How can we learn the \( S \) corresponding to equations from data? Inspired by methods in the NLP field, we encode \( S \) into a sequence compatible with transformer-based frameworks. However, different \( \mathcal{A} \) results in \( \mathcal{M} \) with varying dimensions, which makes it difficult to construct a unified embedding of \( \mathcal{A} \) and \( \mathcal{M} \). To address this issue, we partially fix \(\mathcal{A}\) and \textcolor{black}{sample} sub-networks via sparsified \(\mathcal{M}\), masking redundant components. Specifically, we adopt the following strategies:  
\begin{enumerate}[label=(\roman*)]
     \item Each layer \textcolor{black}{is assigned} a sequence of unary operators in the set \(\mathcal{O}_u\). Specifically, each operator in \(\mathcal{O}_u\) \textcolor{black}{repeats} \( m \) times per layer, resulting in \( 5m \) units per layer. And the operators are arranged in the fixed order of $\mathcal{O}_u$. For example, with \( m = 2 \), the operators are in order: \( 2 \times \text{id} \), \( 2 \times \sin \), \( 2 \times \cos \), \( 2 \times \exp \), and \( 2 \times \ln \).  
    \item  The width (i.e., number of nodes per layer) is \textcolor{black}{fixed} uniformly to \( 5m \) for all layers. The depth $L$ is limited to a maximum value \( L_{\text{max}} \), allowing flexibility in depth while ensuring computational tractability.  
    \item The sparse mask matrix set \(\mathcal{M}\), governed by the forward propagation rules in Equation~\ref{eq: mapping}, prunes redundant connections while preserving the predefined operator sequence and width.  
\end{enumerate}

Therefore, the labels consist of three components: the depth $L$, mask matrices $\{M_w^{(l)}\}_{l=1}^{L}$ and $\{M_b^{(l)}\}_{l=1}^{L}$. The depth is designated as the first component of the label, followed by a separator \( 0 \) as the second component. Subsequently, all matrices in  $\{M_w^{(l)}\}_{l=1}^{L}$ and $\{M_b^{(l)}\}_{l=1}^{L}$ are flattened and concatenated to form a unified vector. The label $\mathcal{L}$ corresponding to the structure ${S}$ can be expressed as:
$$
\begin{aligned}
E = \big[\, &L,\ 0,\ M_w^{(1)}(1,1),\ M_w^{(1)}(1,2),\ \cdots,\ M_w^{(1)}(5m,d),\ \cdots, \\
           &M_w^{(l)}(i,j),\ \cdots,\ M_w^{(L)}(1,1),\ \cdots,\ M_w^{(L)}(1,5m), \\
           &M_b^{(1)}(1),\ \cdots,\ M_b^{(1)}(5m),\ \cdots,\ M_b^{(l)}(i),\ \cdots,\ M_b^{(L)}(1) \,\big].
\end{aligned}
$$
where \( M_w^{(l)}(i,j) \) denotes the element at position \((i,j)\) in the mask matrix of the \(l\)-th layer. It is important to note that, since the width of the \(L\)-th layer is 1, both \( M_w^{(L)} \) and \( M_b^{(L)} \) have only one row. To effectively shorten the length of the label while retaining the essential structural information of UniSymNet, we implement a transformation $\mathcal{T}(E):E\rightarrow \mathcal{E}$ defined by the following operation:
\begin{equation}
\label{eq:label_trans}
\begin{split}
\mathcal{I} &= \left\{\, i \,\middle|\, i \geq 3,\ E_i = 1 \,\right\} = \{ i_1, i_2, \dots, i_{|\mathcal{I}|} \}, \\
\mathcal{E} &= \left[ L,\ 0,\ i_1 - 2,\ i_2 - 2,\ \dots,\ i_{|\mathcal{I}|} - 2 \right].
\end{split}
\end{equation}
where $\mathcal{I}$ represents the indices of all elements starting from the third position (\(i \geq 3\)) where the value equals \(1\), \(i - 2\) applies an index offset adjustment to each qualifying index. \textcolor{black}{Thus, the structure \( S \) is transformed into a sequence \( \mathcal{E} \).}

\subsection{\textcolor{black}{UniSymNet Structure Learning}}
\label{sec: UniSymNet Structure Learning}
To implement the outer optimization in Equation \ref{eq: bi-level optimization problem}, we use a parameterized set-to-sequence model \( M_\theta \) that maps a collection of input-output pairs \( \mathcal{D} = \{(x_i,y_i)\}_{i=1}^N \) to the symbolic sequence \( \mathcal{E} \) defining the structure $S$ of UniSymNet. The model architecture adopts an encoder-decoder framework: the encoder projects the dataset \( \mathcal{D} \) into a latent space through permutation-invariant operations, producing a fixed-dimensional representation \( z \) that encapsulates the input-output relational features; subsequently, the decoder autoregressively generates the target sequence \( \mathcal{E} \) by computing conditional probability distributions \( \mathcal{P}(\mathcal{E}_{k+1} | \mathcal{E}_{1:k}, z) \) at each step, where the next token prediction \textcolor{black}{depends} on both the latent code \( z \) and the preceding token sequence \( \mathcal{E}_{1:k} \). We first pre-train the model on large synthetic datasets of varied math expressions to improve generalization, then adapt it to specific SR tasks. The \textcolor{black}{\textbf{UniSymNet Structure Learning} part of} Fig. \ref{fig: framework} describes the process of pre-training \(M_\theta \), which mainly includes data generation and training.

\subsubsection{Data Generation}
Training data consists of input data $\mathcal{D}$ and label sequence $\mathcal{E}$. Data generation follows three main steps.

\noindent\textbf{Function Sampling:} We follow the method proposed by \citet{lampledeep} to sample functions \( f \) and generate random trees with mathematical operators as internal nodes and variables or constants as leaves. The process is as follows:
\begin{enumerate}[label=(\roman*)]
    \item Constructing the Binary Tree:  The input dimension \( d \) is drawn from a uniform distribution \( \mathcal{U}\{1, d_{\text{max}}\} \), and the number of binary operators \( b_{\text{op}} \) is sampled from \( \mathcal{U}\{d-1, b_{\text{max}}\} \). The \( b_{\text{op}} \) binary operators are then selected from the operator library \textcolor{black}{\( \mathcal{O}_b'=\{+,-,\times,\div\} \)}.
    \item Inserting Unary Operators:  The number of unary operators \( u_{\text{op}} \) is sampled from \( \mathcal{U}\{0, u_{\text{max}}\} \), and \( u_{\text{op}} \) unary operators are chosen randomly from \textcolor{black}{\( \mathcal{O}_u'=\{\ln,\exp,\sin,\cos,\text{sqrt},\text{inv},\text{abs},\text{square}\} \)} and inserted into the tree.
    \item Applying Affine Transformations:  Affine transformations \textcolor{black}{apply to} selected variables \( \{x_j\} \) and unary operators \( \{u_j\} \). Each variable \( x_j \) is transformed to \( w x_j + b \), and each unary operator \( u_j \) becomes \( w u_j + b \), where the parameters \( (w, b) \) are sampled from the distribution \( \mathcal{D}_{\text{aff}} \). This step simulates the linear transformations in UniSymNet.
\end{enumerate}
Then, we get the function \( f \). The first \(d\) variables are sampled in ascending order to ensure the correct input dimensionality. This method avoids the appearance of functions with missing variables, such as \( x_1 + x_3 \). 

\noindent\textbf{Input Sampling:} We independently sample \(\{x_i\}_{i=1}^{N} \subset \mathbb{R}^d\) in a \(d\)-dimensional space, \textcolor{black}{where the \(j\)-th component \(x_{i,j}\) of each sample is drawn from a uniform distribution \( \mathcal{U}(S_{1,j}, S_{2,j}) \). } \textcolor{black}{The interval boundaries \(S_{1,j}\) and \(S_{2,j}\) are constrained to \((-10, 10)\) for all \(j\), ensuring that sampling remains within a predefined range.}
The corresponding output values \(\{y_i\}_{i=1}^{N}\) are \textcolor{black}{then} computed by \(y_i = f(x_i)\). During this process, each \(x_i\) is padded with zeros to match a fixed feature dimension \(d_{\text{max}}\). \textcolor{black}{Any sample \((x_i, y_i)\) whose output \(y_i\) contains NaN values is replaced by a zero pair \((0, 0)\).} \textcolor{black}{Finally, to mitigate numerical instability arising from large variations in \(y\), the dataset \(\{(x_i, y_i)\}_{i=1}^{N}\) is transformed from floating-point values into a multi-hot bit representation following the IEEE-754 standard.}

\noindent\textbf{Label Encoding:} For a random function \( f \), the initial step is to identify the structure $S$ of the UniSymNet associated with the functional skeleton $f_s$, detailed in Appendix \ref{appendix: Algorithms for Label Encoding}. Then, we convert $S$ to the label sequence $\mathcal{E}$ as described in Section \ref{sec: Structure-to-Sequence Encoding}. As a final step, \textcolor{black}{all label sequences are padded with zeros to ensure a consistent sequence length across the dataset.} We still face the challenge that the same function can be represented in different forms, resulting in different labels for the same function, such as \(x_0 + \sin(2x_1)\) and \(x_0 + 2\sin(x_1)\cos(x_1)\). Therefore, we merge different labels of equivalent expressions by assigning the label of the shorter sequence. 

\textcolor{black}{It is important to note that we generate training data by first sampling symbolic trees and then converting them into the corresponding UniSymNet structures $S$, rather than directly sampling structures from UniSymNet. This choice is motivated by three factors: sampling symbolic trees makes it easier to ensure the validity of equations, our structure identification algorithm (given in Appendix \ref{appendix: Algorithms for Label Encoding}) guarantees that the resulting $S$ remains compact, and the overall procedure runs more than an order of magnitude faster than direct sampling from UniSymNet. 
Moreover, the tree-first strategy also facilitates comparisons with symbolic tree encoding methods.
}

\subsubsection{Training}
After generating the training dataset, we begin training the Transformer model \( M_{\theta} \). \textcolor{black}{The detailed description of the complete training procedure is provided in Algorithm \ref{alg:pre-training}.}
\begin{algorithm}[h]
\caption{Transformer Model Training}
\label{alg:pre-training}
    \begin{algorithmic}[1]
        \STATE {\bfseries Input:} Transformer model $M_\theta$, batch size $B$, training distribution $\mathcal{P}_{\mathcal{E},D}$, learning rate $\eta$, maximum iterations $K$, similarity tolerance $\epsilon$
        \STATE Initialize $\mathcal{L}_{\text{prev}} \leftarrow \infty$
        \FOR{$k = 1$ {\bfseries to} $K$}
            \STATE $\mathcal{L}_{\text{train}} \leftarrow 0$
            \FOR{$i = 1$ to $B$}
                \STATE $e, (X,Y) \leftarrow$ sample from $\mathcal{P}_{\mathcal{E}, (X,Y)}$
                \STATE $\mathcal{L}_{\text{train}} \leftarrow \mathcal{L}_{\text{train}} - \sum_{k} \log P_{M_{\theta}}(e_{k+1}|e_{1:k}, D)$
            \ENDFOR
            \STATE Update $\theta$ using $\theta \leftarrow \theta - \eta \nabla_{\theta}\mathcal{L}_{\text{train}}$
            \STATE Compute $\mathcal{L}_{\text{val}}$ on the validation set
            \IF{$0 < \mathcal{L}_{\text{prev}} - \mathcal{L}_{\text{val}} < \epsilon$}
                \STATE \textbf{break}
            \ENDIF
            \STATE Update $\mathcal{L}_{\text{prev}} \leftarrow \mathcal{L}_{\text{val}}$
        \ENDFOR
    \end{algorithmic}
\end{algorithm}

Once the model $M_{\theta}$ has been trained, given a test dataset \( D \), we perform beam search \citep{goodfellow2016deep} to obtain a set of candidate sequences \( \mathcal{E} \). 

\subsection{Parameters Optimization Strategies}
\label{sec: Parameters Optimization Strategies}
With the candidate sequence $\mathcal{E}$, we \textcolor{black}{adopt} objective-specific optimization strategies: Symbolic Function Optimization and Differentiable Network Optimization. As for how to select based on the objective, we will provide a detailed discussion in Section \ref{sec: Ablation Study on Parameter Optimization}.

\subsubsection{Symbolic Function Optimization}
\label{sec: Symbolic Function Optimization}
For each candidate sequence $\mathcal{E}$, we transform the structure \( S \) of UniSymNet into functional skeletons \( f_s \) according to Equation~\ref{eq: mapping}. It's worth noting that UniSymNet merges nested $\exp$ and $\ln$ operators into a nonlinear binary operator during skeleton transformation, which helps avoid numerical instability. 

\textcolor{black}{In natural systems, exponents are usually integers or simple rational numbers. To enhance the credibility and interpretability of the discovered equations, we employ a hybrid strategy in which each exponent can either be optimized directly or restricted to a value from a predefined candidate set. Formally, we define the candidate set of replaceable values as $\mathcal{V} = \{1, -1, 0.5, -0.5, 2, -2, 3, -3, 4, -4, 5, *\}$, where the discrete numbers represent fixed exponents and $*$ serves as a special option indicating that the exponent remains continuous and is optimized directly. This hybrid design strikes a balance between flexibility and stability by combining continuous optimization with discrete choices, leading to equations that are both interpretable and robust.} 

However, \textcolor{black}{adding both discrete and continuous choices makes the search space much larger. If we directly enumerate all possible replacements from $\mathcal{V}$, the search space grows exponentially, and the computation becomes very costly.} To address this, we maintain a learnable probability distribution \(P_{\eta}(v_i|\mathcal{V})\) for each exponent position \(v_i\) in the equation, representing the priority of sampling different exponent values. 

To enable differentiable discrete sampling, we adopt the Gumbel-Softmax trick:
\begin{equation}
v_i = \text{Gumbel-Softmax}\left( \frac{\log P_\eta(v_i) + g_i}{\tau} \right), 
\label{gumble-softmax}
\end{equation}
where \(g_i \sim \text{Gumbel}(0,1)\) is a noise term used to perturb the logits, and \(\tau(t) = \tau_0 \exp(-kt/T)\) is a temperature parameter that controls the degree of approximation to one-hot sampling. 
The key challenge is how to guide the sampling of exponents and optimize the parameters effectively. To this end, we adopt the risk-seeking policy gradient method proposed by \citet{petersen2020deep}. 

In this method, multiple rollouts are generated at each iteration, and each rollout represents a symbolic skeleton $f_s'(x_i;c^*)$. In $f'_{s}$, those $v$ are sampled from $P_\eta(\cdot|\mathcal{V})$, and the other constants $c^*$ are calculated by the BFGS\citep{nocedal1999numerical} method to minimize the mean squared error:
\[
  c^* = \arg\min \sum_{i=1}^N (f_s(x_i; c) - y_i)^2
\]
Each rollout has its corresponding reward return, which is defined as follows:
\begin{gather}
    R = {1}/({1 + \mathcal{L}_{MSE}^*}) \\
    \mathcal{L}_{MSE}^* = \sum_{i=1}^N (f_s(x_i; c^*) - y_i)^2
\end{gather}

To focus the optimization on promising candidates, we calculate the \(1 - \varepsilon\) quantile of the reward distribution, denoted by \(R_\varepsilon(\theta)\), and we compute the policy gradient only for samples with rewards above this threshold:
\[
\nabla_\theta J_{\text{risk}} \propto \mathbb{E}_{R \geq R_\varepsilon(\theta)} \left[ R \nabla_\theta \log P_\theta(f_s) \right] + \lambda_{\mathcal{H}} \nabla_\theta \mathcal{H}(P_\theta),
\]
where \(\mathcal{H}(P_\theta)\) denotes the entropy of the policy distribution, and the coefficient \(\lambda_{\mathcal{H}}\) controls the level of exploration. This entropy term prevents premature convergence to suboptimal solutions.

\textcolor{black}{Regarding this method, it may be sensitive to non-convexity and scaling issues. To address this, we compared different initialization strategies (random, grid search, Latin hypercube sampling, and metaheuristics) and loss functions (MSE, Huber, and Quantile). 
The detailed settings and final results of different initialization strategies and loss functions are provided in the Appendix \ref{appendix: Initialization, Loss Functions}. We ultimately adopted Latin hypercube sampling for initialization and MSE as the default loss, using Quantile loss only for the Nguyen dataset.}

\textcolor{black}{In addition, the temperature schedule in Equation \ref{gumble-softmax} balances exploration and exploitation, while the entropy coefficient regulates the degree of stochasticity during optimization. Both factors directly affect symbolic discovery and fitting accuracy, making it necessary to analyze their influence on the symbolic solution rate and the $R^2$ score. To this end, we compared different temperature schedules (constant, linear decay, and exponential decay) and entropy coefficients (0.005, 0.01, and 0.05). The detailed results are provided in the Appendix \ref{appendix: Sensitivity Analysis in SFO method}. Based on these analyses, we ultimately adopted the exponential decay temperature schedule with an entropy coefficient of 0.005.}

\subsubsection{Differentiable Network Optimization}
\label{sec: Differentiable Network Optimization}

For each candidate sequence \(\mathcal{E}\), we generate a corresponding \textcolor{black}{UniSymNet structure \({S}\)}. 
\textcolor{black}{Given UniSymNet structure $S=(\mathcal{A},\mathcal{M})$, two processes follow:}
\begin{enumerate}[label=(\roman*)]
    \color{black}
    \item \textcolor{black}{\textbf{Mandatory:} $\mathcal{M}$  simplifies the network architecture by eliminating unused neural nodes, thereby reducing the unified architecture $\mathcal{A}$ (introduced in Section \ref{sec: Structure-to-Sequence Encoding}) to a lightweight variant $\mathcal{A}'$.} 
    \item \textcolor{black}{\textbf{Optional:} After the mandatory process, the original mask matrices \(\mathcal{M}\) are reduced and updated to \(\mathcal{M}'\). During subsequent differentiable network training, these masks \(\mathcal{M}'\) can be applied in forward propagation to sparsify inter-node connectivity. If applied, they create a sparse architecture; otherwise, the network remains fully connected.} 
\end{enumerate}
\textcolor{black}{As illustrated in Fig. \ref{fig: The process of simplifying UniSymNet's structure.} for the case \(c_0x_0^{c_1}x_1^{c_2}\), these two processes simplifies the network topology while preserving functional equivalence. }
\begin{figure}[!ht]
    \centering
    \includegraphics[width=\linewidth]{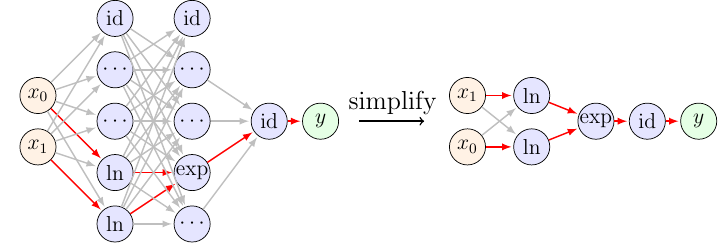}
    \caption{\textcolor{black}{The two processes of simplifying UniSymNet's structure.}}
    \label{fig: The process of simplifying UniSymNet's structure.}
\end{figure}

\textcolor{black}{Once the mandatory lightweight architecture $\mathcal{A}'$ is obtained, the UniSymNet can be trained with a stabilized loss function combining prediction error and activation regularization:}  
\[
\mathcal{L} = \underbrace{\bigl |\min\left(0, x-\theta_{\ln}\right)\bigr | + \max \bigl (0, |x|-\theta_{\exp}\bigr )}_{\text{Activation Regularization } \mathcal{L}_{\text{penalty}}} + \frac{1}{N}\sum_{i=1}^N (\hat{y}_i - y_i)^2,
\]  
where \(\theta_{\ln}\) and \(\theta_{\exp}\) denote the thresholds for the two functions. For the \(\ln\) function, we enforce inputs to lie within the positive domain, while for the \(\exp\) function, inputs are constrained to remain as small as possible to prevent exponential explosion. Thus, the regularized activation functions are defined as:  
\[
\ln_{\text{reg}}(x) = 
\begin{cases} 
0 & x < \theta_{\ln}, \\ 
\ln(|x|+\epsilon) & \text{otherwise},
\end{cases} 
\]
\[
\exp_{\text{reg}}(x) = 
\begin{cases} 
\exp({\theta_{\exp}}) & x \geq \theta_{\exp}, \\ 
\exp(x) & \text{otherwise}.
\end{cases}
\]  
And the \(\theta_{\ln}\) and \(\theta_{\exp}\) are activation thresholds that suppress unstable regions of logarithmic and exponential operators respectively, with \(\epsilon > 0\) ensuring numerical safety. 

\textcolor{black}{As for the optional pruning, the masks $\mathcal{M}'$ provide a concrete pruning strategy, which is implemented during network training as described in Algorithm \ref{alg:pruning strategy}. In this algorithm, the application of pruning is controlled by the \emph{Prune\_flag}.} Since pruning may influence gradient backpropagation and potentially affect model accuracy, its use during training is kept optional. \textcolor{black}{For brevity, we refer to the method with pruning as DNO-P and the method without pruning as DNO-NP. The criteria for selecting whether pruning is applied are discussed in detail in Section \ref{sec: Ablation Study on pruning strategy}.}

{\color{black}
\begin{algorithm}[!ht]
\color{black}
\caption{\textcolor{black}{Differentiable Network Training}}
\label{alg:pruning strategy}
\begin{algorithmic}
       \STATE {\bfseries Input:} network architecture $\mathcal{A}'$, batch size $B$, training data $(X,Y)$, mask matrices $\mathcal{M}'=\{\mathcal{M}'^{(l)}\}_l=\{(M_w'^{(l)},M_b'^{(l)})\}_l$, learning rate $\eta$, max iterations $K$, pruning control flag \emph{Prune\_flag}
        \FOR{$k = 1$ to $K$}
            \STATE Sample $(X^{(0)}, Y^{(0)})$ from $(X,Y)$
            \STATE $\mathcal{L}_{\text{train}} \gets 0$
            \FOR{$l = 1$ to $L$}
                \IF{\emph{Prune\_flag}} 
                    \STATE $Z^{(l)} \gets (W^{(l)}\odot M_w'^{(l)})X^{(l-1)}+b^{(l)} \odot M_b'^{(l)}$
                \ELSE
                    \STATE $Z^{(l)} \gets W^{(l)}X^{(l-1)}+b^{(l)}$
                \ENDIF
                \STATE $X^{(l)} \gets op^{(l)}(Z^{(l)})$
            \ENDFOR
            \STATE $\hat{Y} \gets X^{(L)}$
            \STATE $\mathcal{L}_{\text{train}} \gets \frac{1}{B}( \sum_{j} (\hat{Y}_j - Y^{(0)}_j)^2 +\mathcal{L}_{penalty}$
            \FOR{$l = 1$ to $L$}
                \IF{\emph{Prune\_flag}} 
                    \STATE $\theta_l \gets \theta_l - \eta (\nabla{\theta_l} \mathcal{L}_{\text{train}} \odot \mathcal{M}'^{(l)})$
                \ELSE
                    \STATE $\theta_l \gets \theta_l - \eta \nabla{\theta_l} \mathcal{L}_{\text{train}}$
                \ENDIF
            \ENDFOR
        \ENDFOR
    \end{algorithmic}
\end{algorithm}
}

\section{Experiments}
\label{sec: experiments}
\subsection{Metrics}
\label{sec: Metrics}
\textcolor{black}{In scientific discovery, a simplified and interpretable form of  \( f \) is essential for revealing the governing principles encoded in the data.} Thus, the goal of symbolic regression is not only to achieve high fitting accuracy but also to ensure low complexity and high symbolic solution rate\citep{la2021contemporary}. 

We evaluate \textbf{fitting performance} using $R^2$, a higher \( R^2 \) indicates a higher fitting accuracy of the model, defined as follows: 
\begin{definition}
For \( N_{\text{test}} \) independent test samples, the coefficient of determination \( R^2 \)   is defined as:
\[
R^2 = 1 - \frac{\sum_{i=1}^{N_{\text{test}}} (y_i - \hat{y}_i)^2}{\sum_{i=1}^{N_{\text{test}}} (y_i - \bar{y})^2},
\]
where \( N_{\text{test}} \) is the number of test samples, \( y_i \) is the true value of the \( i \)-th sample, \( \hat{y}_i \) is the predicted value of the \( i \)-th sample, and \( \bar{y} \) is the mean of the true values. 
\end{definition}

We access the \textbf{complexity} using the Definition \ref{def: complexity} and calculate the complexity of the models after simplifying via sympy\citep{meurer2017sympy}. 
\begin{definition}
    \label{def: complexity}
    A model's complexity is defined as the number of mathematical operators, features, and constants in the model, where the mathematical operators are drawn from a set \(\{+, -, \times, \div, \sin, \cos, \exp, \ln, \mathrm{pow}, \text{abs}\}\). 
\end{definition}

For the ground-truth problems, we use the \textbf{Symbolic Solution} defined as Definition \ref{def: Symbolic Solution} to capture models that differ from the true model by a constant or scalar.

\begin{definition}
    \label{def: Symbolic Solution}
    A model $\hat{\phi}(x,\hat{\theta})$ is a Symbolic Solution to a problem with ground-truth model $y = \phi^{*}(x,\theta^{*}) + \epsilon$, if $\hat{\phi}$ does not reduce to a constant, and if either of the following conditions are true: 1) $\phi^{*} - \hat{\phi} = a$; or 2) $\phi^{*} / \hat{\phi} = b, b \neq 0$, for some constants $a$ and $b$.
\end{definition}

The extrapolation ability reflects whether a model can effectively generalize to unseen domains in practice. It is well \textcolor{black}{established} that conventional neural networks, such as black-box models (e.g., MLPs), are often limited in their \textbf{extrapolation ability}. We are interested in evaluating the extrapolation ability of symbolic networks. We \textcolor{black}{access} their extrapolation ability using the following criterion.

\begin{definition}
\label{def: extrapolation ability}
Let \(\mathcal{D}_{\text{train}} = \{(X, y)\}\) be a training set with inputs \(X\) sampled from an interval \([a, b] \subset \mathcal{X}\), where \(\mathcal{X}\) denotes the domain of \(X\). The corresponding test set is constructed with inputs sampled from the extended interval \([a - c, b + c] \cap \mathcal{X}\). The extrapolation ability is evaluated using the \(R^2\) score computed on this test set.
\end{definition}

\subsection{Datasets}
\label{sec: Datasets}
To comprehensively evaluate our model and compare its performance with the baselines, we conduct tests on various publicly available datasets, with the detailed information provided in Appendix \ref{appendix: Description of the Testing Data}. We categorize the datasets into two groups based on their dimension. 

\textbf{Standard Benchmarks:} The feature dimension of the datasets in the Standard Benchmarks satisfies $d\leq 2$. The Standard Benchmarks consist of eight widely used public datasets: Koza\citep{koza1994genetic}, Keijzer\citep{keijzer2003improving}, Nguyen\citep{uy2011semantically}, Nguyen* (Nguyen with constants), R rationals\citep{krawiec2013approximating}, Jin\citep{jin2019bayesian}, Livermore\citep{mundhenk2021symbolic}, Constant.

\textbf{SRBench:} The feature dimension of most datasets in SRBench\citep{la2021contemporary} satisfies \( d \geq 2 \). SRBench used black-box regression problems available in PMLB\citep{olson2017pmlb} and extended PMLB by including 130 datasets with ground-truth model forms. These 130 datasets are sourced from the Feynman Symbolic Regression Database and the ODE-Strogatz repository, derived from first-principles models of physical systems.

\subsection{Baselines}
In our experiments, we evaluate the performance of our method against several baselines, which \textcolor{black}{are grouped into} two main types: In-domain methods and Out-of-domain methods. The comprehensive overview of baselines \textcolor{black}{can be found} in Appendix \ref{Appendix: Comprehensive Overview of Baselines}.

We compare the performance of our method against In-domain SR methods, including the \emph{transformer-based methods}, NeSymRes \citep{biggio2021neural}, End-to-End \citep{kamienny2022end} \textcolor{black}{, and} TPSR \citep{shojaee2023transformer}, as well as the \emph{symbolic network–based method} EQL \citep{lalearning}, \emph{neural guided method} DySymNet \citep{li2024neural}. The Out-of-domain methods refer to the 14 SR methods mentioned in SRBench\citep{la2021contemporary}, with the 14 SR methods primarily consisting of GP-based and DL-based methods.

The computing infrastructure and hyperparameter settings \textcolor{black}{used in the} experiments are \textcolor{black}{provided} in Appendix \ref{appendix: Computing Infrastructure} and Appendix \ref{appendix: Hyperparameter Settings}.

\section{Results}
\label{sec: results}
\subsection{Ablation Studies} 
\label{sec: Ablation Studies}
We performed a series of ablation studies on 200 randomly generated equations, which were not part of the training set, focusing on three key aspects: encoding methods, pruning strategy, and parameter optimization.

\subsubsection{Ablation Study on Encoding methods}
\label{sec: Ablation Study on Encoding methods}
We evaluated the influence of two encoding methods on model performance: \emph{Symbolic Tree method} encoding symbolic trees via prefix traversal\citep{biggio2021neural}, and \emph{UniSymNet Structure method} encoding the UniSymNet structure S as a label in Section \ref{sec: Structure-to-Sequence Encoding}. To eliminate the influence of the parameter optimization method, both methods use standard BFGS optimization without reinforcement learning to guide the optimization direction.
\begin{figure}[!h]
    \centering
    \includegraphics[width=\linewidth]{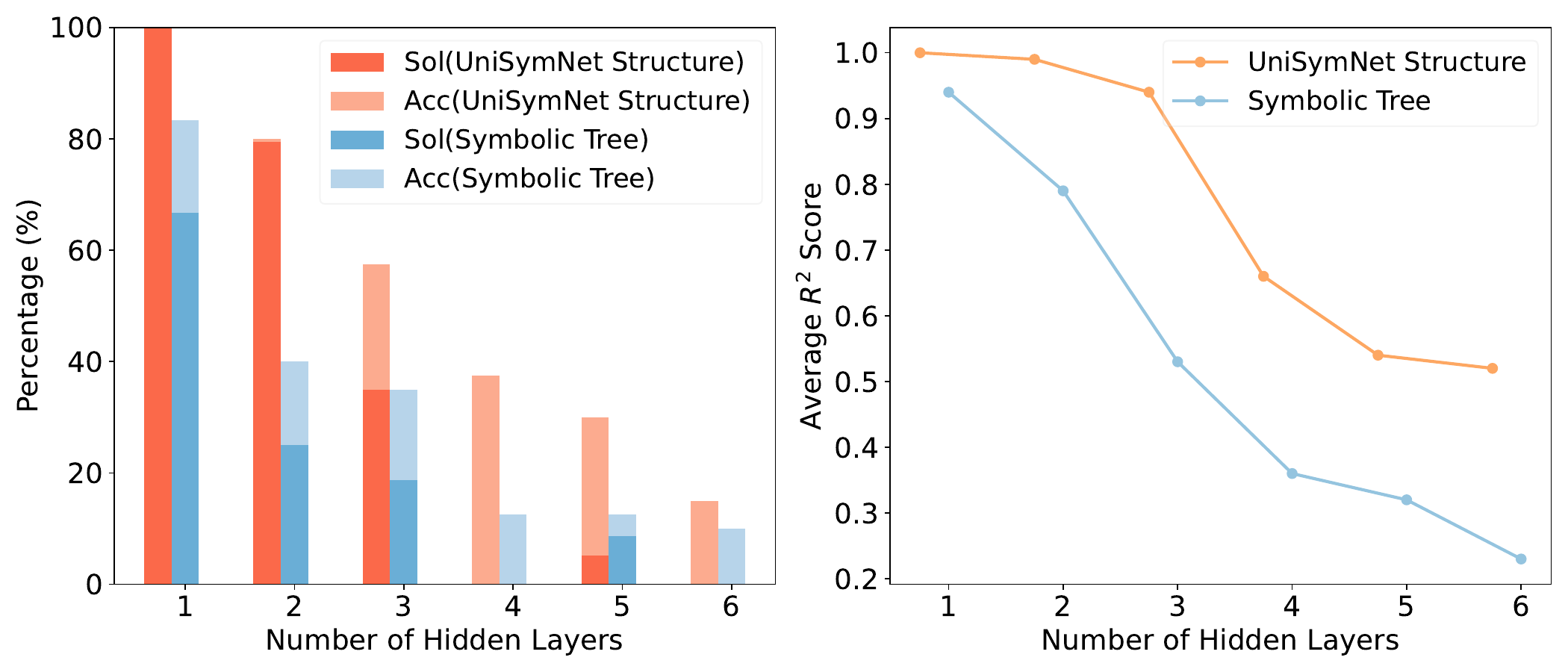}
    \caption{\textcolor{black}{Performance comparison of two encoding methods as the number of hidden layers in the equation varies.\textcolor{black}{The bar chart on the left is defined as subfigure (a), while the line chart on the right is defined as subfigure (b).}}}
    \label{fig: Ablation(bfgs).pdf}
\end{figure}

In Fig. \ref{fig: Ablation(bfgs).pdf}, the x-axis represents the number of hidden layers $L$ in the UniSymNet corresponding to the equations in the test set. A higher $L$ value indicates a more complex equation. The subfigure (a) shows that our encoding method performs better than others in both metrics: the accuracy solution rate at \(R^2>0.99\) \textcolor{black}{(Acc)} and the symbolic solution rate \textcolor{black}{(Sol)}. The subfigure (b) shows \textcolor{black}{the performance of the two methods} on the average \(R^2\) score. Both subfigures reveal that \textcolor{black}{the performance of both models} declines as \(L\) increases, due to the decreasing accuracy of the Transformer’s predictions as the target equations become more complex. However, \emph{UniSymNet Structure method} consistently maintains better performance than \textcolor{black}{other} method, demonstrating the superiority of \emph{UniSymNet Structure method} encoding.

\begin{figure}[!h]
    \centering
    \includegraphics[width=\linewidth]{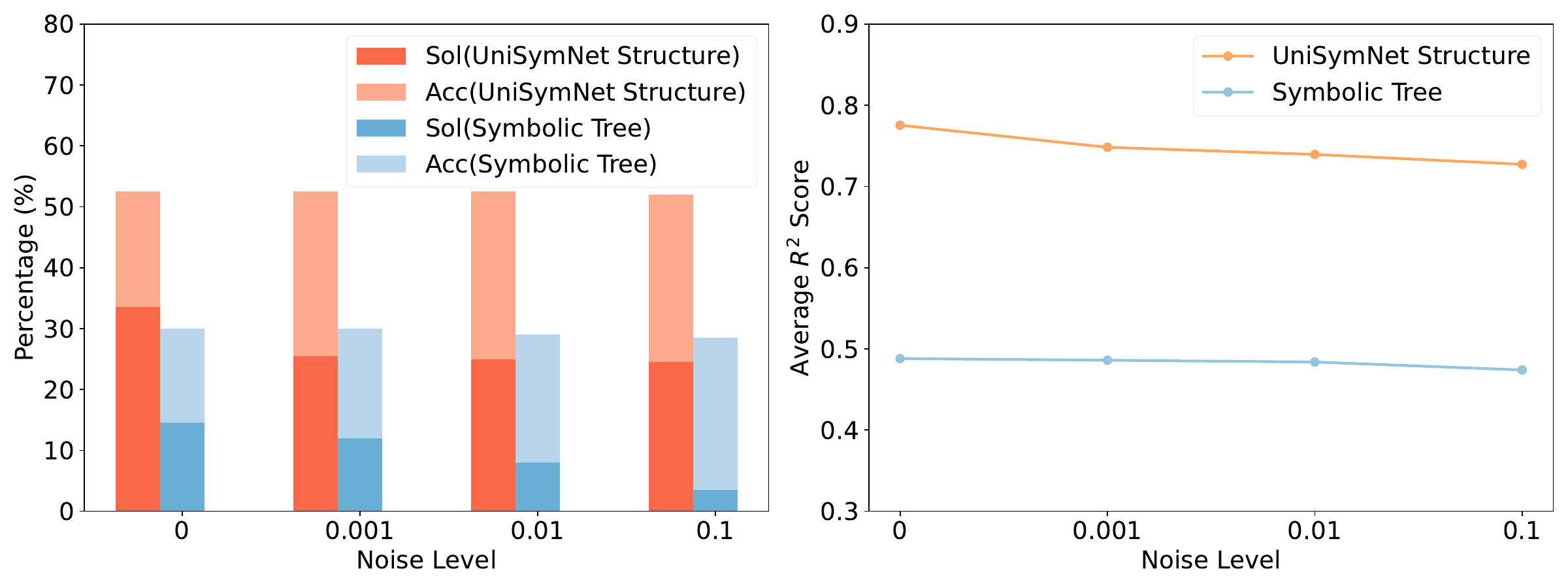}
    \caption{\textcolor{black}{Comparison of different encoding methods across different noise level. The bar chart on the left is defined as subfigure (a), while the line chart on the right is defined as subfigure (b).}}
    \label{fig:Ablation_noise_bfgs}
\end{figure}
\textcolor{black}{To evaluate the noise robustness of UniSymNet Structure encoding, we tested different noise levels and compared the two encoding methods. Subfigure (a) of Fig. \ref{fig:Ablation_noise_bfgs} reports the symbolic solution rate (Sol) and accuracy (Acc) across noise levels. While the fitting accuracy remains relatively stable, the symbolic solution rate exhibits a noticeable decline as noise increases. Notably, UniSymNet Structure maintains a higher symbolic solution rate under noisy conditions, whereas Symbolic Tree experiences a sharper drop. Subfigure (b) of Fig. \ref{fig:Ablation_noise_bfgs} shows the average $R^2$ score, demonstrating that both encoding methods retain strong robustness, with UniSymNet Structure consistently achieving superior performance.}

\subsubsection{Ablation Study on Pruning Strategy}
\label{sec: Ablation Study on pruning strategy}
\textcolor{black}{In Section \ref{sec: Differentiable Network Optimization}, we mentioned that the pruning strategy can be incorporated into the training process in a controlled manner.} To investigate its impact on experimental performance \textcolor{black}{of \emph{Differentiable Network Optimization method}}, we designed three comparison methods. The \textcolor{black}{Default} method sets the UniSymNet structure to a default form ($L=3, m=2$). The \textcolor{black}{DNO-NP} method \textcolor{black}{employs} a Transformer to guide the architecture of UniSymNet and doesn't add \textcolor{black}{a} pruning strategy. The \textcolor{black}{DNO-P} method uses a Transformer to guide the architecture and \textcolor{black}{add} the pruning strategy. 

\begin{figure}[!h]
\centering
\includegraphics[width=0.5\textwidth]{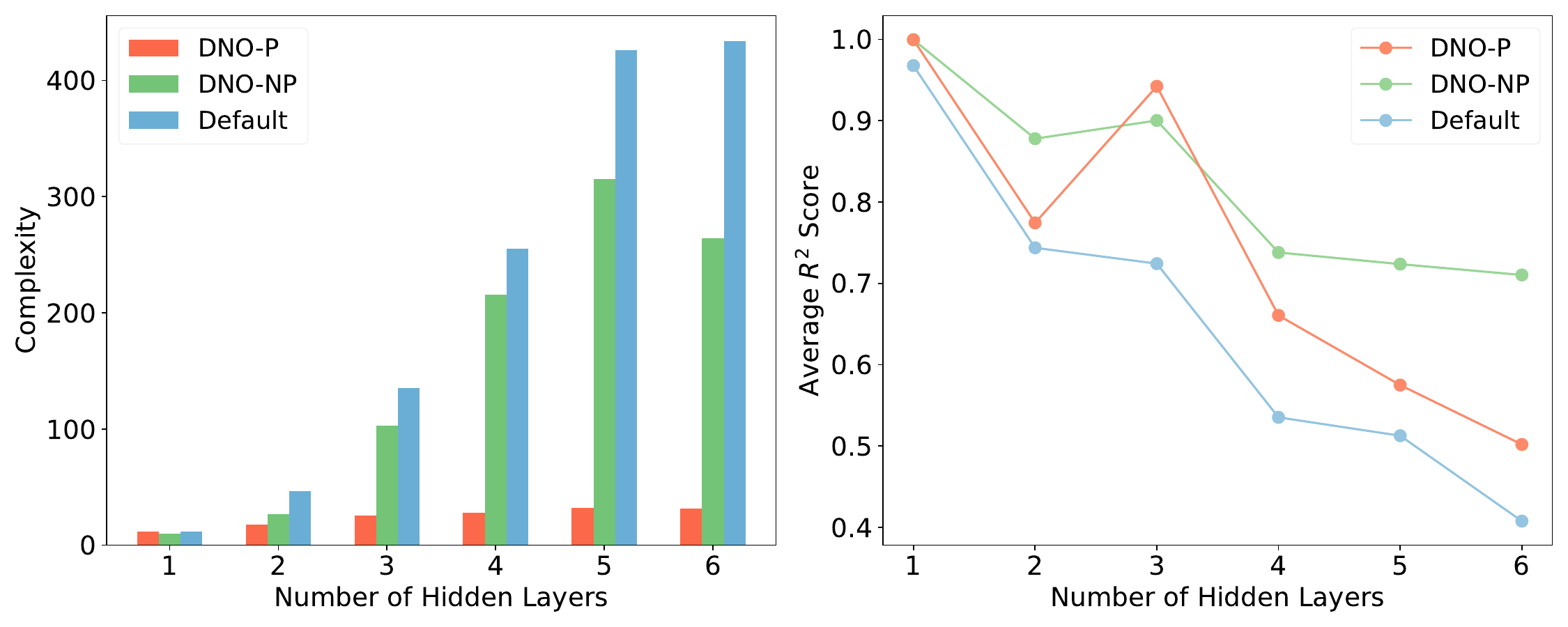}
\caption{\textcolor{black}{Performance comparison of three methods as the number of hidden layers in the equation varies.\textcolor{black}{The bar chart on the left is defined as subfigure (a), while the line chart on the right is defined as subfigure (b).}}}
\label{fig: Ablation(guidance).pdf}
\end{figure}

In Fig. \ref{fig: Ablation(guidance).pdf}, the subfigure (a) compares expression complexity for three methods as the number of hidden layers $L$ increases. We observe that introducing \textcolor{black}{the DNO-P setting (with pruning)} significantly reduces complexity \textcolor{black}{than the other settings. This occurs because only the DNO-P method enforces sparse connectivity, whereas the other two remain essentially fully connected.} The subfigure (b) shows the \textcolor{black}{DNO-NP} method achieves the best fitting performance, followed by the \textcolor{black}{DNO-P} method, while the \textcolor{black}{Default} method performs the worst. Overall, architectural guidance is effective, while pruning guidance doesn't improve fitting performance. This phenomenon can be attributed to two factors. First, the structure predicted by the Transformer is not always absolutely accurate. Since guidance is applied only at the architectural level, the network retains the ability to self-adjust and refine its internal structure during training. Second, the application of pruning strategies may interfere with gradient propagation, affecting final performance. However, the pruning strategy significantly reduces expression complexity.

\subsubsection{Ablation Study on Parameter Optimization}
\label{sec: Ablation Study on Parameter Optimization}

\begin{table*}[!t]
\color{black}
\centering
\caption{\textcolor{black}{Comparison of different optimization strategies across different noise level. The best results with highest Symbolic Solution Rate and Average $R^2$ Score are highlighted.}}
\label{tab:Ablation_noise_optim}
\scalebox{0.88}{
\begin{tabular}{@{}lcccccccccccc@{}}
\toprule
Optimazition Method & \multicolumn{4}{c}{SFO} & \multicolumn{4}{c}{DNO-P}& \multicolumn{4}{c}{DNO-NP} \\
\midrule
Noise Level& 0 & 0.001 & 0.01 & 0.1 & 0 & 0.001 & 0.01 & 0.1 & 0 & 0.001 & 0.01 & 0.1 \\
\cmidrule(lr){1-1}\cmidrule(lr){2-5} \cmidrule(lr){6-9}\cmidrule(lr){10-13}
Symbolic Solution Rate& \textbf{0.3350} & \textbf{0.2450} & \textbf{0.2450} & \textbf{0.2350} & 0.1515 & 0.1480 & 0.1472 & 0.1371 & 0.1224 & 0.1050 & 0.1150 & 0.1100 \\
Average $R^2$ Score & 0.7754 & 0.7592 & 0.7369  & \textbf{0.7386} & 0.7421 & 0.7286 & 0.7278 & 0.7242 & \textbf{0.8216} & \textbf{0.7724} & \textbf{0.7680}  & 0.7343 \\
\bottomrule
\end{tabular}}
\end{table*}

When applying our method, the choice between the \emph{Symbolic Function Optimization method} and the \emph{Differentiable Network Optimization method} depends on the specific objective. As noted in Section \ref{sec: Ablation Study on pruning strategy}, \emph{Differentiable Network Optimization method} itself can be carried out with or without \textcolor{black}{a} pruning strategy. \textcolor{black}{We now turn to a discussion on how to choose among these three methods.}

Because the pruned UniSymNet structure \({S}\) corresponds \textcolor{black}{directly} to the symbolic functional skeleton \(f_s\), their complexities coincide. In other words, pruning produces expression complexity similar to that of the Symbolic Function Optimization method, while skipping pruning leads to higher complexity than both alternatives. Therefore, we focus on symbolic accuracy and fitting performance, excluding complexity from comparison. For simplification, we denote the \emph{Symbolic Function Optimization method} as SFO, the \emph{Differentiable Network Optimization method} with pruning strategy as DNO-P, and the one without pruning strategy as DNO-NP.

\textcolor{black}{In the absence of noise,} Table \ref{tab:Ablation_noise_optim} shows that SFO achieves the highest symbolic solution rate, \textcolor{black}{whereas} DNO-NP shows the best fitting performance. \textcolor{black}{This raises an important question}: why does DNO-P's structure \( S \) exhibit a one-to-one correspondence with SFO's \( f_s \), yet SFO's Symbolic Solution Rate is \textcolor{black}{markedly lower} to DNO-P's? There are two reasons. \textcolor{black}{First}, the BFGS algorithm is more appropriate for models with relatively few parameters, in contrast to neural networks, which typically involve large-scale parameter optimization. \textcolor{black}{Second}, redundant parameters tend to emerge in symbolic networks. For example, symbolic networks are more likely to produce expressions in nested forms such as $w_1(w_0x_0 + b_0)$, even when simpler equivalent forms like $c_0x_0 + c_1$ exist.

\textcolor{black}{In the presence of noise, the results in Table \ref{tab:Ablation_noise_optim} demonstrate that SFO exhibits the strongest robustness, with both its symbolic solution rate and $R^2$ score degrading only marginally as the noise level increases from 0 to 0.1. By contrast, DNO-P and DNO-NP show greater sensitivity, with their fitting accuracy deteriorating more noticeably at higher noise levels. These findings indicate that although DNO-NP can achieve competitive accuracy under noise-free conditions, SFO proves to be substantially more reliable when handling noisy data, making it the most noise-robust strategy among the three.}

Therefore, each method is suited to distinct objectives:
\begin{enumerate}[label=(\roman*)]
\item  If a reliable and automatically differentiable surrogate model is the primary goal, DNO-NP serves as an excellent choice.  
\item If discovering underlying relationships or scientific discoveries within the data is desired, SFO is preferable. \textcolor{black}{Its strong noise robustness further enhances its reliability, making it one of the most suitable methods for SR tasks.}
\item If a simplified and automatically differentiable surrogate model is the primary goal, DNO-P provides a lower complexity surrogate compared to DNO-NP, though with a slight reduction in fitting accuracy.  
\end{enumerate}
\textcolor{black}{It is worth noting that, in contrast to the DNO method, the SFO method is not affected by the domain alterations introduced by the $\Psi$ transformation. A detailed discussion of this issue is provided in Section \ref{sec: domain implications}. As for the subsequent experiments, the choice of optimization methods for different datasets will be detailed in Appendix \ref{appendix: Hyperparameter Settings}.}

\subsection{In-domain Performance}
\label{sec: In-domain performance}
The sets of problems in the Standard Benchmarks have at most two variables and simple ground-truth equations. We tested our method on these benchmarks and compared it with the in-domain baselines through twenty independent runs. \textcolor{black}{In Appendix \ref{appendix: Representative Examples}, Table \ref{tab: Representative examples in Standard_benchmarks.} presents several representative examples of equations discovered from the Standard Benchmark datasets.}

\subsubsection{Fitting Performance and Complexity}
\label{sec: $R^2$ Score and Complexity}

A model that is too simple may fail to capture the underlying relationships within the data, resulting in poor fitting performance. However, an excessively complex model may fit the data well, but it also risks overfitting and lacks interpretability. Achieving a good balance requires a model with relatively low complexity and high fitting accuracy. Thus, we will present a joint comparison of fitting performance and complexity in this section.

\begin{figure}[!ht]
    \centering
    \includegraphics[width=\linewidth]{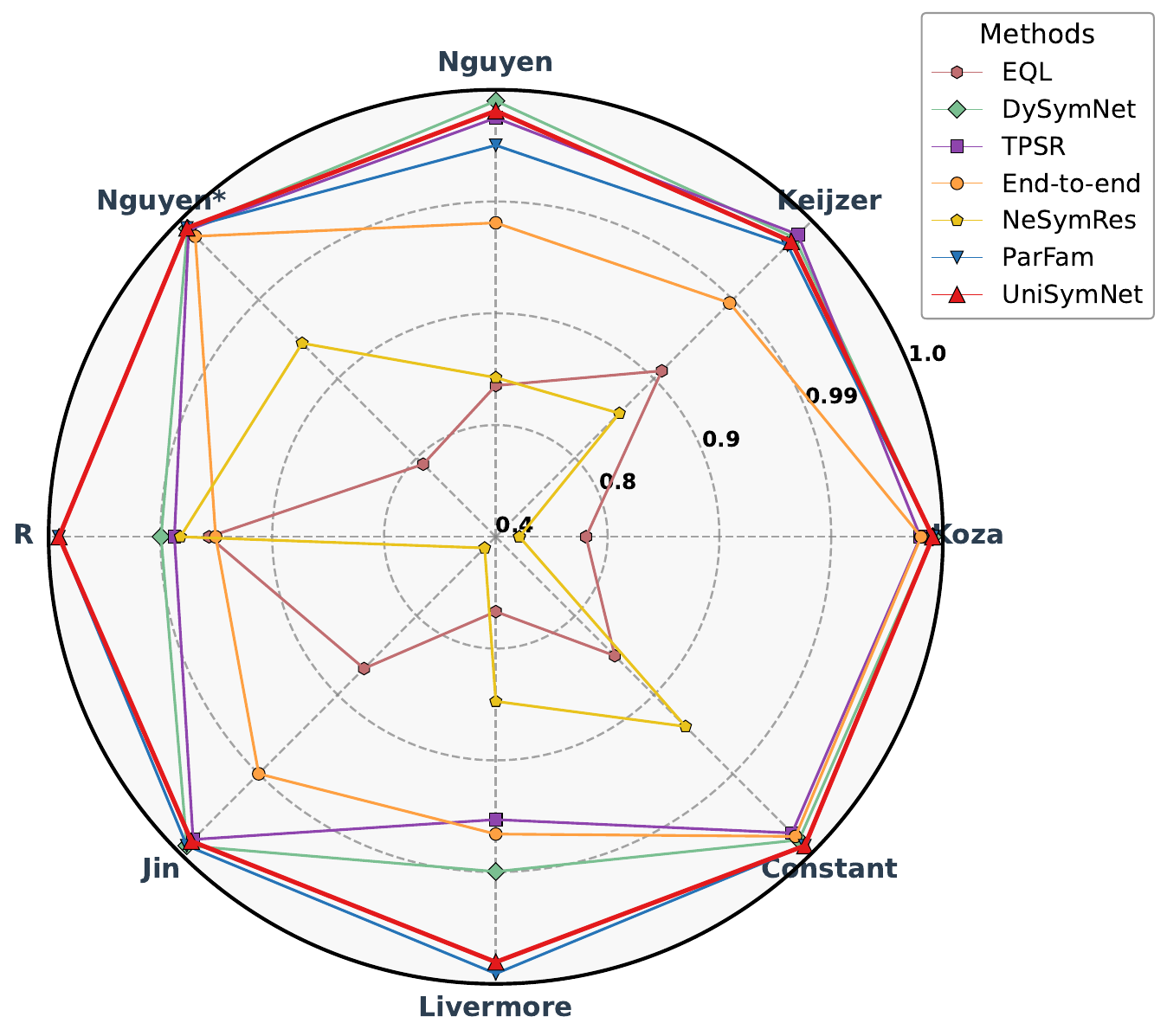}
    \caption{\textcolor{black}{Average $R^2$ score comparison of in-domain methods on Standard Benchmarks.}}
    \label{fig: Average $R^2$ score comparison of in-domain methods.}
\end{figure}

As a first step, we compare the fitting performance separately. Fig. \ref{fig: Average $R^2$ score comparison of in-domain methods.} presents the fitting performance comparison of in-domain methods on the Standard Benchmarks. To enhance clarity, the rings in Fig. \ref{fig: Average $R^2$ score comparison of in-domain methods.} are intentionally spaced unevenly. Based solely on the average $R^2$ score across each dataset, UniSymNet, \textcolor{black}{ParFam}, DySymNet, and TPSR exhibit competitive performance, among which UniSymNet consistently demonstrates superior fitting capability.

\begin{table*}[hb]
\centering
\caption{Fitting accuracy ($R^2>0.99$) rate, complexity, and symbolic solution rate comparison of in-domain methods. \textcolor{black}{We highlight the results with the highest fitting accuracy. Among these, the one with the lowest complexity is additionally marked. For the symbolic solution rate, the result with the highest value is indicated.}}
\label{tab: Complexity comparison of in-domain methods.}
\renewcommand{\arraystretch}{0.8} 
\scalebox{0.9}{
\begin{tabular}{cccccccccc}
\toprule
\textbf{Benchmark} & \textbf{Metrics} & \textbf{EQL} & \textbf{NeSymRes} & \textbf{End-to-End} & \textbf{DySymNet} & \textbf{TPSR} & \textcolor{black}{\textbf{ParFam}} &\textbf{UniSymNet} & \textbf{Ground-Truth} \\
\toprule
\multirow{3}{*}{\textbf{Koza}} & \emph{Com} & 5.00 & 9.00 & 16.85 & 14.00 & 16.83 &\textcolor{black}{23.80}& \textbf{13.50} & 11.00 \\
\cmidrule(lr){2-10}
& \emph{Acc} & 0.0000 & 0.0000 & \textbf{1.0000} & \textbf{1.0000} & \textbf{1.0000} &\textcolor{black}{\textbf{1.0000}}& \textbf{1.0000} & - \\
\cmidrule(lr){2-10}
 & \emph{Sol} & 0.0000 & 0.0000 & 0.0000 & 0.0000 & 0.0000 &\textcolor{black}{0.0000}& \textbf{0.5000} & - \\
\midrule
\multirow{3}{*}{\textbf{Keijzer}} & \emph{Com} & 17.93 & 9.45 & 22.45 & 17.20  & 18.67 &\textcolor{black}{74.37}& \textcolor{black}{\textbf{10.16}} & 10.83 \\
\cmidrule(lr){2-10}
 & \emph{Acc} & 0.4583 & 0.5833 & 0.6708  & 0.9667 & \textbf{1.0000} &\textcolor{black}{0.9762} & \textbf{1.0000} & - \\
\cmidrule(lr){2-10}
 & \emph{Sol} & 0.0000 & 0.0833 & 0.0083 & 0.0000 & 0.1667 &\textcolor{black}{0.3333}& \textbf{0.4167} & - \\
\midrule
\multirow{3}{*}{\textbf{Nguyen}} & \emph{Com} & 15.85 & 9.08 & 22.75 & 24.20 & 15.86 &\textcolor{black}{\textbf{57.83}} & \textcolor{black}{9.25} & 9.20    \\
\cmidrule(lr){2-10}
& \emph{Acc} & 0.2958 & 0.4167 & 0.8458 & 0.8333  & \textbf{0.9167}&\textcolor{black}{\textbf{0.9916}} & 0.9167 &  -\\
\cmidrule(lr){2-10}
 & \emph{Sol} & 0.0000 & 0.1667 & 0.0000 & 0.0000 & 0.0833 &\textcolor{black}{0.1667}& \textcolor{black}{\textbf{0.3333}} & - \\
\midrule
\multirow{3}{*}{\textbf{Nguyen*}} & \emph{Com} & 17.70 & 8.07 & 16.12 & 16.00 & 16.80 &\textcolor{black}{43.91}& \textbf{11.80}  & 9.58 \\
\cmidrule(lr){2-10}
 & \emph{Acc} & 0.5500 & 0.6000 & \textbf{1.0000} & \textbf{1.0000}  & \textbf{1.0000}&\textcolor{black}{\textbf{1.0000}} & \textbf{1.0000} & - \\
 \cmidrule(lr){2-10}
& \emph{Sol} & 0.0000 & 0.2000 & 0.0000 & 0.0000 & 0.0000 &\textcolor{black}{0.1000}& \textbf{0.4000} & - \\
\midrule
\multirow{3}{*}{\textbf{R}} & \emph{Com} & 9.00 & 12.30 & 21.83 & 29.70 & 16.87 &\textcolor{black}{23.92} & \textcolor{black}{\textbf{20.00}}  & 18.67 \\
\cmidrule(lr){2-10}
 & \emph{Acc} & 0.3167 & 0.3333 & 0.6667 & 0.6833 & 0.6833&\textcolor{black}{\textbf{1.0000}} & \textbf{1.0000} & - \\
 \cmidrule(lr){2-10}
& \emph{Sol} & 0.0000 & 0.0000 & 0.0000 & 0.0000 & 0.0000 &\textcolor{black}{0.0000}& \textbf{0.3333} & - \\
\midrule
\multirow{2}{*}{\textbf{Jin}} & \emph{Com} & 17.11 & 8.95 & 28.19 & 59.78 & 19.78&\textcolor{black}{100.92} &  \textcolor{black}{\textbf{15.67}}  & 13.00 \\
\cmidrule(lr){2-10}
 & \emph{Acc} & 0.0167 & 0.1667 & 0.9667 &0.9833  & \textbf{1.0000} &\textcolor{black}{\textbf{1.0000}}& \textbf{1.0000}& - \\
 \cmidrule(lr){2-10}
& \emph{Sol} & 0.0000 & 0.0000 & 0.0000 & 0.0000 & 0.0583&\textcolor{black}{0.3333} & \textbf{0.5000} & - \\
\midrule
\multirow{3}{*}{\textbf{Livermore}} & \emph{Com} & 14.72 & 8.09 & 20.95 & 23.30 & 14.55 &\textcolor{black}{29.39} & \textcolor{black}{\textbf{11.50}} & 9.73 \\
\cmidrule(lr){2-10}
 & \emph{Acc} & 0.2682 & 0.2727 & 0.6614 & 0.7568 & 0.8182 &\textcolor{black}{\textbf{1.0000}}& \textbf{1.0000} & -  \\
 \cmidrule(lr){2-10}
& \emph{Sol} & 0.0000 & 0.0455 & 0.0045 & 0.0000 & 0.0500 &\textcolor{black}{0.2727}& \textbf{0.4091} & - \\
\midrule
\multirow{3}{*}{\textbf{Constant}} & \emph{Com} & 21.47 & 7.43 & 16.30 & 17.11  & {14.50} &\textcolor{black}{28.33}& \textcolor{black}{\textbf{9.63}}  & 7.63 \\
\cmidrule(lr){2-10}
 & \emph{Acc} & 0.5313 & 0.5000 & \textbf{1.0000} & 0.8333 & {0.8750} &\textcolor{black}{\textbf{1.0000}}&  \textbf{1.0000} & - \\
 \cmidrule(lr){2-10}
 & \emph{Sol} & 0.0000 & 0.2500 & 0.0063 & 0.0000 & 0.1250 &\textcolor{black}{0.3750}& \textbf{0.7500} & - \\
\bottomrule
\end{tabular}}
\end{table*}
And to enable a more detailed joint comparison, Table \ref{tab: Complexity comparison of in-domain methods.} provides a comprehensive analysis of each method in terms of fitting accuracy and expression complexity. The fitting accuracy refers to the proportion of 20 independent runs per dataset that satisfy $R^2>0.99$. And the "Ground-truth" column in Table \ref{tab: Complexity comparison of in-domain methods.} represents the true average complexity for each dataset. As shown, UniSymNet strikes a good balance between fitting accuracy and complexity, achieving perfect fitting accuracy with the lowest complexity on most datasets. 


\subsubsection{Symbolic Solution Rate}
\label{sec: Symbolic Solution Rate}

If a predicted expression meets the criteria of Definition \ref{def: Symbolic Solution}, it uncovers the true law behind the data, yielding \(R^2 = 1\) and thus perfect fitting and extrapolation performance. Since this is the ideal outcome for discoveries in the natural sciences, we compare these methods by their symbolic solution rates on the Standard Benchmarks in Table \ref{tab: Complexity comparison of in-domain methods.}. The symbolic solution rate refers to the proportion of 20 independent runs per dataset that satisfy the Definition \ref{def: Symbolic Solution}. According to the results, the three methods: UniSymNet, \textcolor{black}{ParFam}, NeSymRes, exhibit relatively higher solution rates. Among them, UniSymNet consistently achieves the highest symbolic solution rate.

\subsubsection{Computational Efficiency}
\label{sec: Computational Efficiency}
We also consider the time cost of solving the problems. Therefore, we compare the average test time of the in-domain methods on the Standard Benchmarks. 
\begin{figure}[!ht]
    \centering
    \includegraphics[width=
    \linewidth]{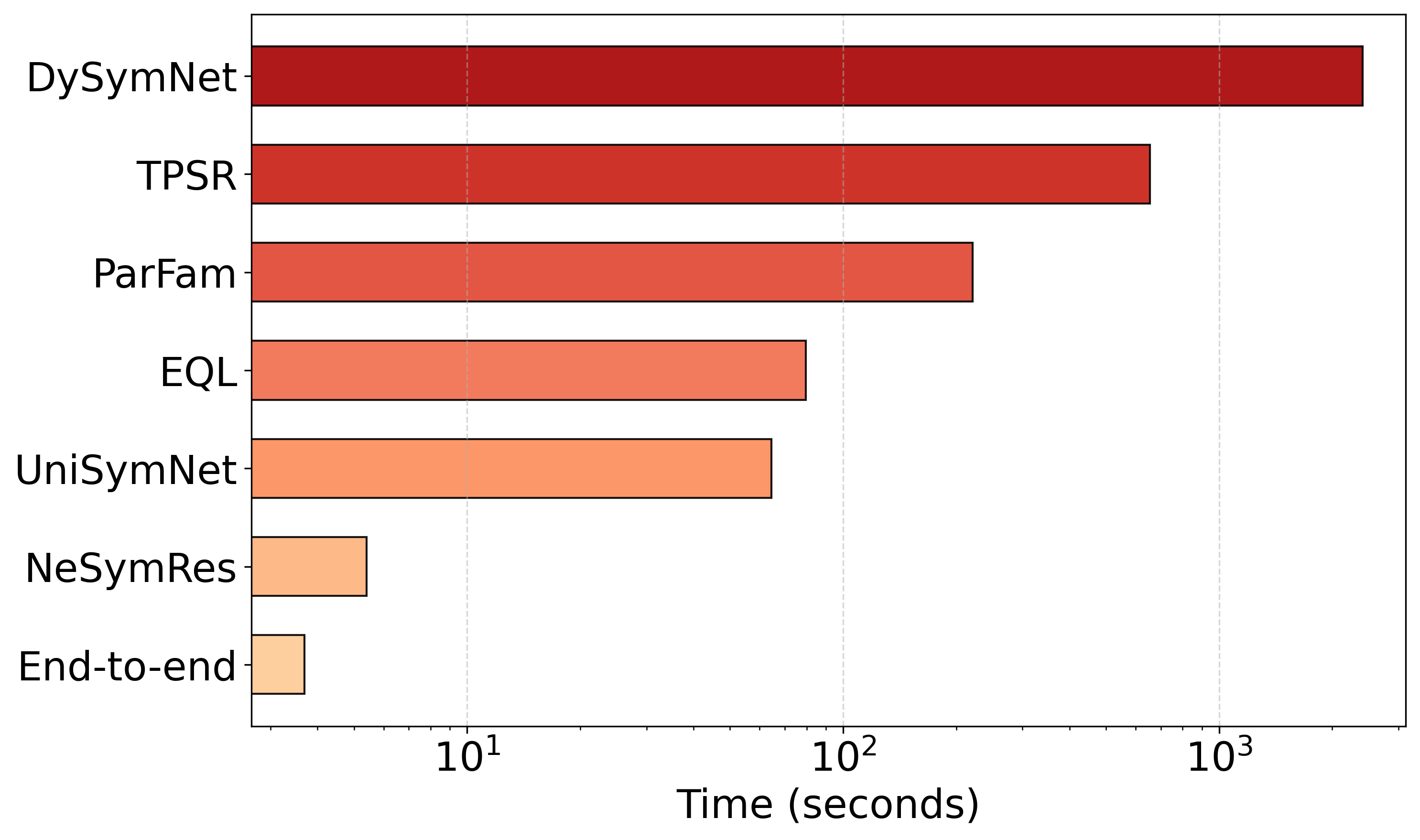}
    \caption{\textcolor{black}{Average test time comparison of in-domain methods.}}
    \label{fig: Test time comparison of in-domain methods.}
\end{figure}
Fig. \ref{fig: Test time comparison of in-domain methods.} compares the test time of various in-domain methods. The learning-from-experience methods: End-to-end, NeSymRes\textcolor{black}{, and} UniSymNet achieve the shortest test time, benefiting from the strong guidance provided by pre-trained models, and other methods generally exhibit longer test time.

\subsubsection{Extrapolation Ability}
\label{sec: Extrapolation Ability}
\begin{figure}[!h]
  \centering
  \begin{minipage}{0.23\textwidth}
  \centering
    \includegraphics[width=\textwidth]{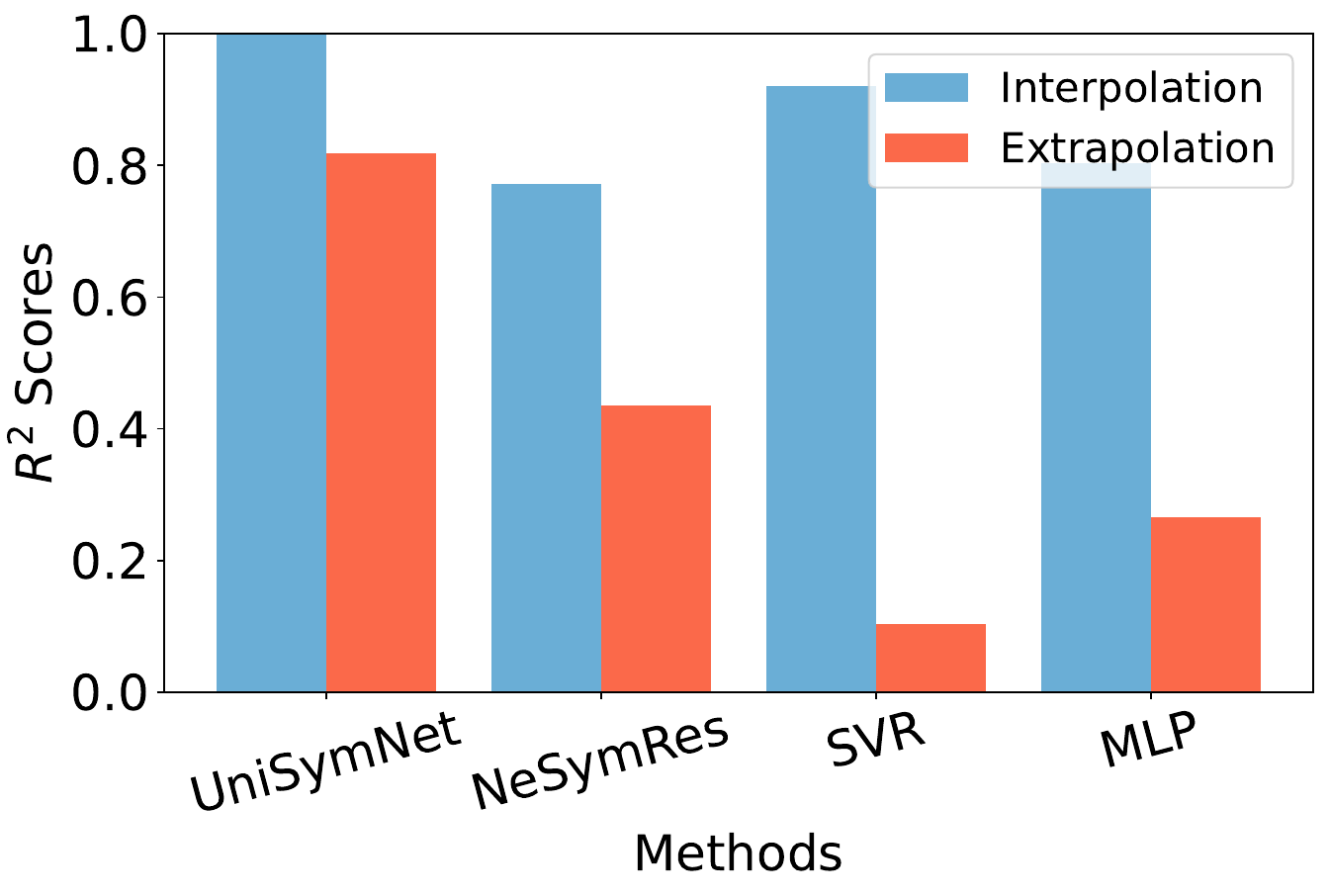} \\
    Keijzer
  \end{minipage}
  \begin{minipage}{0.23\textwidth}
  \centering
    \includegraphics[width=\textwidth]{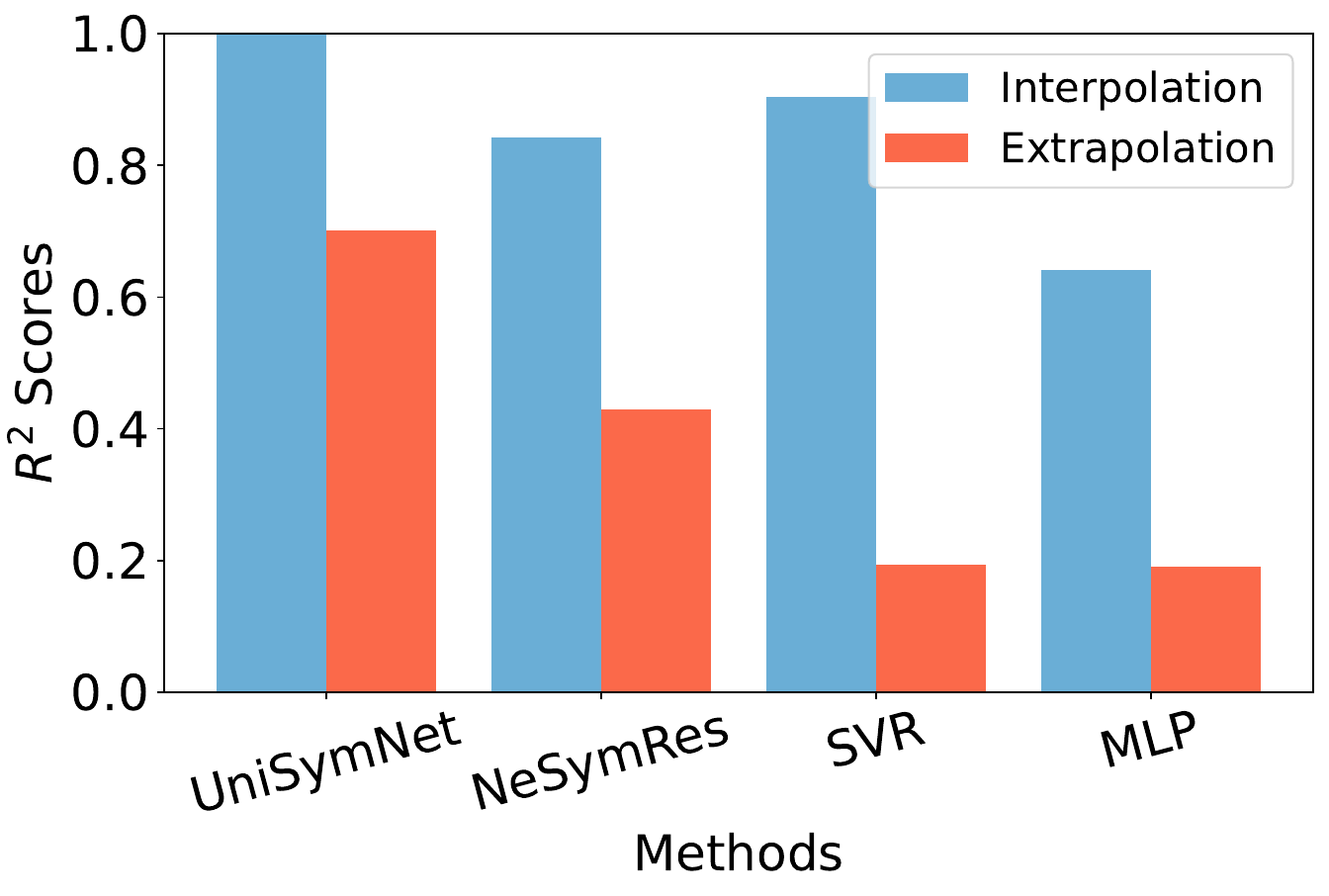}\\
    Nguyen
  \end{minipage}

  
  \begin{minipage}{0.23\textwidth}
  \centering
    \includegraphics[width=\textwidth]{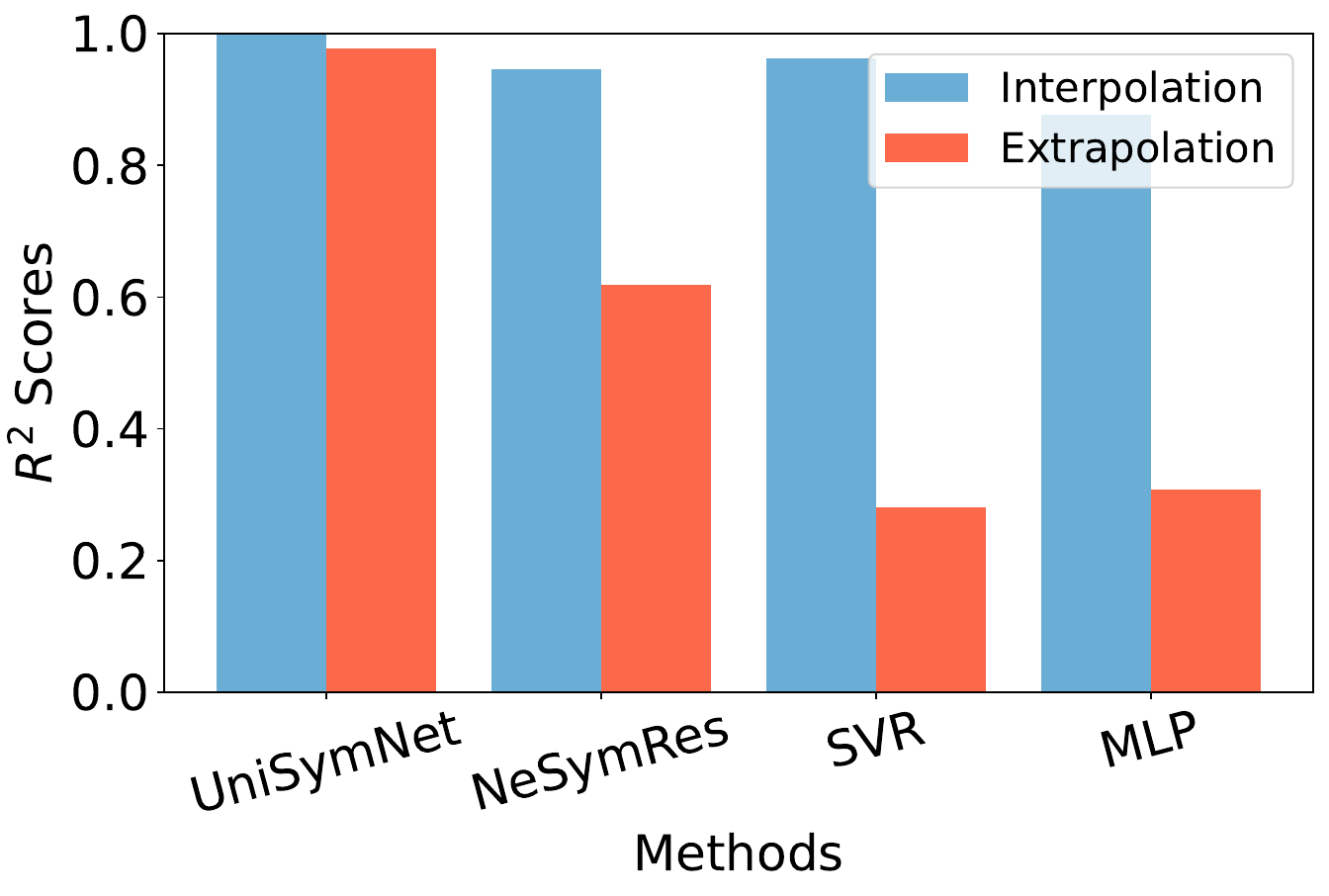}\\
    Constant
  \end{minipage}
  \begin{minipage}{0.23\textwidth}
  \centering
    \includegraphics[width=\textwidth]{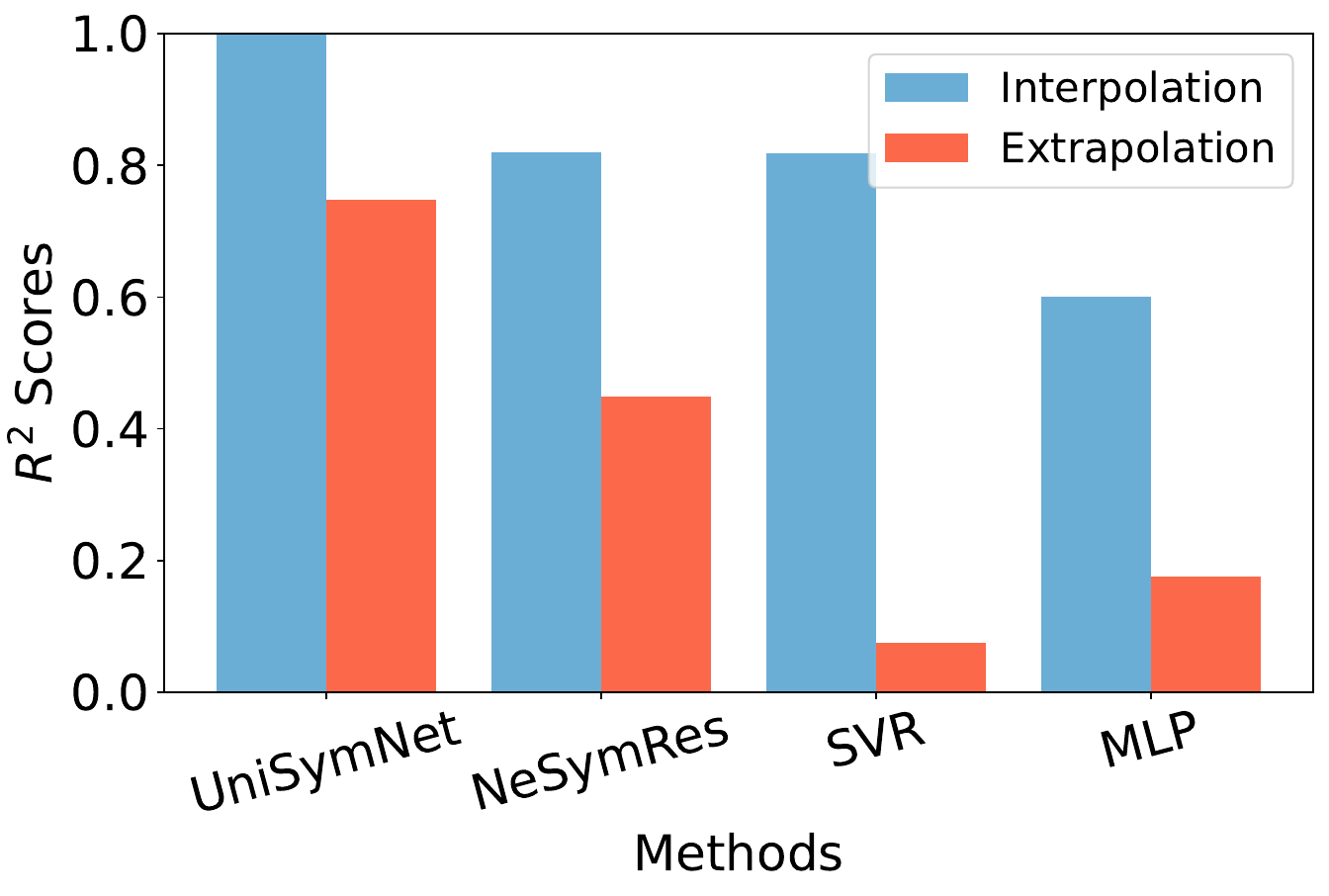}\\
    Livermore
  \end{minipage}

  \caption{Extrapolation ability comparison.} 
  
  \label{fig: Extrapolation Experiment: part 1}
\end{figure}
To compare the extrapolation capabilities of symbolic networks and standard neural networks, we conducted evaluations on the Standard Benchmarks datasets. Our comparison included the Transformer-based SR methods NeSymRes and UniSymNet, as well as the machine learning methods SVR and MLP. The training datasets used in our experiments are the same as in the previous section, while the extrapolation test sets span intervals twice the length of the training ranges. Fig. \ref{fig: Extrapolation Experiment: part 1} presents a qualitative \textcolor{black}{comparison of extrapolation performance} on four datasets, \textcolor{black}{while the remaining results are provided in Appendix \ref{appendix: Extrapolation Experiments}.} UniSymNet outperforms the other methods in interpolation and extrapolation $R^2$ score, demonstrating superior extrapolation ability. 

\begin{figure}[!h]
    \centering
    \begin{minipage}[t]{0.212\textwidth}
        \centering
        \includegraphics[width=\textwidth]{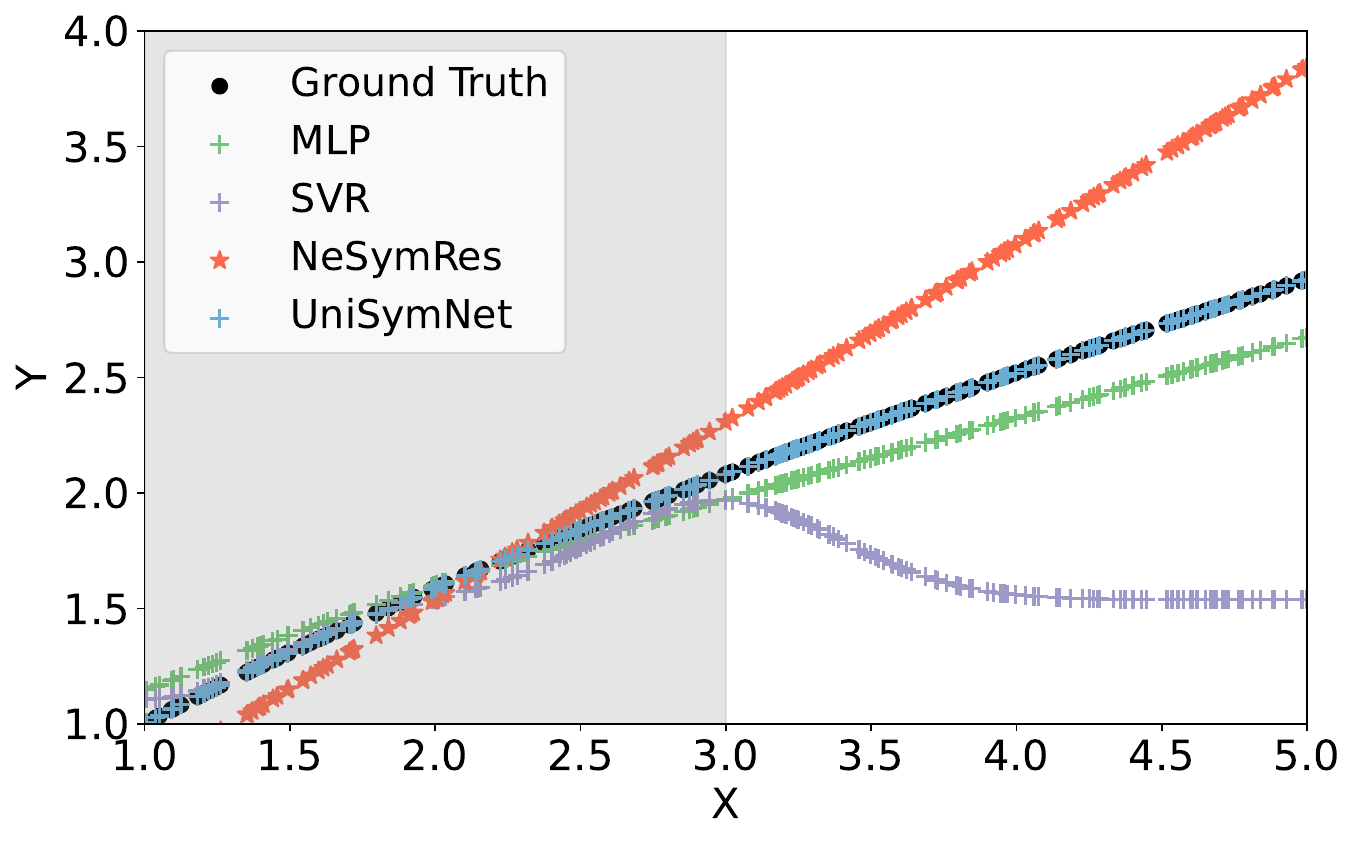}
    \end{minipage}
    \begin{minipage}[t]{0.225\textwidth}
        \centering
        \includegraphics[width=\textwidth]{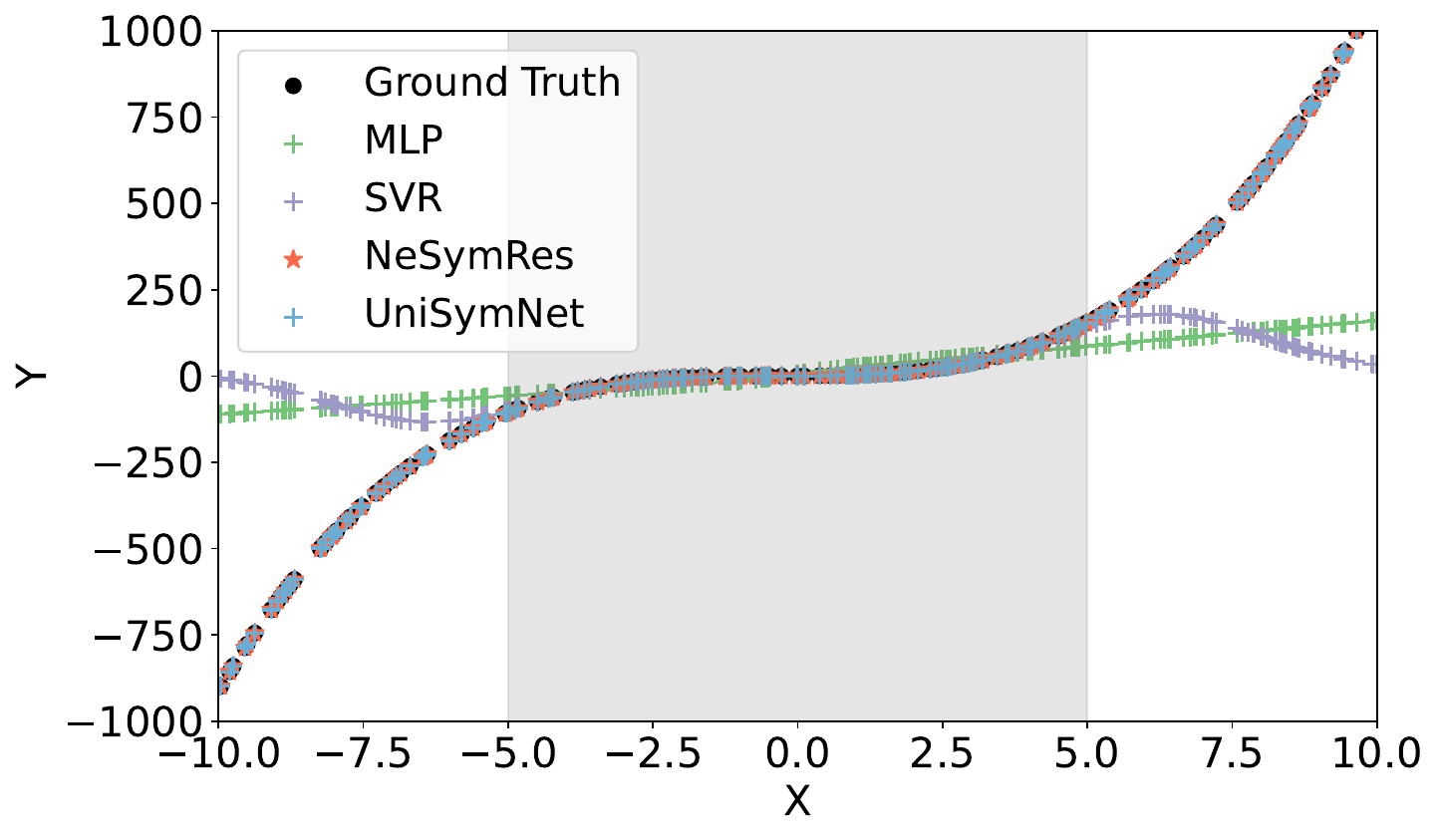}
    \end{minipage}
    \caption{Extrapolation ability comparison on the Livermore-13 and Nguyen-1 problems.}
    \label{fig: Extrapolation Experiment: part 3}
\end{figure}

To provide a more detailed illustration, we conducted an in-depth evaluation of the specific Livermore-13 and Nguyen-1 problems. Fig. \ref{fig: Extrapolation Experiment: part 3} presents extrapolation ability comparison on the ground-truth problems Livermore-13 ($x^{\frac{2}{3}}$) and Nguyen-1 ($x^3 + x^2 + x$). \textcolor{black}{Although} all four methods effectively fit the training data, SR methods outperform traditional machine learning methods (MLP and SVR) in the out-of-domain performance. On the Livermore-13 and Nguyen-1 problems, our method closely fits the ground truth, accurately capturing the underlying patterns.

\begin{table*}[!h]
\centering
\caption{Overall performance of UniSymNet on Feynman and Strogatz Datasets.}
\label{tab: Overall performance of UniSymNet on Feynman and Strogatz.}
\scalebox{1}{
\begin{tabular}{@{}lcccccccc@{}}
\toprule
DataSet& \multicolumn{4}{c}{Feynman} & \multicolumn{4}{c}{Strogatz} \\
\midrule
noise level & 0.0 & 0.001 & 0.01 & 0.1 & 0.0 & 0.001 & 0.01 & 0.1 \\
\cmidrule(lr){1-1}\cmidrule(lr){2-5} \cmidrule(lr){6-9}
Symbolic Solution Rate & 0.5690 & 0.5086 & 0.4483 & 0.2845 & 0.2857 & 0.1429 & 0.1429 & 0.1429 \\
Average $R^2$ Score   & 0.9791& 0.9708& 0.9681&\textcolor{black}{0.9596}& 0.9522 & 0.8781 & 0.8581 & 0.9113 \\
Average Complexity & \textcolor{black}{20.19}&  20.44& 19.07&\textcolor{black}{21.24}&  22.21& 20.50& 19.57 & 21.79\\
\bottomrule
\end{tabular}}
\end{table*}

\subsection{Out-of-domain Performance}
\label{sec: Out-of-domain performance}
We evaluated our method using the ground-truth problems in SRBench, including 130 datasets from Feynman and ODE-Strogatz. \textcolor{black}{Each dataset was evaluated under four noise levels:} 0, 0.001, 0.01, and 0.1. Table \ref{tab: Overall performance of UniSymNet on Feynman and Strogatz.} presents the detailed results of UniSymNet. And we compared the performance of different methods across three metrics: symbolic solution rate, fitting performance, and expression complexity. \textcolor{black}{Moreover, in Appendix \ref{appendix: Representative Examples}, Table \ref{tab: Representative examples in SRBench.} presents several representative examples of equations discovered from the SRBench datasets.}

\subsubsection{Symbolic Solution Rate}
\label{sec: Symbolic Solution Rate}

\begin{figure}[!ht]
    \centering
    \includegraphics[width=\linewidth]{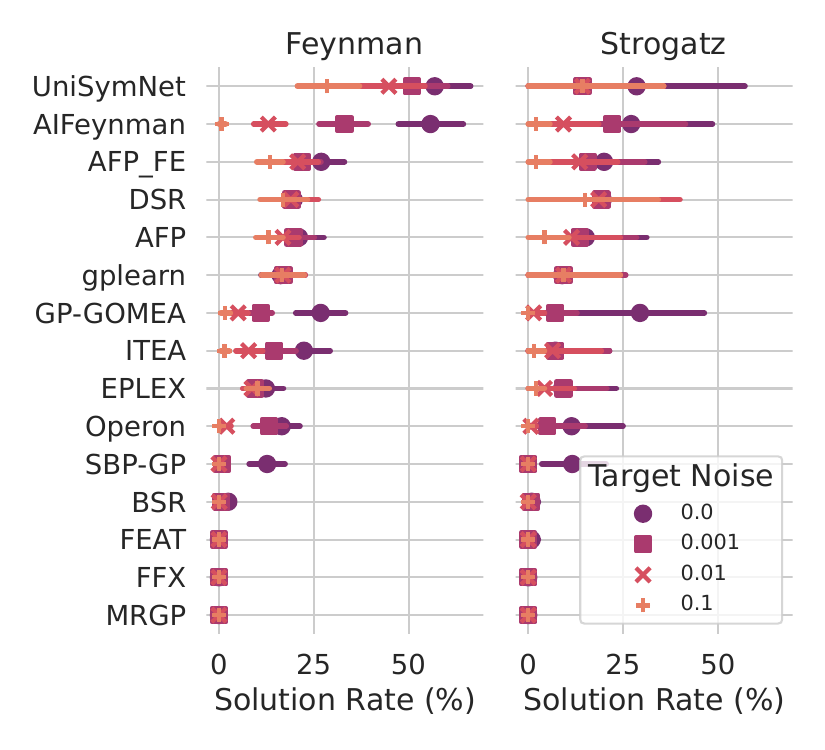}
    \caption{Symbolic solution rate comparison of 14 out-of-domain methods on SRBench under different noise levels.}
    \label{fig: solution_rate.pdf} 
\end{figure}

Fig. \ref{fig: solution_rate.pdf} shows a comparison of the symbolic solution rate between UniSymNet and 14 out-of-domain methods on ground-truth problems under different noise levels. \textcolor{black}{For the Feynman datasets, the results indicate that UniSymNet consistently outperforms all other methods across all noise levels. For the ODE-Strogatz datasets, our method achieves the highest symbolic solution rate in the noise-free setting, while showing a relatively larger performance drop as the noise level increases.} Overall, UniSymNet shows a strong capability to recover true symbolic expressions.

\subsubsection{Fitting Performance and Complexity}
\label{sec: Fitting Performance and Complexity}

\begin{figure}[!ht]
    \centering
    \includegraphics[width=\linewidth]{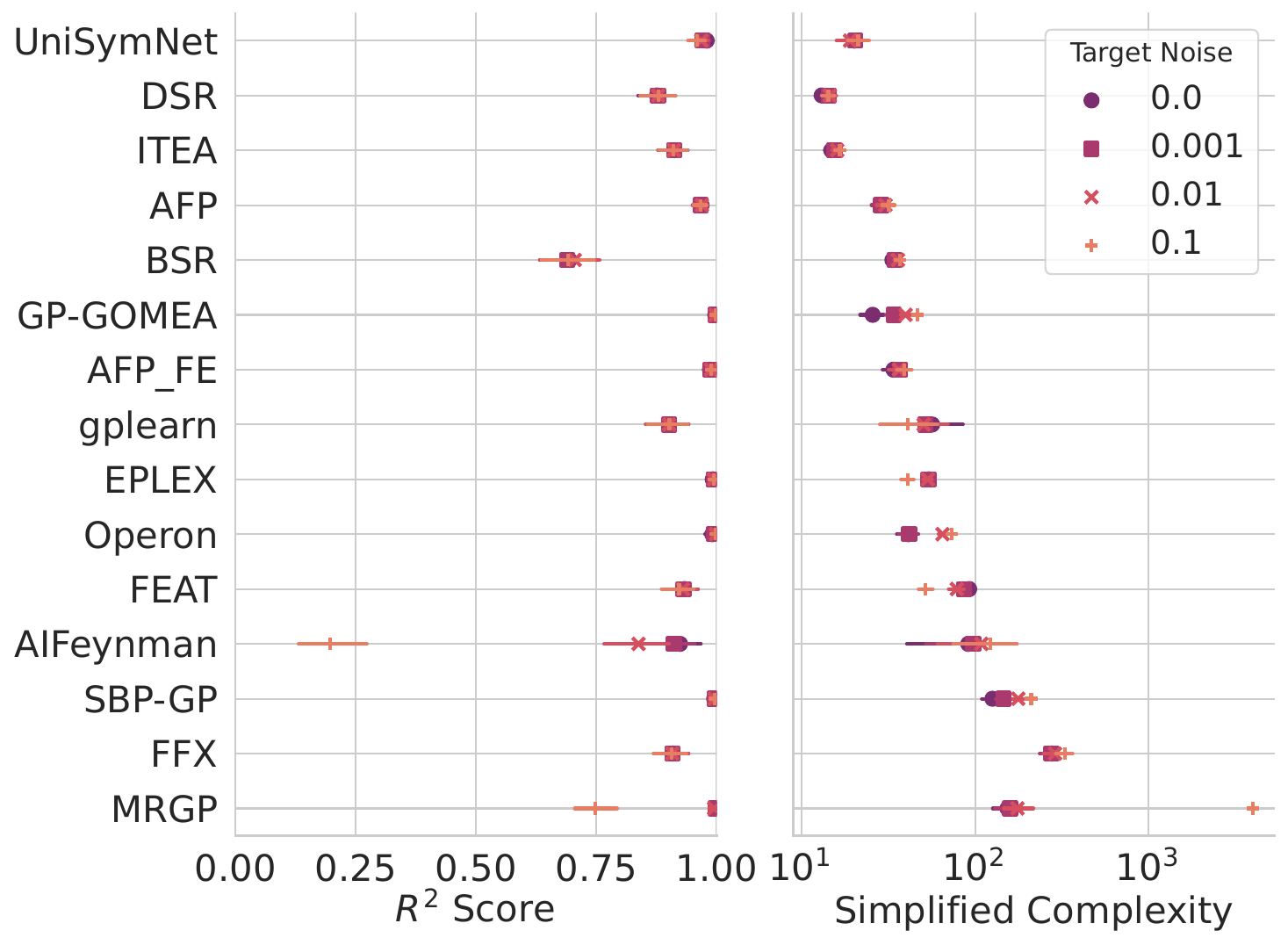}
    \caption{\textcolor{black}{Performance comparison of 14 out-of-domain methods on the Feynman datasets under different noise levels.}}
    \label{fig: Out-of-domain_Feynman.pdf} 
\end{figure}
Fig. \ref{fig: Out-of-domain_Feynman.pdf} \textcolor{black}{presents} fitting performance and complexity comparisons on Feynman datasets. Our method ranks third \textcolor{black}{in terms of} expression complexity, while achieving an average $R^2$ score of \textbf{0.9786} on the noise-free Feynman datasets. 
\begin{figure}[!ht]
    \centering
    \includegraphics[width=\linewidth]{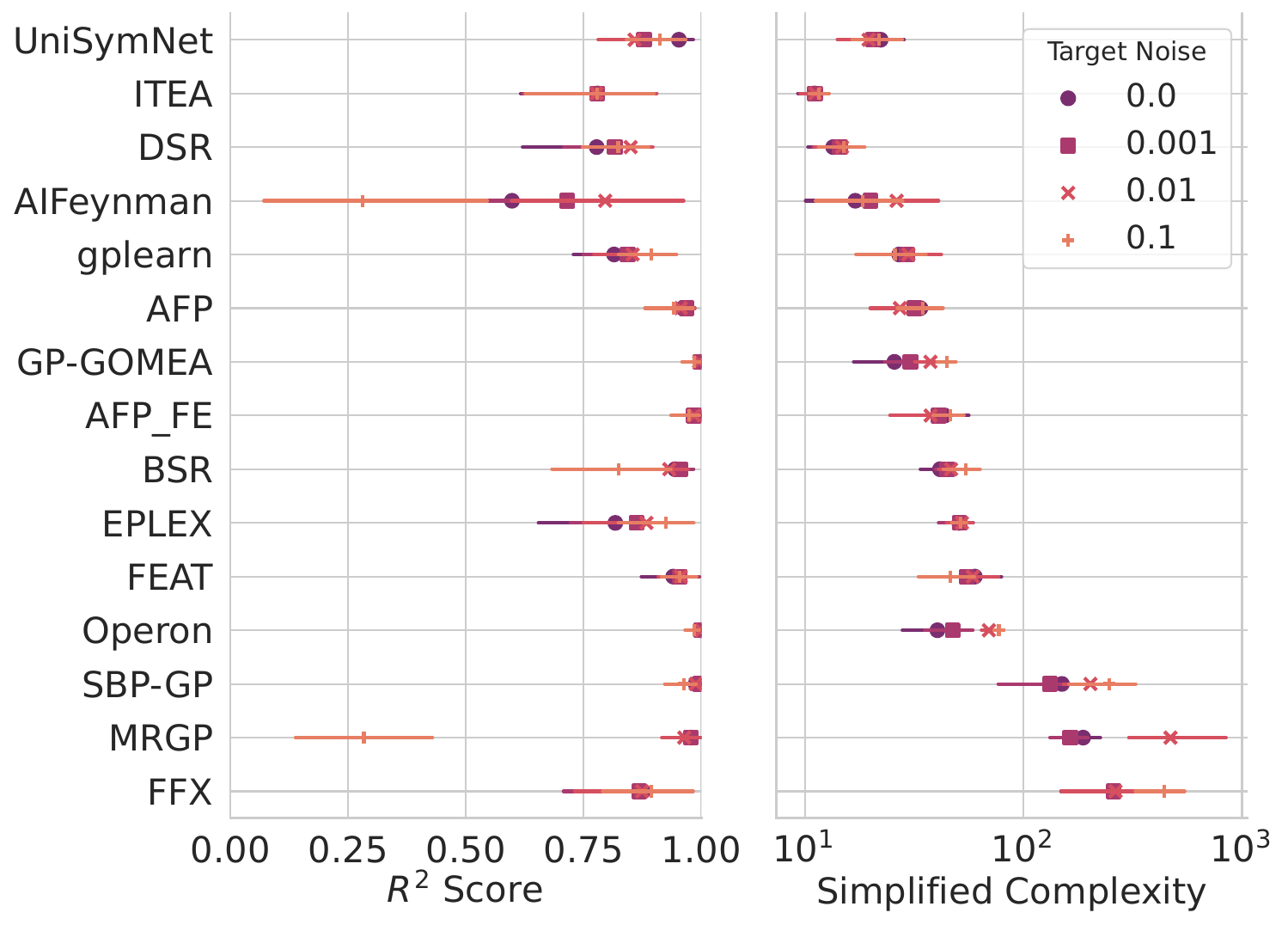}
    \caption{\textcolor{black}{Performance comparison of 14 out-of-domain methods on the ODE-Strogatz datasets under different noise levels.}}
    \label{fig: Out-of-domain_Strogatz.pdf}
\end{figure}
Fig. \ref{fig: Out-of-domain_Strogatz.pdf} shows fitting performance and complexity comparisons on ODE-Strogatz datasets. UniSymNet achieved an average $R^2$ score of \textbf{0.9522} on the noise-free dataset, while also maintaining a relatively low expression complexity. These results \textcolor{black}{indicate} that, given a similar level of expression complexity, our method provides better fitting performance.

Overall, UniSymNet shows an excellent symbolic solution rate \textcolor{black}{while} maintaining high fitting accuracy with low complexity. Our method performs better on Feynman than on Strogatz because the value range distribution of \textcolor{black}{the} training data is more aligned with that of Feynman. As the noise increases, the symbolic accuracy and fitting performance of UniSymNet and other methods decrease accordingly. \textcolor{black}{However, as shown in Fig. \ref{fig: Out-of-domain_Feynman.pdf} and Fig. \ref{fig: Out-of-domain_Strogatz.pdf}, our method remains stable across varying noise levels, consistent with the findings in Section \ref{sec: Ablation Studies}, which highlights its greater robustness relative to other methods.}

\subsection{\textcolor{black}{Convergence Behavior}}
\label{sec: Convergence Behavior}
\textcolor{black}{For the outer optimization, we trained a Transformer model to predict UniSymNet structures. As illustrated in Fig. \ref{fig:loss_transformer}, both the training and validation losses exhibit a rapid decline during the initial epochs and gradually plateau thereafter. By approximately the 50th epoch, the training loss has stabilized, while the validation loss fluctuates within a narrow margin, indicating that the model has effectively converged.} 

\begin{figure}[!h]
    \centering
    \includegraphics[width=\linewidth]{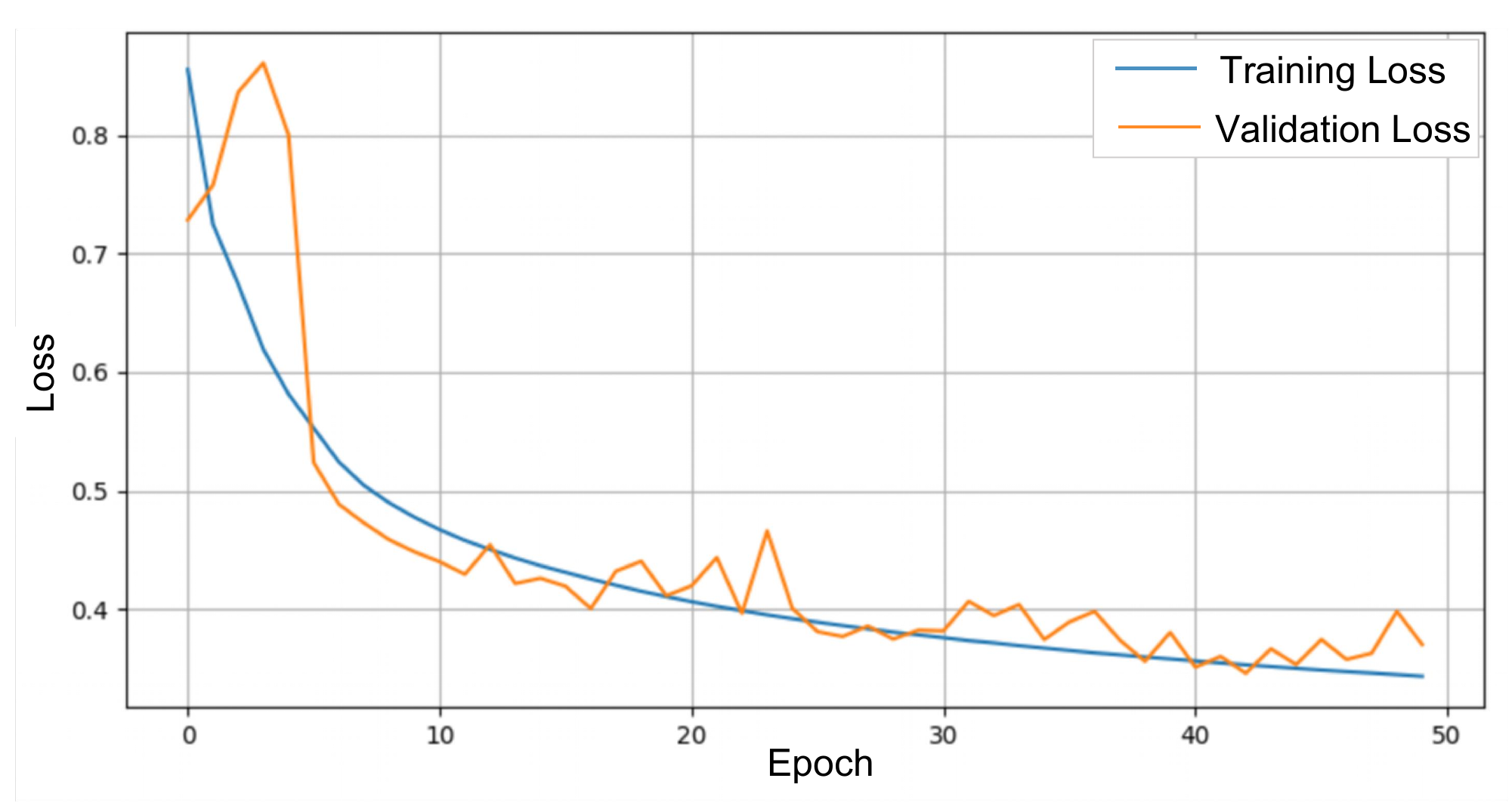}
    \caption{\textcolor{black}{Outer optimation convergence curve.}}
    \label{fig:loss_transformer}
\end{figure}

\textcolor{black}{For the inner optimization, we employed two methods, DNO and SFO. Appendix \ref{appendix: Convergence Behavior} presents representative convergence curves for a subset of equations. While not all cases are shown, the presented results are representative of the typical convergence behavior observed across our experiments.}

\section{Discussion}
\label{sec: Discussion}

\subsection{\textcolor{black}{$\Psi$ Transformation}}
\label{sec: transformation}
\textcolor{black}{In Section \ref{sec: Structural Space Formalization}, we formally introduce the definition of the $\Psi$ Transformation, and subsequently provide a detailed discussion of its advantages, its domain implications, and its limitations.}

\subsubsection{\textcolor{black}{Advantages}}
\label{sec: transformation advantages}
\textcolor{black}{The advantages in expressivity and complexity of this representation $\Psi$ have been discussed in Section \ref{sec: Structural Space Formalization}. Moreover, the representation $\Psi$ is beneficial when encoding network structures into label sequences. Since all activation functions in the network are unary operators, this unified representation ensures that the dimensions of operator inputs ($y^{(l)}$ in Equation \ref{eq: UniSymNet forward}) and outputs ($z^{(l)}$ in Equation \ref{eq: UniSymNet forward}) are identical. Consequently, all operators follow a uniform computational pattern. As a result, there is no need for operator-specific input handling during encoding, and the construction and parsing of label sequences become more straightforward.}

\subsubsection{\textcolor{black}{Domain Implications}}
\label{sec: domain implications}
\textcolor{black}{In Equation \ref{eq:binary_unification_constrained}, if the $\Psi$ transformation is applied in the order of $\exp(\ln(\cdot))$, it alters the domain of the original expression. For example, $\Psi(x_1 \times x_2) \triangleq \exp(\ln x_1 + \ln x_2)$. The implications of domain alteration on our method constitute an issue worthy of discussion.}

\textcolor{black}{It is important to emphasize that when generating the training data $\{(x_i,y_i)\}_i$, we did not generate them through the forward propagation of UniSymNet. Instead, the data were generated directly from the given function $f$. It means that for any input–output pair $\{(x_i,y_i)\}_i$, the label sequence used for Transformer training corresponds to the true equation structure, which is only transformed into the UniSymNet structure for representation. In other words, the $\Psi$ transformation influences only the representation of the equation for Transformer training and structure inference, without altering the correspondence between the data and the underlying function.}

\textcolor{black}{With this premise clarified, let us consider the impact of $\Psi$ on parameter optimization.}
\begin{itemize}
    \color{black}
    \item \textbf{Differentiable Network Optimization: }The DNO \\method, as it follows the forward propagation defined in Equation \ref{eq: UniSymNet forward}, is strongly affected by the domain restrictions of the $\ln$ operator, which often leads to numerical instability during training. In Section \ref{sec: Differentiable Network Optimization}, we introduce a regularized logarithm $ln_{reg}(x)$ and a penalty term in the loss function to discourage negative inputs to the $\ln$ operator. If the input data contains values in the positive domain, DNO generally works well; however, if the inputs lie entirely in the negative domain, DNO training will fail.
    \item \textbf{Symbolic Function Optimization: }Since the $\Psi$ transformation affects only the representation but not the data–function mapping, the network structure $S$ predicted by the Transformer can always be converted into equations $f_s$ without nested ${\ln, \exp}$ operations. SFO method merges nested $\exp$ and $\ln$ operators into a nonlinear binary operator during skeleton transformation, which helps avoid numerical instability. Consequently, in the SFO method, the domain restrictions introduced by $\Psi$ have no effect.
\end{itemize}
\textcolor{black}{Overall, when dealing with expressions that are valid over $\mathbb{R}$, the DNO method fails if the test data lie entirely in the negative domain, in which case the SFO method should be used for optimization. If the test data contain positive values, DNO tends to focus on the positive region to recover the target expression. Alternatively, the SFO method can be directly adopted as a substitute.}

\subsubsection{\textcolor{black}{Limitations}}
\label{sec: transformation limitations}
\textcolor{black}{
This representation \(\Psi\) is not entirely beneficial.
First, for equations involving functions such as \(\arcsin\), \(\arccos\), and \(\arctan\), it cannot provide an exact representation, although our method can identify suitable alternative expressions. Second, for some functions, the domain may change after applying the \(\Psi\) transformation, which can affect the performance of the DFO optimization method. In such cases, the SFO method can serve as an alternative, as discussed in Section \ref{sec: domain implications}}.

\subsection{Symbolic Trees vs. Unified Symbolic Networks}
\label{sec: Symbolic Trees vs. Symbolic Networks}
Symbolic trees represent mathematical expressions hierarchically, leaves represent operands, internal nodes denote operators, and edges define operational dependencies. However, UniSymNet represents mathematical expressions through forward propagation: input nodes encode variables, hidden layers apply operations, and edges carry learnable weights. \textcolor{black}{We now turn to a discussion of the differences between them.}

\textbf{Linear Transformation:} Symbolic Networks replace the $\{+,-\}$ operators with linear transformations, implying that the $\{+,-\}$ operators are no longer confined to binary operators but can achieve multivariate interactions, thereby adapting to high-dimensional feature spaces. The linear transformations in symbolic networks inherently constitute affine mappings in vector spaces\textcolor{black}{; therefore,} the expressive capacity of symbolic networks forms a superset encompassing all linear operations achievable by symbolic trees.  
For the $i$-th node in Equation \ref{eq: UniSymNet forward}, linear transformation can be written as:
\begin{equation}
    \label{eq: node forward}
     y_i^{(l)} = \sum_{j=1}^{d_{l-1}} W_{ij}^{(l)}z_j^{(l-1)} + b_i^{(l)}, \quad \forall i=1,2,\cdots,d_l,
\end{equation}
However, for the linear transformation in Equation \ref{eq: node forward}, symbolic tree representation requires constructing a binary tree structure for each $i$-th node. Its minimum depth grows as:
\[
\text{Depth}_{\text{tree}} = \left\lceil \log_2 \left( \sum_{j=1}^{d_{l-1}} \delta_{ij} \right) \right\rceil + 1,
\]
where $\delta_{ij} \in \{0,1\}$ indicates whether the connection $W_{ij}$ exists. 
\tikzset{
    inputnode/.style={circle, draw, fill=orange!10, minimum size=17pt, inner sep=0pt, font=\small},
    middlenode/.style={circle, draw, fill=blue!10, minimum size=17pt, inner sep=0pt, font=\small},
    same_module/.style={rectangle, draw, fill=green!10, minimum size=17pt, inner sep=0pt, font=\small},
    hiddennode/.style={circle, draw, fill=blue!10, minimum size=17pt, inner sep=0pt, font=\small},
    outputnode/.style={circle, draw, fill=green!10, minimum size=17pt, inner sep=0pt, font=\small},
    defnode/.style={minimum size=17pt, inner sep=0pt, font=\small},
    signal/.style={thick, ->, >=latex, color=gray!50} 
}
For example, Fig. \ref{fig: SymTree vs. SymNet} presents two different representations of the equation \( \sin(x_0 + x_1) + \cos(x_1) \). 
\begin{figure}[!h]
    \centering
    \begin{minipage}{0.18\textwidth}
        \centering
          \begin{tikzpicture}[
            edge from parent path={(\tikzparentnode.south) -- (\tikzchildnode.north)},
            every node/.style={draw, minimum size=6mm, inner sep=0pt},
            operator/.style={circle, fill=blue!10},
            leaf/.style={rectangle, fill=orange!10, text width=1.75cm, align=center, minimum height=8mm},
            var/.style={circle, fill=orange!10},
            level 1/.style={sibling distance=15mm, level distance=10mm}, 
            level 2/.style={sibling distance=15mm, level distance=10mm}, 
            level 3/.style={sibling distance=15mm, level distance=10mm}  
        ]
        \node [operator] {$+$}
          child {node [operator] {$\sin$}
            child {node [operator] {$+$}
              child {node [var] {$x_0$}}
              child {node [var] {$x_1$}}}}
          child {node [operator] {$\cos$}
            child {node [var] {$x_1$}}};
        \end{tikzpicture}
    \end{minipage}
    \hfill
    \begin{minipage}{0.20\textwidth}
    \centering
    \begin{tikzpicture}[shorten >=1pt,->, node distance=1.5cm and 1cm]
        \foreach \i/\name in {0/$x_1$, 1/$x_0$}
            \node[inputnode] (I\i) at (0, \i+1.5){\name} ;
       
        \foreach \i/\name in {0/$\sin$, 1/$\cos$}
            \node[middlenode] (H2\i) at (1.5, \i+1.5){\name} ;
        
        \node[hiddennode] (1) at (3, 2) {$id$};
        \draw[thick, ->, >=latex,  red!100] (I0) -- (H20); 
        \draw[thick, ->, >=latex,  red!100] (I1) -- (H21); 
        \draw[thick, ->, >=latex,  red!100] (I1) -- (H20); 
        \draw[thick, ->, >=latex,  red!100] (H20) -- (1);  
        \draw[thick, ->, >=latex,  red!100] (H21) -- (1); 
        \foreach \i in {0}
            \foreach \j in {1}
                \draw[signal] (I\i) -- (H2\j);
    \end{tikzpicture}
\end{minipage}
\caption{\textcolor{black}{Symbolic Tree vs. UniSymNet. For equation $ \sin(x_0 + x_1) + \cos(x_1)$, the depth of the symbolic tree is 3, whereas the number of hidden layers in our symbolic network is only 2.}}
\label{fig: SymTree vs. SymNet}
\end{figure}

\begin{figure}[!h]
    
    \begin{minipage}{0.15\textwidth}
          \begin{tikzpicture}[
            edge from parent path={(\tikzparentnode.south) -- (\tikzchildnode.north)},
            every node/.style={draw, minimum size=6mm, inner sep=0pt},
            operator/.style={circle, fill=blue!10},
            leaf/.style={rectangle, fill=green!10},
            var/.style={circle, fill=orange!10},
            level 1/.style={sibling distance=20mm, level distance=10mm}, 
            level 2/.style={sibling distance=10mm, level distance=12mm}, 
            level 3/.style={sibling distance=10mm, level distance=10mm}  
        ]
        \node [operator] {$+$}
          child {node [operator] {$op_1$}
            child {node [leaf] {$g(x_0)$}}
              }
          child {node [operator] {$op_2$}
            child {node [leaf] {$g(x_0)$}}
            child{node [var] {$x_0$}}
            };
        \end{tikzpicture}
    \end{minipage}
    \hfill
    \begin{minipage}{0.25\textwidth}
    \begin{tikzpicture}[shorten >=1pt,->, node distance=1.5cm and 1cm]
        \foreach \i/\name in {0/$x_0$}
            \node[inputnode] (I\i) at (0, \i+2){\name} ;
        \foreach \i/\name in {0/$id$}
            \node[middlenode] (H1\i) at (1.5, \i+1.5){\name} ;
        \foreach \i/\name in {1/$g(\cdot)$}
            \node[same_module] (H1\i) at (1.5, \i+1.5){\name} ;
        \foreach \i/\name in {0/$op_2$, 1/$op_1$}
            \node[middlenode] (H2\i) at (3, \i+1.5){\name} ;
        
        \node[hiddennode] (1) at (4, 2) {$id$};
        \draw[thick, ->, >=latex,  red!100] (I0) -- (H10); 
        \draw[thick, ->, >=latex,  red!100] (I0) -- (H11);
        \draw[thick, ->, >=latex,  red!100] (H11) -- (H21);  
        \draw[thick, ->, >=latex,  red!100] (H10) -- (H20); 
        \draw[thick, ->, >=latex,  red!100] (H11) -- (H20); 
        \draw[thick, ->, >=latex,  red!100] (H20) -- (1);  
        \draw[thick, ->, >=latex,  red!100] (H21) -- (1);

        \foreach \i in {0}
            \foreach \j in {1}
                \draw[signal] (H1\i) -- (H2\j);
      
    \end{tikzpicture}
\end{minipage}
\caption{\textcolor{black}{Symbolic Tree vs. UniSymNet. UniSymNet can achieve cross-layer connections of identical modules through the id operator, whereas a symbolic tree cannot.}}
\label{fig: Symbolic Tree vs. Unified Symbolic Network (id)}
\end{figure}
$\mathbf{id}$ \textbf{Operators:} In symbolic networks, the identity operator \(id\), which performs an identity transformation, facilitates cross-layer connections for identical modules, thereby eliminating redundant subtrees. In Fig. \ref{fig: Symbolic Tree vs. Unified Symbolic Network (id)}, we use \( g(\cdot) \) to represent the same module. As the complexity of \( g(\cdot) \) increases, the advantage of the \( id \) operator becomes increasingly significant.

\textbf{$\Psi$ Transformation:} Symbolic trees and EQL-type symbolic networks typically use standard binary operators. In the UniSymNet, we represent binary operators using nested unary operators, i.d. $\Psi$ transformation. A comprehensive discussion of this is given in Section \ref{sec: transformation}.

\subsection{Statistical Comparison of Complexity}

Since the combinations of operators in equations are highly diverse, it is theoretically challenging to compare the complexity of symbolic tree representations and the UniSymNet representations. Therefore, we conduct an empirical comparison of their complexities under controlled conditions.

\textcolor{black}{The structural complexity of symbolic trees and UniSymNet representations is evaluated using two fundamental measures:}
\begin{itemize}
    \item \textbf{Depth ($L$)}: For symbolic trees, $L$ denotes the longest path from the root node to any leaf node. In UniSymNet, it corresponds to the number of hidden layers.
    
    \item \textbf{Node count($N$)}: In symbolic trees, this enumerates all operators and operands within the expression. For UniSymNet, it counts the total nodes across all layers.
\end{itemize}
The overall complexity is positively correlated with the two measures mentioned above\textcolor{black}{; therefore,} we define complexity $C = {LN}$. Based on a given frequency distribution of operators, we compare the complexity of the two representations over 100,000 generated expressions. 
\begin{figure}[ht]
\centering
\includegraphics[width=0.48\textwidth]{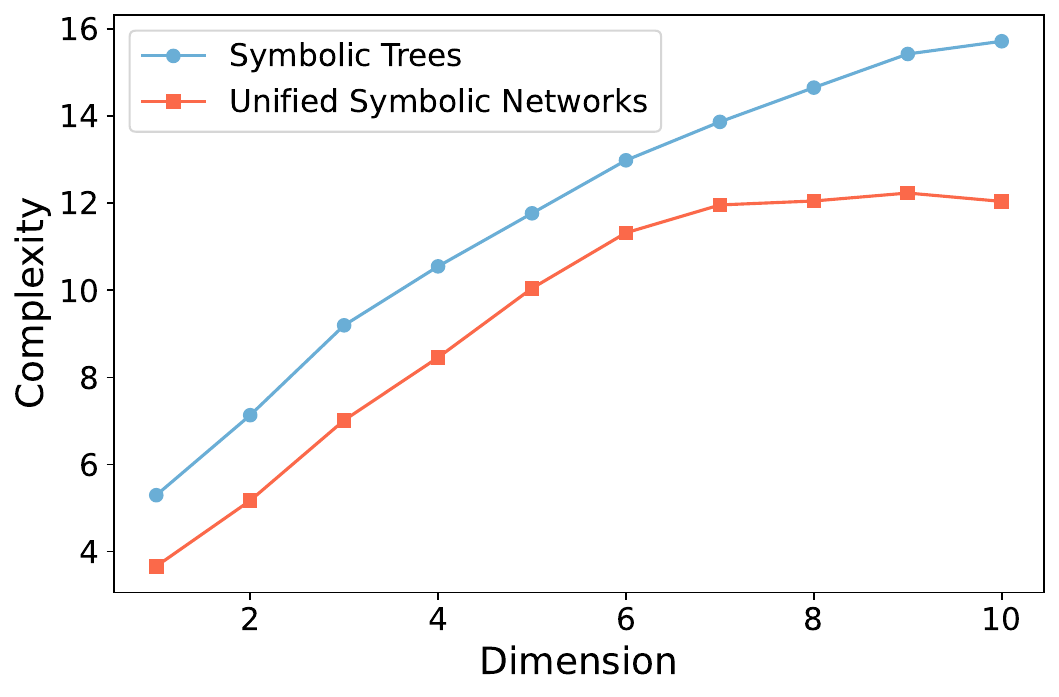}
\caption{Complexity comparison of two representation methods as the dimension varies.}
\label{fig: Complexity comparison}
\end{figure}
Fig. \ref{fig: Complexity comparison} presents the complexity comparison between the two representations. As the dimensionality increases, the complexity gap between UniSymNet and symbolic trees becomes more pronounced, highlighting the advantage of UniSymNet.
\section{Conclusion}
\label{sec: conclusion}

In conclusion, we propose a novel \textbf{Uni}fied \textbf{Sym}bolic \textbf{Net}work (UniSymNet), where a pre-trained Transformer model guides structure discovering, and then \textcolor{black}{adopts} objective-specific optimization strategies to learn the parameters of the symbolic network. Through experiments on both low-dimensional Standard Benchmarks and high-dimensional SRBench, we demonstrate the outstanding performance of UniSymNet, which not only maintains high fitting accuracy, symbolic solution rate\textcolor{black}{, }and extrapolation ability, but also ensures a relatively low equation complexity.

\textcolor{black}{In this work, UniSymNet is applied to Symbolic Regression tasks, but the framework is capable of addressing a broader range of problems. Its interpretable and flexible symbolic network design, together with the expressive $\Psi$ representation, provides a general mechanism for bridging symbolic structures and neural representations. Future research will explore extending UniSymNet to broader areas of scientific modeling. In particular, enriching the operator set with spatial and temporal differential operators such as $\partial_x$, $\partial_y$, and $\partial_t$, UniSymNet can be adapted for the discovery of governing equations. Furthermore, integrating UniSymNet with data-driven solvers would enable it to discover governing equations and solve them efficiently within a unified framework. These directions would equip UniSymNet with the capability to tackle complex modeling and simulation tasks, thereby advancing automated scientific discovery in domains governed by physical laws.}

\section*{CRediT authorship contribution statement}
\textbf{Xinxin Li:} Conceptualization, Formal analysis, Investigation, Methodology, Software, Visualization, Writing-original draft. \textbf{Juan Zhang:} Formal analysis, Investigation, Methodology, Resources, Supervision, Validation. \textbf{Da Li:} Formal analysis, Investigation, Methodology, Software. \textbf{Xingyu Liu:} Investigation, Software. \textbf{Jin Xu:} Formal analysis, Investigation. \textbf{Junping Yin:} Supervision, Project administration.


\section*{Acknowledgments}
This work was supported by the National Natural Science Foundation of China, No. 12201024.
\appendix
\section*{Appendix}
\section{Proof of Theorems}
\label{appendix: proof of Theorem}
In the UniSymNet, we transform the nonlinear binary operators \(\{\times, \div, \mathrm{pow}\}\) into nested unary operators through Equation \ref{eq: unified trans}. We demonstrate that in Theorem \ref{thm:exact-neural-rep}, under specific conditions, applying the mapping $\Psi$ can reduce both the depth and the number of nodes in a symbolic network, thereby lowering its overall complexity. \textcolor{black}{First, we introduce a lemma that establishes an important inequality, which will be used in the proof of Theorem \ref{thm:exact-neural-rep}.}
\begin{lemma}
\label{lemma: N_1,N_2}
For integers \( K \geq 2 \) and \( d \geq 2 \), the following inequality holds:  
\[
K\left(\sum_{i=1}^{\lceil \log_2{d} \rceil} \left\lceil \frac{d}{2^i} \right\rceil +1\right) \geq d + K +1.
\]    
\end{lemma}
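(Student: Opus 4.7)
The plan is to obtain a clean lower bound on $S(d):=\sum_{i=1}^{\lceil \log_2 d\rceil}\lceil d/2^i\rceil$ and then reduce the target inequality to a one-variable comparison in $d$ that is easy to dispatch by a short case analysis on $K$.

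First I would rewrite the desired inequality $K(S(d)+1)\ge d+K+1$ in the equivalent form $K\cdot S(d)\ge d+1$. The key estimate is the pointwise bound $\lceil d/2^i\rceil \ge d/2^i$, so dropping the ceilings gives a geometric series
\begin{equation*}
S(d)\ \ge\ \sum_{i=1}^{n} \frac{d}{2^{i}}\ =\ d\bigl(1-2^{-n}\bigr),\qquad n:=\lceil\log_2 d\rceil.
\end{equation*}
Since $n=\lceil\log_2 d\rceil$ implies $2^{n}\ge d$, we get $d-d\cdot 2^{-n}\ge d-1$, and hence the simple but sharp-enough lemma $S(d)\ge d-1$ for every $d\ge 2$, with equality precisely when $d$ is a power of two.

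Next I would plug this back. The target $K\cdot S(d)\ge d+1$ is then implied by $K(d-1)\ge d+1$, i.e.\ $(K-1)d\ge K+1$, i.e.\ $d\ge 1+\tfrac{2}{K-1}$. This holds whenever $K\ge 3$ and $d\ge 2$, or $K=2$ and $d\ge 3$. The remaining boundary instances (in particular $d=2$ with $K=2$, and any other tight cases the main bound $S(d)\ge d-1$ does not cover) I would handle by direct substitution: compute $S(d)$ explicitly (for example $S(2)=1$, $S(3)=3$, $S(4)=3$) and verify $K(S(d)+1)\ge d+K+1$ term by term. These checks are purely arithmetic.

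The main obstacle is that the bound $S(d)\ge d-1$ is attained exactly at $d=2^n$, so the slack $K\cdot S(d)-K(d-1)$ vanishes there, and the reduction $(K-1)d\ge K+1$ becomes tight at small $K,d$. If any edge case fails to be covered by the clean inequality, I would either (i)~sharpen the lower bound on $S(d)$ by separating odd and even summands (for odd $d$, at least one ceiling strictly exceeds its argument, yielding $S(d)\ge d$), or (ii)~absorb the deficit into the $+1$ inside the parenthesis, where the factor $K\ge 2$ already provides a uniform bonus of $K$ on the left. Either refinement plus explicit verification of the finitely many small cases finishes the proof.
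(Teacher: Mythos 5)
Your strategy is essentially the paper's: bound $S(d)=\sum_{i=1}^{\lceil\log_2 d\rceil}\lceil d/2^i\rceil$ below by the geometric series to get $S(d)\ge d-1$, reduce the target to the linear inequality $(K-1)d\ge K+1$, and dispatch the finitely many remaining pairs by hand. The problem is that the hand check cannot be completed. You correctly compute $S(2)=1$ (the sum has the single term $\lceil 2/2\rceil=1$), and at $(d,K)=(2,2)$ the claimed inequality reads $K\bigl(S(2)+1\bigr)=2\cdot 2=4\ge d+K+1=5$, which is false. Neither of your proposed rescues applies here: refinement (i) strictly improves the bound only for odd $d$, while $d=2$ is even and attains $S(d)=d-1$ exactly; refinement (ii) amounts to spending the extra $+K$ on the left, giving $2K\ge K+3$, i.e.\ $K\ge 3$, which still excludes $K=2$. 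So the "purely arithmetic" case analysis you defer to contains a case that genuinely fails rather than one that merely needs a sharper estimate.

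The reason the paper's own proof does not run into this is that it asserts $S(2)=2$ in its case check, which is an arithmetic error; with the correct value $S(2)=1$ the lemma as stated is simply false at $(d,K)=(2,2)$. (Your reduction shows it does hold for every other pair with $d\ge 2$, $K\ge 2$: the linear bound covers $K\ge3$ with $d\ge2$ and $K=2$ with $d\ge3$.) So your proposal is not completable as written --- not because the method is wrong, but because the statement needs repair, e.g.\ excluding $(d,K)=(2,2)$ or weakening the right-hand side. Carried out honestly, your own plan should terminate at the $d=2$, $K=2$ substitution with a counterexample rather than with an attempt to absorb the deficit.
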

\begin{proof}
  Let \( S(d) = \sum_{i=1}^{\lceil \log_2{d} \rceil} \left\lceil \frac{d}{2^i} \right\rceil \), then:
\[
S(d) \geq \sum_{i=1}^{\lceil \log_2{d} \rceil} \frac{d}{2^i} = d \cdot \left(1 - \frac{1}{2^{\lceil \log_2{d} \rceil}}\right).
\]  
Since \( 2^{\lceil \log_2{d} \rceil} \geq d \), we have \( \frac{1}{2^{\lceil \log_2{d} \rceil}} \leq \frac{1}{d} \), yielding:  
\begin{equation}
\label{eq: neq}
S(d) \geq d \left(1 - \frac{1}{d}\right) = d - 1.
\end{equation}
Substitute \( S(d) \geq d - 1 \) into the left-hand side (LHS) of the original inequality:  
\[
K \cdot S(d) + 1 \geq K(d - 1) + 1 = Kd - K + 1.
\]  
To prove \( Kd - K + 1 \geq d + K + 1 \), simplify:  
\[
Kd - K + 1 - d - K - 1 \geq 0 \quad \Rightarrow \quad d(K - 1) - 2K \geq 0.
\]  
Rearranging gives:  
$$
    d \geq \frac{2K}{K - 1}=2+\frac{2}{K-1}.
$$
Because $2<\frac{2K}{K-1}\le4$, if $d\ge 4$, the inequality holds. Then, we just need to verify the condition of $2\le d<4,K\ge2$:

\begin{enumerate}[label=(\roman*)]
    \item $d=2$: $S(d)=2, KS(d)=2K=K+K\ge 2+K$;
    \item $d=3$: $S(d)=3, KS(d)=3K>2K\ge 2+K.$
\end{enumerate}
Therefore, for all integers \( K \geq 2 \) and \( d \geq 2 \), the inequality \( K\left(\sum_{i=1}^{\lceil \log_2{d} \rceil} \left\lceil \frac{d}{2^i} \right\rceil \right) + 1 \geq d + K + 1 \) holds.  
\end{proof}

Second, we start to prove Theorem \ref{thm:exact-neural-rep}.
\begin{proof}
    Consider \(P(\mathbf{x})\) in $\mathbb{R}^d$ with $d\ge2$, 
    $$
    P(\mathbf{x}) = \sum_{k=1}^K c_k \prod_{i=1}^d x_i^{m_{ki}}, \quad X = (x_1, \ldots, x_d)
    $$
    where $m_{ki} \in \mathbb{R}$ denotes the exponent of variable $x_i$ in the $k$-th monomial.
    Applying the unified representation $\Psi$ defined in Equation \ref{eq: unified trans}, we obtain:
    \begin{equation}
    \label{eq: poly transformation}
    \begin{aligned}
    \Psi(P(\mathbf{x})) &= \Psi\left(\sum_{k=1}^K c_k \prod_{i=1}^d x_i^{m_{ki}}\right) \\
                 &= \sum_{k=1}^K c_k \exp\left(\sum_{i=1}^d m_{ki} \ln x_i\right)
    \end{aligned}
    \end{equation}
    The $\Psi$ enables representation through the following UniSymNet architecture:
    \begin{itemize}
        \item \textbf{Layer 1 ($\mathbf{ln}$ operators)}: Compute $\{\ln x_i\}_{i=1}^d$ using $d$ nodes.
        \item \textbf{Layer 2 ($\mathbf{exp}$ operators)}: Compute $\exp\Bigl(\sum_{i=1}^d m_{ki}\ln x_i\Bigr)$ using $K$ nodes.
        \item \textbf{Layer 3 ($\mathbf{id}$ operators)}: if $K=1$, this layer is not necessary. Otherwise, a $\mathrm{id}$ node is needed to apply the linear combination. 
    \end{itemize} 
    Consequently, Any symbolic function \(P(\mathbf{x})\) satisfying the above conditions can be exactly represented by a UniSymNet with depth \(L_1\) and at most \(N_1\) nodes, where
    $$
\begin{cases}
    L_1=2, N_1 = d+1,      & \text{if } K=1, \\
    L_1=3 ,N_2=d+K+1,     & \text{else } K\ge 2.
\end{cases}
$$
If we use normal binary operators in EQL-type networks , the representation of $P(\mathbf{x})$ through the following  architecture:
\begin{itemize}
    \item \textbf{Section 1 ($\mathbf{(\cdot)^k}$ operators)}: given the condition: $$\exists (k_0, i_0),\ s.t. \ m_{k_0 i_0} \neq 1,$$ at least one additional layer with power-function activations is necessary to construct non-unit exponents. if $K=1$, there are $d$ nodes in the layer. However, if $K\ge 2$, since the depth required for each item may be inconsistent, such as $x_1x_2$and $x_1^2x_2$, exploring the number of nodes is complex. Therefore, if $K\ge 2$, we will ignore the number of nodes required for this section.
    \item \textbf{Section 2 ($\mathbf{\times}$ operators)}:  \( \forall k, \forall i,\, m_{ki} \neq 0 \), the monomial \( \prod_{i=1}^d x_i^{m_{ki}} \) inherently involves multiplicative interactions across all \( d \) variables. Since the multiplication operator \( \times \) is strictly binary, constructing such a monomial requires a network depth of \( \lceil \log_2{d} \rceil \), with each layer \( i \) containing \( K \lceil \frac{d}{2^i} \rceil \) nodes to iteratively merge pairs of terms.  
    \item \textbf{Section 3 ($\mathbf{id}$ operators)}:  if $K=1$, this layer is not necessary. Otherwise, a $\mathrm{id}$ node is needed to apply the linear combination. 
\end{itemize}
Consequently, Any symbolic function \(P(\mathbf{x})\) satisfying the above conditions can be exactly represented by a EQL-type network with depth \(L_2\) and at least \(N_2\) nodes, where
$$
\begin{cases}
    L_2=\lceil \log_2{d} \rceil + 1, N_2 = \sum_{i=1}^{\lceil\log_2 d\rceil} \Bigl\lceil \tfrac{d}{2^i}\Bigr\rceil + d,      & \text{if } K=1, \\
    L_2=\lceil \log_2{d} \rceil + 2 ,N_2 \ge \sum_{i=1}^{\lceil\log_2 d\rceil} \Bigl\lceil \tfrac{d}{2^i}\Bigr\rceil + 1,     & \text{else } K\ge 2.
\end{cases}
$$
When $K=1$, if $d\ge 2$, then $L_2=\lceil \log_2{d} \rceil+1 \ge 2=L_1$. According to Equation \ref{eq: neq}, we have $N_2\ge d+d-1\ge d+1=N_1$; When $K\ge 2$, if $d\ge 2$, then $L_2=\lceil \log_2{d} \rceil+2 \ge 3=L_1$. On the basis of Lemma \ref{lemma: N_1,N_2}, we have $N_2\ge N_1$. That is to say, when the above conditions are met, when representing the same $P(\mathbf{x})$, the depth and the number of nodes of UniSymNet are less than those of EQL.
    
\end{proof}

Finally, we will prove Theorem \ref{thm:composition-generalization}. 
\begin{proof}
By replacing $x_i$ in Theorem \ref{thm:exact-neural-rep} with the function $g_i(x)$, this is essentially treating it as the first layer input of a neural network. 

Let $P(\mathbf{x})$ require a UniSymNet with depth $L_{10}$ and node count $N_{10}$. Then the total depth is $L_1 = L_{10} + \max\{L_{1i}\}$, and the total node count is $N_1 = N_{10} + \sum_{i} N_{1i}$. For the EQL-type symbolic network, let $P(\mathbf{x})$ require a depth of $L_{20}$ and a node count of $N_{20}$. Then, the total depth is $L_2 = L_{20} + \max\{L_{2i}\}$, and the total node count is $N_2 = N_{20} + \sum_{i} N_{2i}$. 

Assuming the hypothesis in Theorem 1 holds, then $L_{10} \leq L_{20}$ and $N_{10} \leq N_{20}$. Therefore,
\begin{equation}
\begin{aligned}
    L_1 &= L_{10} + \max\limits_{i}\{L_{1i}\} \leq L_{20} + \max\limits_{i}\{L_{2i}\} = L_2, \\
    N_1 &= N_{10} + \sum_{i} N_{1i} \leq N_{20} + \sum_{i} N_{2i} = N_2.
\end{aligned}
\end{equation}

\end{proof}

\section{Algorithms for Label Encoding}
\label{appendix: Algorithms for Label Encoding}
To implement label encoding, it is crucial to convert function expressions in string form into a network structure. For this purpose, we employ the following Algorithm \ref{alg:uninet_structure}.

\color{black}{
\begin{algorithm}[!h]
\color{black}
\caption{\textcolor{black}{Structure Identification for UniSymNet}}
\label{alg:uninet_structure}
\begin{algorithmic}[1]
    \STATE{\bfseries Input:} Symbolic expression $expr$, dimension $d$
    \STATE{Initialize} UniSymNet's structure list $\mathcal{L}$
    \STATE $\mathcal{L} \gets \left[ \left[ \text{id} \right] \right]$ 
    \STATE $(\mathcal{L}_{op}, \mathcal{L}_{expr}) \gets   \textsc{SymbolicDecompose}(expr)$ 

    \STATE Append $\mathcal{L}_{op}$ to $\mathcal{L}$ 
    \WHILE{$\mathcal{L}_{expr} \neq []$}
        \STATE $\mathcal{T}_{op} \gets [],\ \mathcal{T}_{expr} \gets []$ 
        \FOR{$\sigma \in L_{expr}$} 
            \STATE $(\tau_{op}, \tau_{expr}) \gets \textsc{SymbolicDecompose}(\sigma)$
            \STATE Append $\tau_{op}$ to $\mathcal{T}_{op}$, Append $\tau_{expr}$ to $\mathcal{T}_{expr}$
        \ENDFOR
        \STATE Append $\mathcal{T}_{op}$ to $\mathcal{L}$ 
        \STATE $\mathcal{L}_{op} \gets \textsc{Flatten}(\mathcal{T}_{op}),\ \mathcal{L}_{expr} \gets \textsc{Flatten}(\mathcal{T}_{expr})$
    \ENDWHILE
    \STATE{\bfseries Output:}  $\mathcal{L}$
\end{algorithmic}
\end{algorithm}
}

\textcolor{black}{For the \textsc{SymbolicDecompose} algorithm in Algorithm \ref{alg:uninet_structure}, we provide a detailed explanation in Algorithm \ref{alg:string-to-subtree}. In Algorithm \ref{alg:string-to-subtree}, the \textsc{SymbolicDecompose} algorithm employs \textsc{GetOuterOp} to extract the outermost operator and \textsc{RemoveOuterOp} to return the operand associated with that operator. \emph{Marking $e_i$ as a bias or variable term} means that bias indicates the presence of additive or subtractive constant terms, whereas variable indicates the occurrence of $x_i$ variables.}

\color{black}{
\begin{algorithm}[!h]
\color{black}
\caption{\textsc{\textcolor{black}{SymbolicDecompose}}}
\label{alg:string-to-subtree}
\begin{algorithmic}[1]
\STATE{\bfseries Input:} Expression string $expr$
\STATE{\bfseries Initialize:}
\STATE Basic unary operators $\mathcal{O}_u \gets \{\sin, \cos, \exp, \log\}$
\STATE Binary operators $\mathcal{O}_b \gets \{\times, \div, \mathrm{pow}\}$ 
\STATE Special operators $\mathcal{O}_s \gets \{\mathrm{abs}, \mathrm{cosh}, \mathrm{sinh}\}$ 
\STATE $\mathcal{L}_{op}\gets [~]$, \quad $\mathcal{L}_{expr} \gets [~]$
\STATE Split $expr$ into parallel subexpressions $\{e_1,\dots,e_n\}$
\FOR{$e_i$ in $\{e_1,\dots,e_n\}$}
    \STATE Remove leading coefficients from $e_i$
    \STATE $f \gets \textsc{GetOuterOp}(e_i)$
    \IF{$f \in \mathcal{O}_s$}
        \IF{$f = \mathrm{abs}$}
            \STATE $f\gets \log$, $e_i \gets \Psi(e_i)$ 
        \ELSE
            \STATE $f\gets \exp,\exp$
        \ENDIF
    \ELSIF{$f \in \mathcal{O}_b$}
        \STATE $f\gets \exp$, $e_i \gets \Psi(e_i)$
    \ELSIF{$f \in \mathcal{O}_u$}
        \STATE $f\gets f$
    \ELSE
        \STATE Mark $e_i$ as bias or variable term

    \ENDIF
    \STATE Append $f$ to $\mathcal{L}_{op}$
    \STATE $e_i \gets \textsc{RemoveOuterOp}(e_i,f)$
    \STATE Append $e_i$ to $\mathcal{L}_{expr}$
\ENDFOR
\STATE{\bfseries Output:}  $\mathcal{L}_{op}, \mathcal{L}_{expr}$
\end{algorithmic}
\end{algorithm}
}

\textcolor{black}{To better illustrate this process, we provide an example. Consider the expression $\frac{x_0^2}{x_1}$. The iterative calls of \textsc{SymbolicDecompose} produce the following substructure:
\begin{equation*}
\begin{aligned}
\text{Step=0: }&\mathcal{L}^{(0)}_{\mathrm{op}} = [\mathrm{id}] \\
\text{Step=1: }&\mathcal{L}^{(1)}_{\mathrm{op}} = [\exp], 
\ \mathcal{L}^{(1)}_{\mathrm{expr}} = [ 2\log x_0 - \log x_1 ] \\
\text{Step=2: }&\mathcal{T}_{op}: [[log, log]],\ \mathcal{T}_{expr} [[x_0, x_1]],\\
&\mathcal{L}^{(2)}_{\mathrm{op}} = [\log, \log], 
\ \mathcal{L}^{(2)}_{\mathrm{expr}} = [ x_0, x_1 ] \\
\text{Step=3: }&  \mathcal{T}_{op}: [[x_0], [x_1]],\ \mathcal{T}_{expr}:[[~]],[~]], \\
&\mathcal{L}^{(3)}_{\mathrm{op}}= [x_0, x_1], \ \mathcal{L}^{(3)}_{\mathrm{expr}} = [~]
\end{aligned}
\end{equation*}
which yields the final result:
\[
\mathcal{L} = \big[ [\mathrm{id}],\ [\exp],\ [[\log,\log]],\ [[x_0],[x_1]] \big],
\]
The network's connectivity is defined by the operator list $\mathcal{L}$, which is read from left to right, with the output structure S being its reverse. Each sublist in $\mathcal{L}$ represents a layer: \([ [x_0], [x_1] ]\) indicates that two separate inputs feed into two $\log$ nodes in the next layer; \([ [\log, \log] ]\) means that The outputs from two $\log$ nodes are both connected to a single $\exp$ node; and \([\exp]\) represents the connection to the final output node.
}

{\color{black}
\section{Description of the Testing Data}
\label{appendix: Description of the Testing Data}
\subsection{Artificial DataSet in Ablation Studies}
\label{appendix: Artificial DataSet in Ablation Studies}
In the ablation study section, we randomly generated 200 equations with the constraint that the input dimension $d\leq 4,\ L\le 6$. Fig. \ref{fig: 3D_scatter} depict the relationship among the input dimension, label length, and the number of hidden layers within our dataset.

\begin{figure}[!h]
    \centering
        \includegraphics[width=\linewidth]{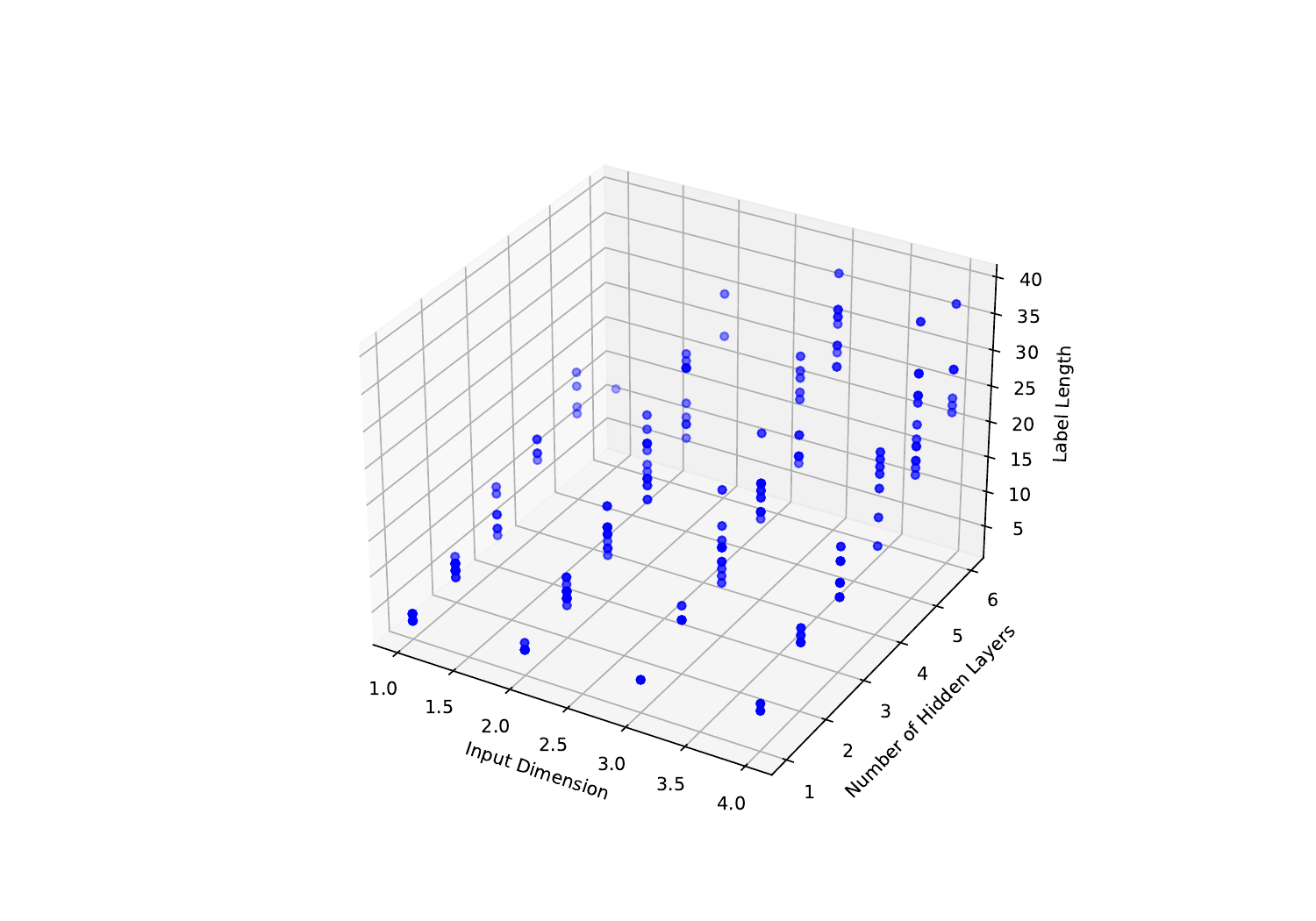}
        \caption{3D scatter plot of artificial dataset.}
        \label{fig: 3D_scatter}    
\end{figure}

\subsection{Standard Benchmarks}
\label{appendix: Standard Benchmarks}

\textcolor{black}{The Standard Benchmarks consist of eight datasets, each defined by equations with at most two input variables \(( d \leq 2 )\). Training data are obtained by evaluating these equations on randomly sampled inputs, with a fixed seed to ensure reproducibility (see Table \ref{Standard Benchmarks.}).} In contrast, testing sets consist of 100 points sampled from the same intervals but with a different random seed, ensuring an independent evaluation of model performance. For a fair comparison, the dataset used in all methods was generated under the same seed.

\subsection{SRBench}
\label{appendix: SRBench}
\textcolor{black}{The SRBench benchmarks consist of two datasets, Feynman and Strogatz, each defined by equations with input dimensions ranging from 2 to 10 (\( 2 \leq d \leq 10 \)).} The Feynman dataset comprises a total of 119 equations sourced from Feynman Lectures\citep{udrescu2020ai} on \textcolor{black}{the} Physics database series. The Strogatz dataset contains 14 SR problems sourced from the
ODE-Strogatz database.

\section{Comprehensive Overview of Baselines}
\label{Appendix: Comprehensive Overview of Baselines}

\subsection{In-domain Methods}
\label{appendix: In-domain Methods}

Symbolic regression methods can be divided into two main types: learning-from-scratch and learning-with-experience. The former learns each equation from scratch without any prior knowledge, while the latter uses pre-trained knowledge to guide the learning of new equations. Our in-domain baseline \textcolor{black}{includes} six methods, with EQL and DySymNet being learning-from-scratch methods, while the remaining three are learning-with-experience methods. Table \ref{tab: Comprehensive Overview of Baselines.} provides a detailed description of the in-domain methods.

\subsection{Out-of-domain Methods}
\label{appendix: Out-of-domain Methods}
We present concise descriptions of the 14 out-of-domain methods employed by SRBench, as shown in Table \ref{tab: Comprehensive Overview of Baselines.}. These 14 baseline methods include both (GP)-based methods and (DL)-based methods. The final experimental results of these methods are taken from SRBench \citep{la2021contemporary}.

\section{Computing Infrastructure}
\label{appendix: Computing Infrastructure}
The experiments in this work were executed on an Intel(R) Xeon(R) Platinum 8468H CPU @ 2.10GHz, 1.5T RAM equipped with two NVIDIA H800 GPUs 80GB. 
}

\section{Hyperparameter Settings}
\label{appendix: Hyperparameter Settings}
\subsection{\textcolor{black}{Initialization, Loss Functions in SFO method}}
\label{appendix: Initialization, Loss Functions}
\textcolor{black}{We compare four initialization methods on the Standard Benchmarks.}
\begin{itemize}
    \color{black}
    \item \textbf{Random: }Randomly samples from a normal distribution.
    \item \textbf{Grid search: }Systematically samples values from a predefined grid, ensuring exhaustive coverage at high computational cost.
    \item \textbf{Latin hypercube sampling: }Uses a space-filling strategy to sample parameter ranges without redundancy, achieving better coverage with fewer samples.
    \item \textbf{Metaheuristics: }Employs an evolutionary strategy with selection, crossover, mutation, and replacement.
\end{itemize}

\textcolor{black}{As shown in Table \ref{tab: Comparison of different initialization methods.}, the fitting accuracy, symbolic solution rate, and expression complexity vary only slightly across initialization methods. However, grid search yields lower accuracy on datasets such as Nguyen, Constant, and Livermore, likely due to locally suboptimal starting points. More importantly, all three methods significantly reduce testing time compared with random initialization, indicating that while initialization does not affect model capacity, it can accelerate convergence by providing better starting points. Overall, \textbf{Latin hypercube sampling} proves to be the most efficient choice, offering the highest optimization efficiency with comparable performance.}

\textcolor{black}{We adopt MSE as the primary loss function and further investigate the impact of alternative loss functions, specifically Huber loss and Quantile loss. Their definitions are given below:}
\begin{itemize}
    \color{black}
    \item \textbf{Huber Loss: }Given the residual $d = y - \hat{y}$ and a threshold parameter $\delta > 0$, the Huber Loss is defined as:
    $$
    L_{\delta}(d) =
    \begin{cases}
    \frac{1}{2} d^2, & \text{if } |d| \leq \delta \\
    \delta \cdot \big(|d| - \tfrac{1}{2}\delta \big), & \text{if } |d| > \delta
    \end{cases}
    $$
    The overall loss function (averaged over all samples) is given by:
    $$
    \mathcal{L}_{Huber}(y, \hat{y}) = \frac{1}{N} \sum_{i=1}^{N} L_{\delta}(d_i).
    $$The hyperparameter was set to $\delta=10$ in our experiments.
    \item \textbf{Quantile Loss: }Given a quantile parameter $\tau \in (0,1)$ and the residual $d = y - \hat{y}$, 
    the Quantile Loss is defined as:
    $$
    L_{\tau}(d) =
    \begin{cases}
    \tau \cdot d, & \text{if } d \geq 0 \\
    (\tau - 1) \cdot d, & \text{if } d < 0
    \end{cases}
    $$
    
    The overall loss function is given by:
    $$
    \mathcal{L}_{Quantile} = \frac{1}{N} \sum_{i=1}^{N} L_{\tau}(d_i).
    $$The hyperparameter was set to $\tau=0.5$ in our experiments.
\end{itemize}
\textcolor{black}{The results are presented in Table \ref{tab:loss_comparison_delta}. We observed that replacing MSE with alternative loss functions resulted in a marginal reduction in the $R^2$ score. This outcome is expected, as using MSE as the loss function is effectively equivalent to directly optimizing for $R^2$, thereby yielding the highest fitting accuracy. Nevertheless, when applying the Quantile loss, the true equation $sin(x) + sin(y^2)$ was correctly recovered in the Nguyen dataset. It indicates that robust loss functions can facilitate the discovery of the exact underlying equations.}

\subsection{Sensitivity Analysis in SFO method}
\label{appendix: Sensitivity Analysis in SFO method}
\textcolor{black}{To investigate the effects of the temperature schedule in Gumbel-Softmax and the entropy coefficient, we systematically compared three schedules: constant, linear, and exponential, and three entropy values: 0.005, 0.01, and 0.05. Results on the Nguyen dataset (Table \ref{tab: Sensitivity analysis of the parameters}) indicate that the symbolic solution rate is considerably more sensitive to these hyperparameters than the average $R^2$. For $R^2$, we provide confidence intervals to reflect variability, whereas the solution rate is reported as the proportion of correctly recovered equations. Among all settings, the exponential temperature schedule combined with an entropy coefficient of 0.005 delivers the most favorable trade-off between symbolic discovery and fitting accuracy. 
}

{\color{black}
\subsection{Total Hyperparameter Settings}
\label{appendix: Total Hyperparameter Settings}
For our method, the hyperparameters used in experiments are summarized in Table \ref{tab:Hyperparameters Setting.}, including parameters in the data generation, pre-training, beam search,  and inner optimization process. \textcolor{black}{Regarding the choice of optimization strategies across different datasets, we adopt SFO for the low-dimensional datasets in the Standard Benchmarks, where the target equations are relatively simple; this ensures low complexity expressions and a high symbolic solution rate. In the SRBench datasets, where some equations involve up to ten variables and are inherently more complex, the ODE dataset still relies entirely on SFO since it contains at most two variables. In contrast, a subset of the Feynman equations is optimized using DNO-NP, as listed in Table \ref{tab: dnonp_examples}.} \textcolor{black}{Overall, SFO proves more suitable for Symbolic Regression, and DNO-NP is only employed when the fitting performance of SFO is unsatisfactory.}

For other methods: EQL\footnote{https://github.com/martius-lab/EQL.git}, NeSymRes\footnote{https://github.com/SymposiumOrganization/NeuralSymbolicRegression\\ThatScales.git}, End-to-end\footnote{https://github.com/facebookresearch/symbolicregression.git}, DySymNet\footnote{https://github.com/AILWQ/DySymNet.git}, TPSR\footnote{https://github.com/deep-symbolic-mathematics/TPSR.git}, \textcolor{black}{ParFam \footnote{\textcolor{black}{https://github.com/Philipp238/parfam}}} and 14 out-of-methods\footnote{https://github.com/cavalab/srbench.git}, we used the standard hyperparameters provided in the open-source implementations. 

\section{Detailed Results}
\label{appendix: Detailed Results}
\subsection{\textcolor{black}{Representative Examples}}
\label{appendix: Representative Examples}
 \textcolor{black}{To better illustrate representative examples of the discovered equations from the benchmark datasets, we divide the presentation into two parts: Standard Benchmarks (Table \ref{tab: Representative examples in Standard_benchmarks.}) and SRBench (Table \ref{tab: Representative examples in SRBench.}). In the tables, Sol indicates whether the symbolic solution criterion is satisfied: T denotes that the definition of symbolic solution is met, while F denotes that it is not.}
\subsection{Extrapolation Experiments}
\label{appendix: Extrapolation Experiments}
\begin{figure}[!h]
  \centering
  \begin{minipage}{0.23\textwidth}
  \centering
    \includegraphics[width=\textwidth]{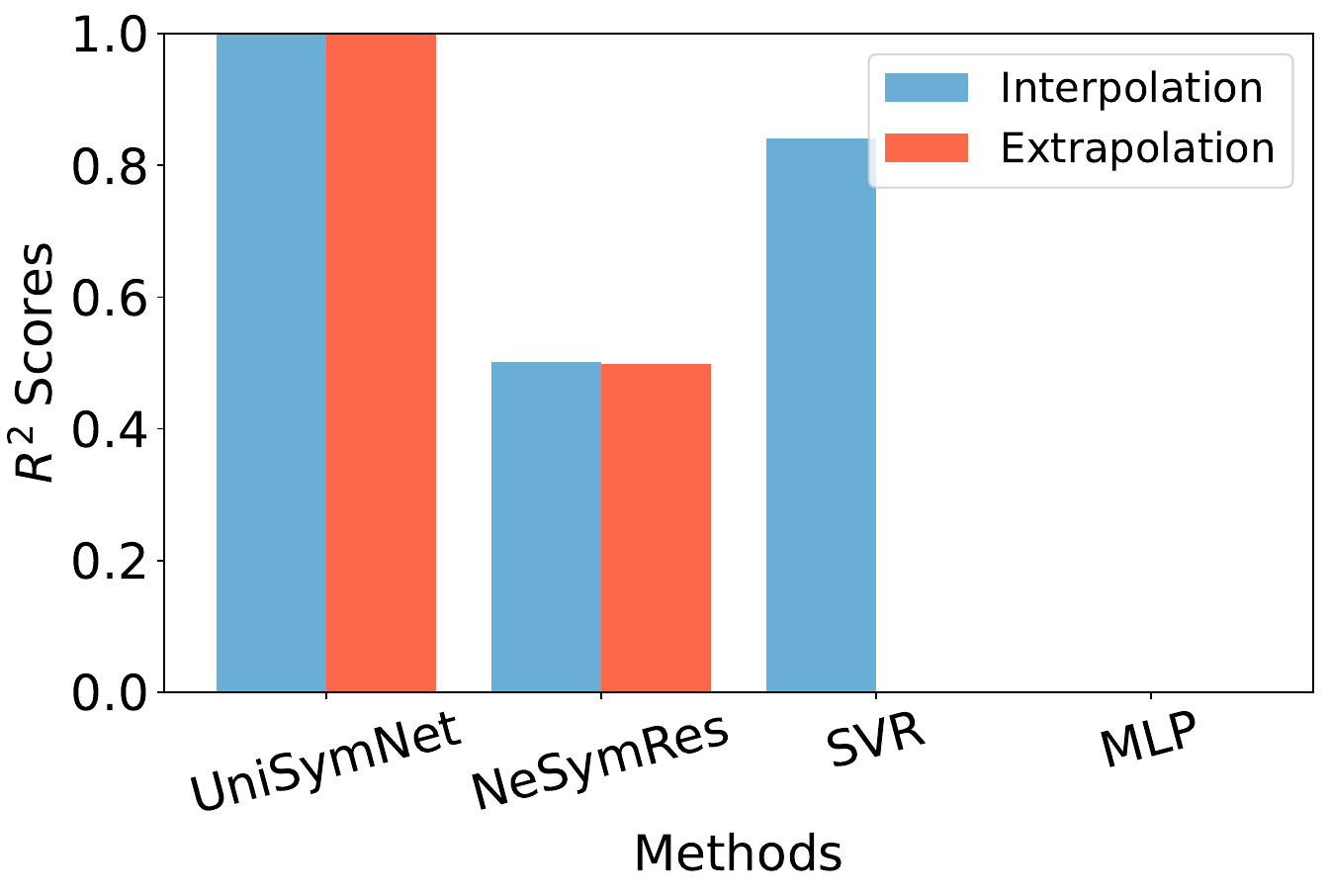} \\
    Koza
  \end{minipage}
  \begin{minipage}{0.23\textwidth}
  \centering
    \includegraphics[width=\textwidth]{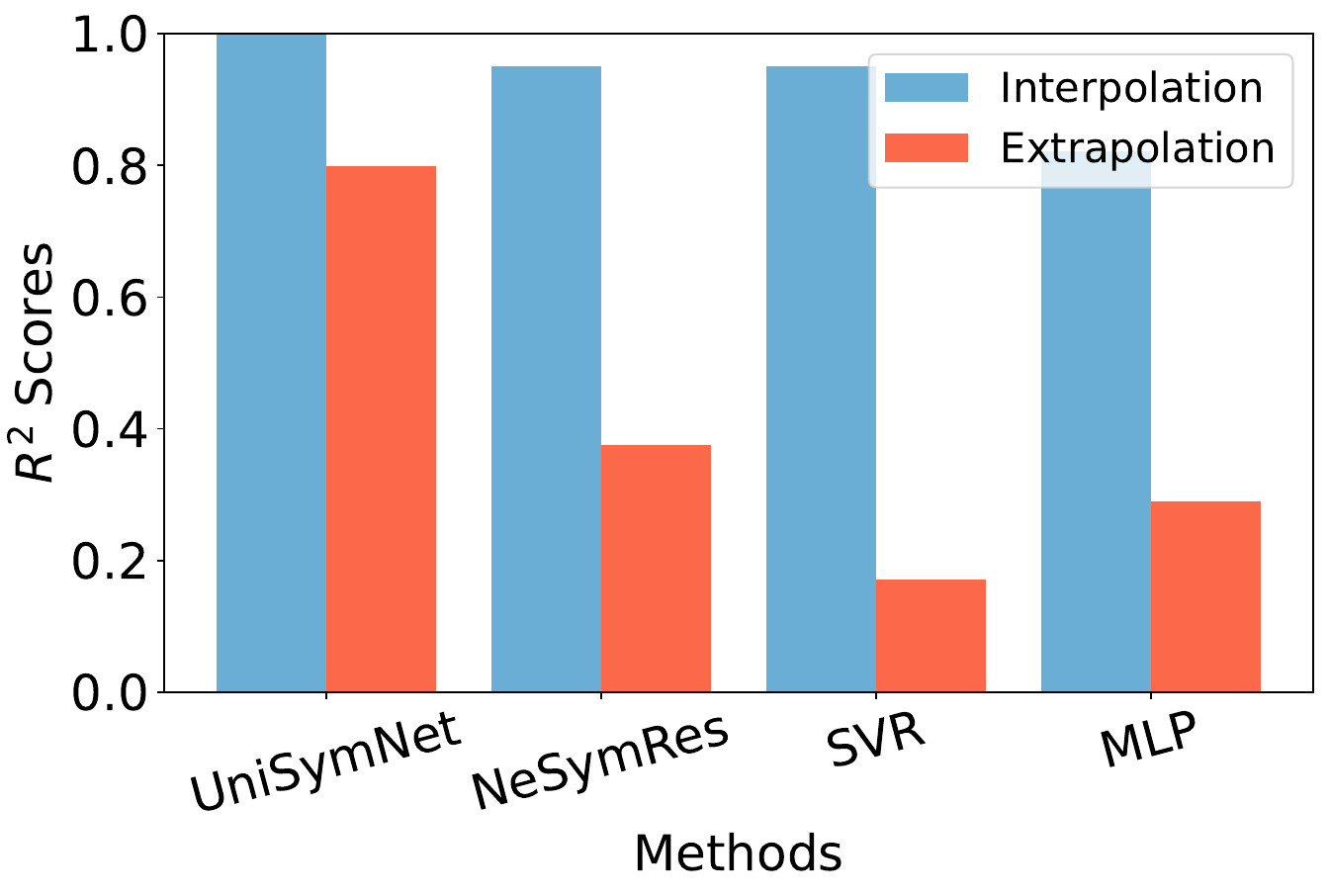}\\
    Nguyen*
  \end{minipage}

  
  \begin{minipage}{0.23\textwidth}
  \centering
    \includegraphics[width=\textwidth]{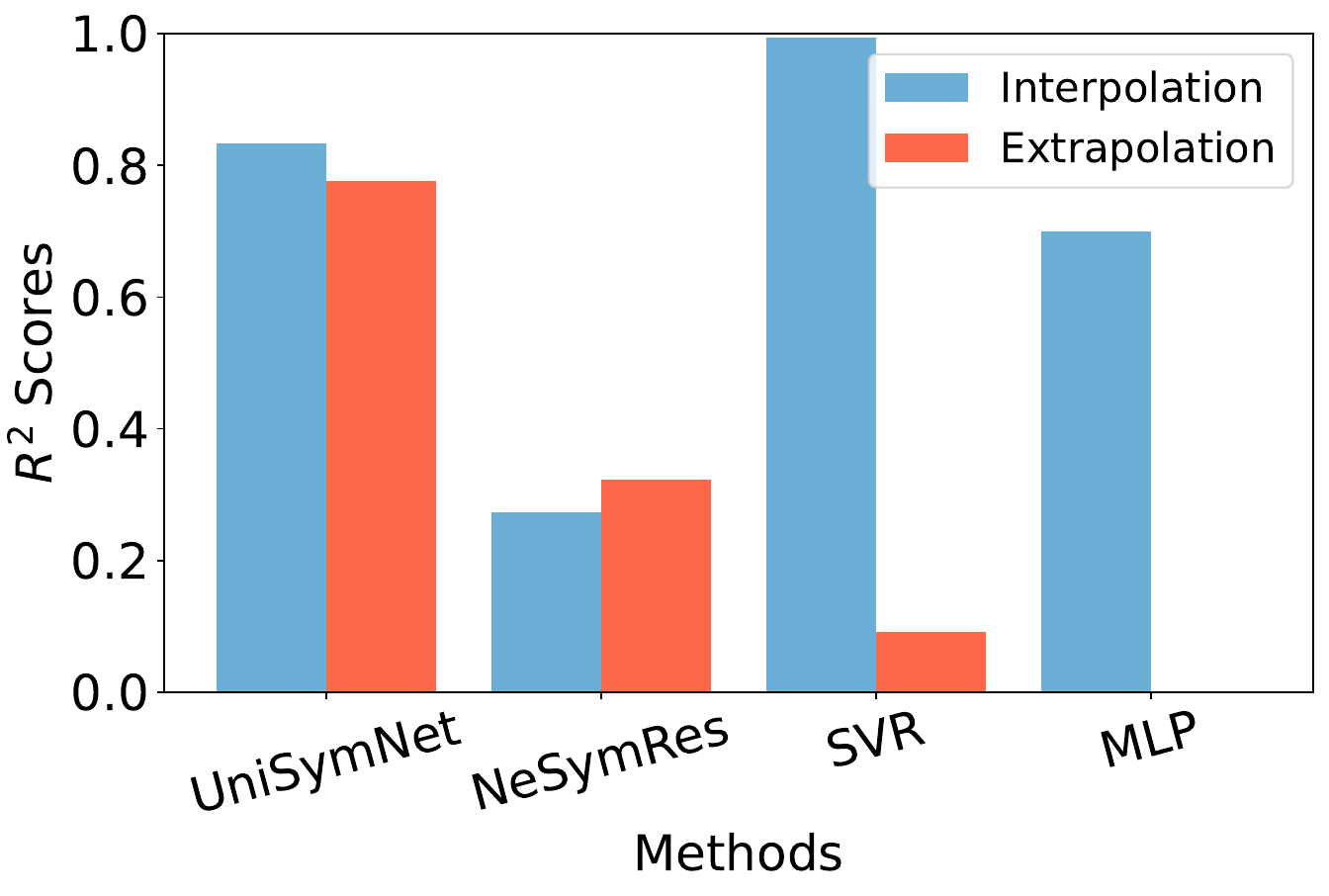}\\
    Jin
  \end{minipage}
  \begin{minipage}{0.23\textwidth}
  \centering
    \includegraphics[width=\textwidth]{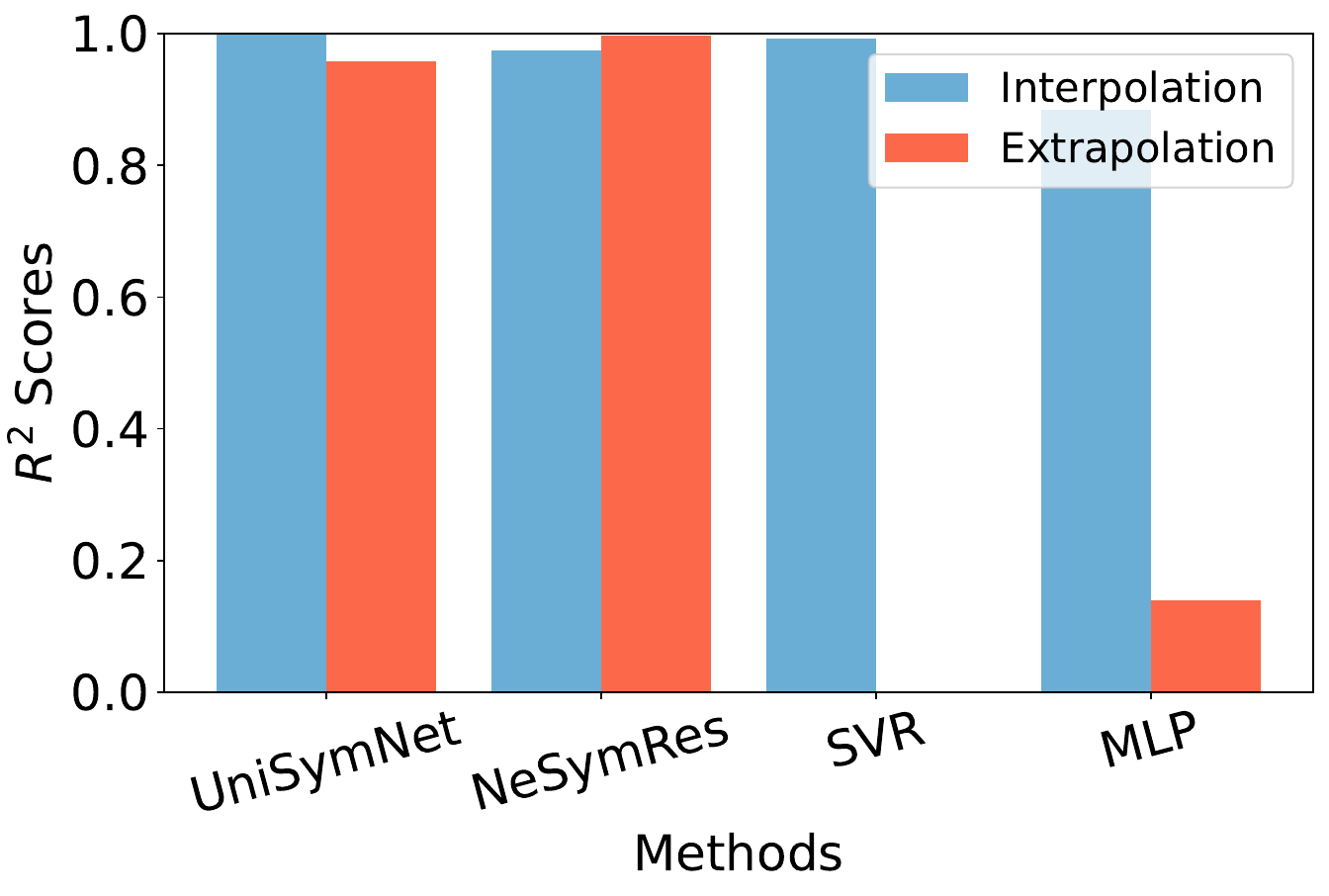}\\
    R
  \end{minipage}

  \caption{Extrapolation ability comparison.} 
  
  \label{fig: Extrapolation ability comparison: part 2.}
\end{figure}

In Fig. \ref{fig: Extrapolation Experiment: part 1}, we present a comparison of the extrapolation capabilities of in-domain methods on the Nguyen, Keijzer, Livermore, and Constant datasets. As shown in Fig. \ref{fig: Extrapolation ability comparison: part 2.}, our method, UniSymNet, shows superior extrapolation performance compared to black-box models on the Koza, Nguyen\*, and Jin datasets.
}
\subsection{\textcolor{black}{Convergence Behavior}}
\label{appendix: Convergence Behavior}
\textcolor{black}{For the inner optimization, we adopt two methods, DNO and SFO.}

\textcolor{black}{\textbf{DNO method:} In the ablation study with 200 test equations, four representative cases were randomly selected to illustrate the loss evolution during training. Fig. \ref{fig:loss_dno_np} presents the results of the DNO-NP method without pruning, whereas Fig. \ref{fig:loss_dno_p} shows the DNO-P method with pruning applied. Both methods achieve convergence within 100 training epochs, with DNO-P exhibiting a comparatively smoother convergence behavior.}
\begin{figure}[!h]
    \centering
    \includegraphics[width=0.95\linewidth]{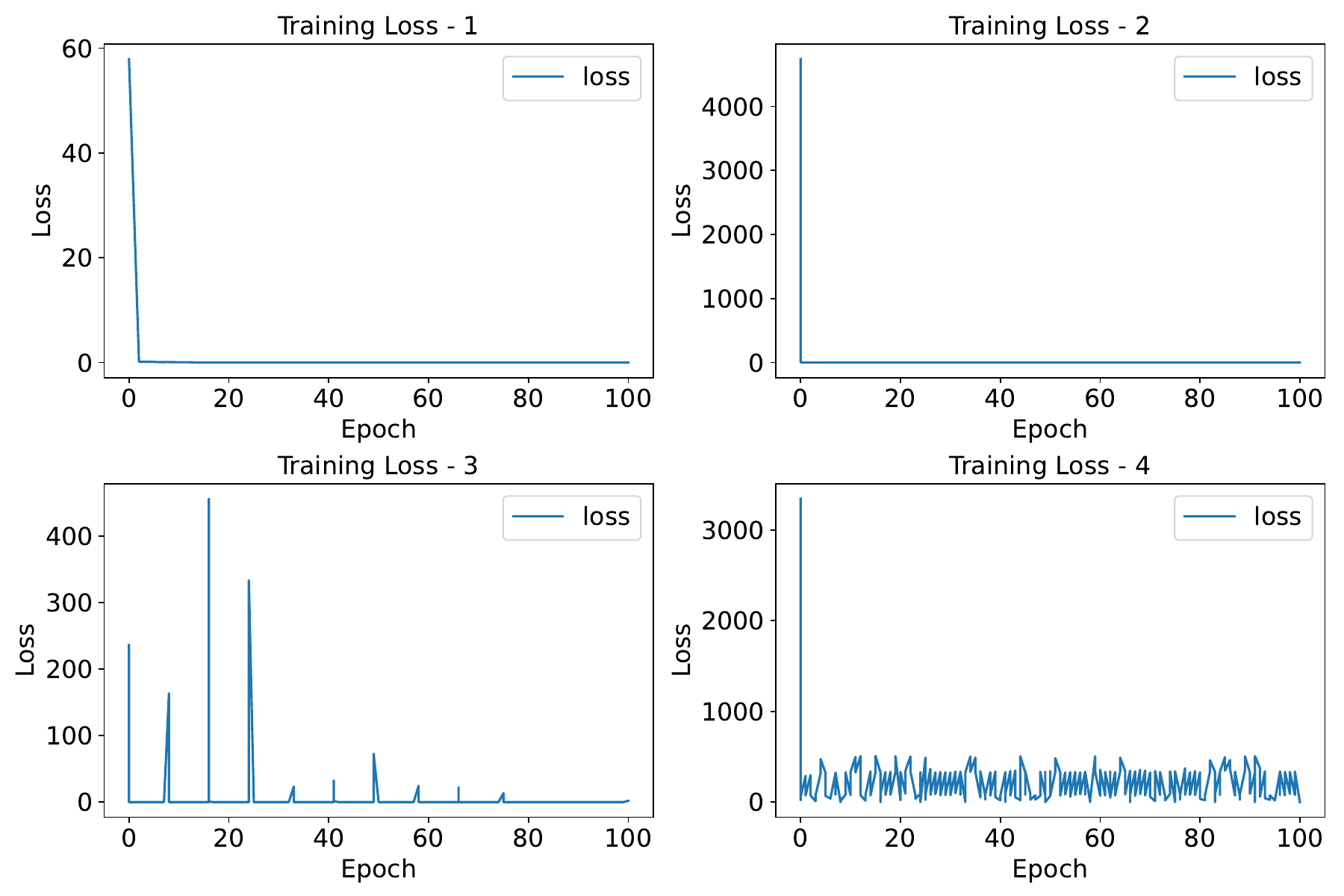}
    \caption{\textcolor{black}{Inner optimation convergence curve(DNO-NP).}}
    \label{fig:loss_dno_np}
\end{figure}
\begin{figure}[!h]
    \centering
    \includegraphics[width=0.95\linewidth]{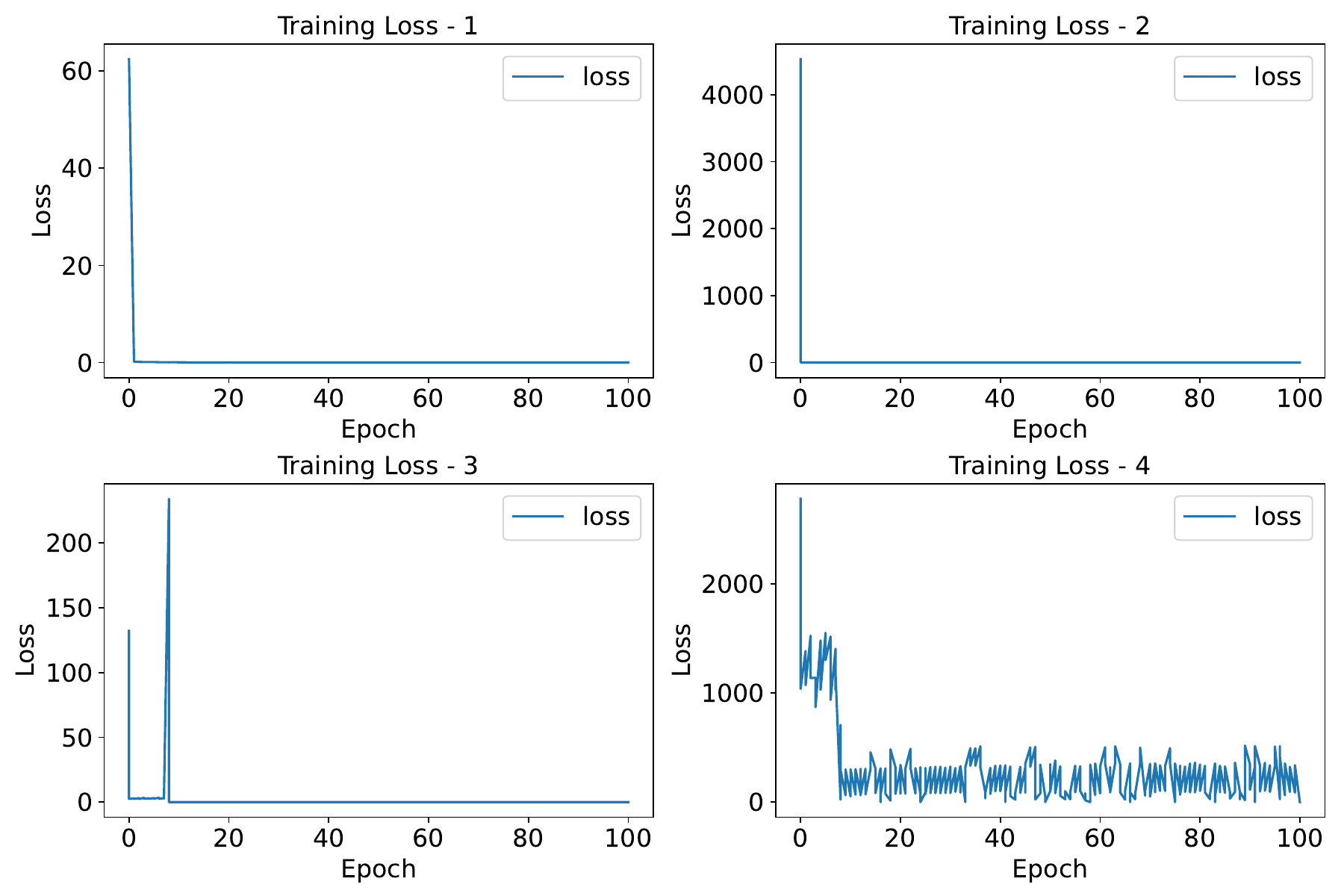}
    \caption{\textcolor{black}{Inner optimation convergence curve(DNO-P).}}
    \label{fig:loss_dno_p}
\end{figure}

\textcolor{black}{\textbf{SFO method:} As an illustrative case, we consider the equation $\frac{x_1}{2.3x_2+4.6}+\sin x_0$. Fig. \ref{fig:loss_sfo} shows the evolution of the reward under the SFO optimization strategy, where the model essentially converges by around step 50. Fig. \ref{fig:loss_bfgs} further provides a more fine-grained view of the BFGS optimization process at steps 20, 40, 100, and 200, demonstrating how the loss progressively decreases to lower values at an accelerating rate.}
 \begin{figure}[!h]
    \centering
    \includegraphics[width=0.85\linewidth]{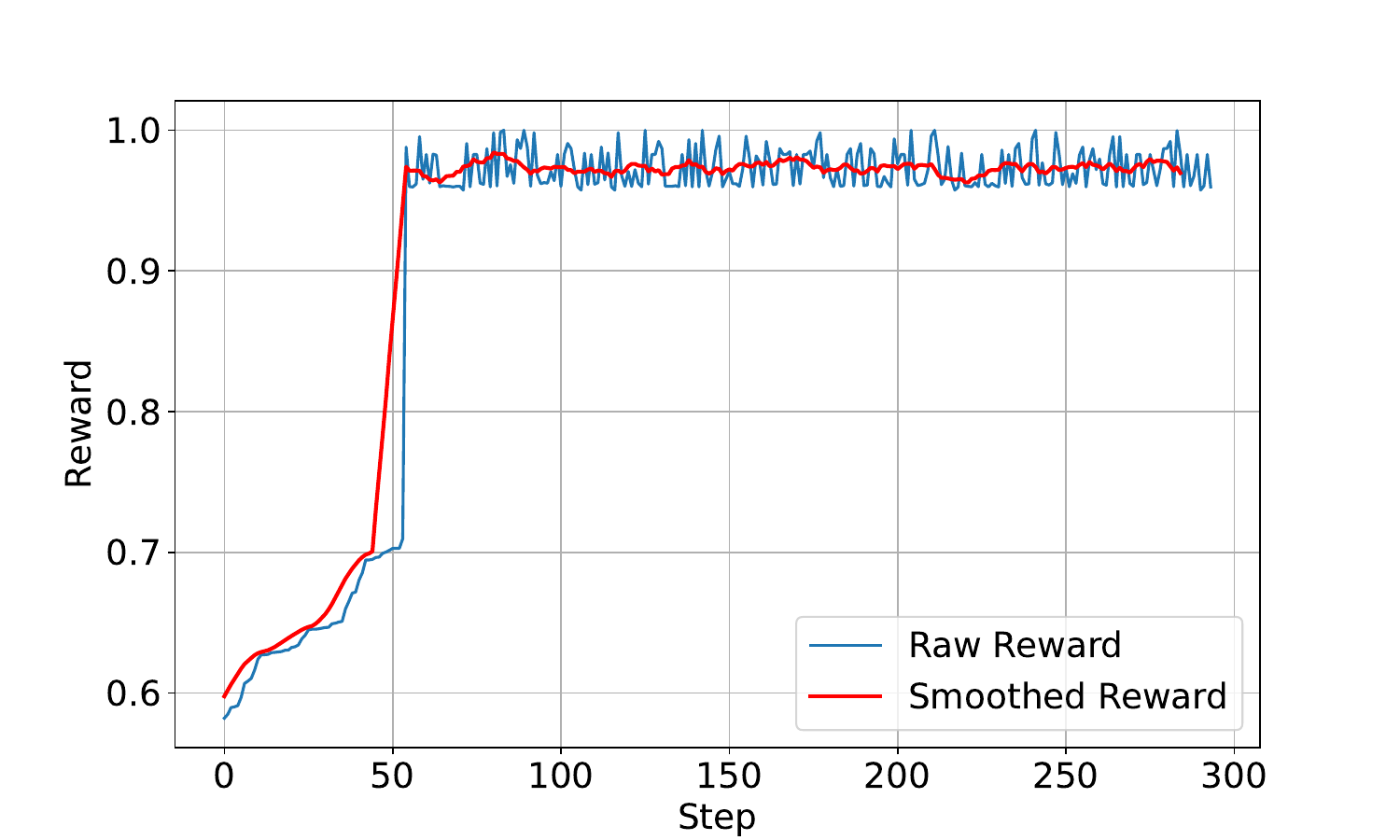}
    \caption{\textcolor{black}{Inner optimation convergence curve(SFO).}}
    \label{fig:loss_sfo}
\end{figure}
\begin{figure}[!h]
    \centering
    \includegraphics[width=0.95\linewidth]{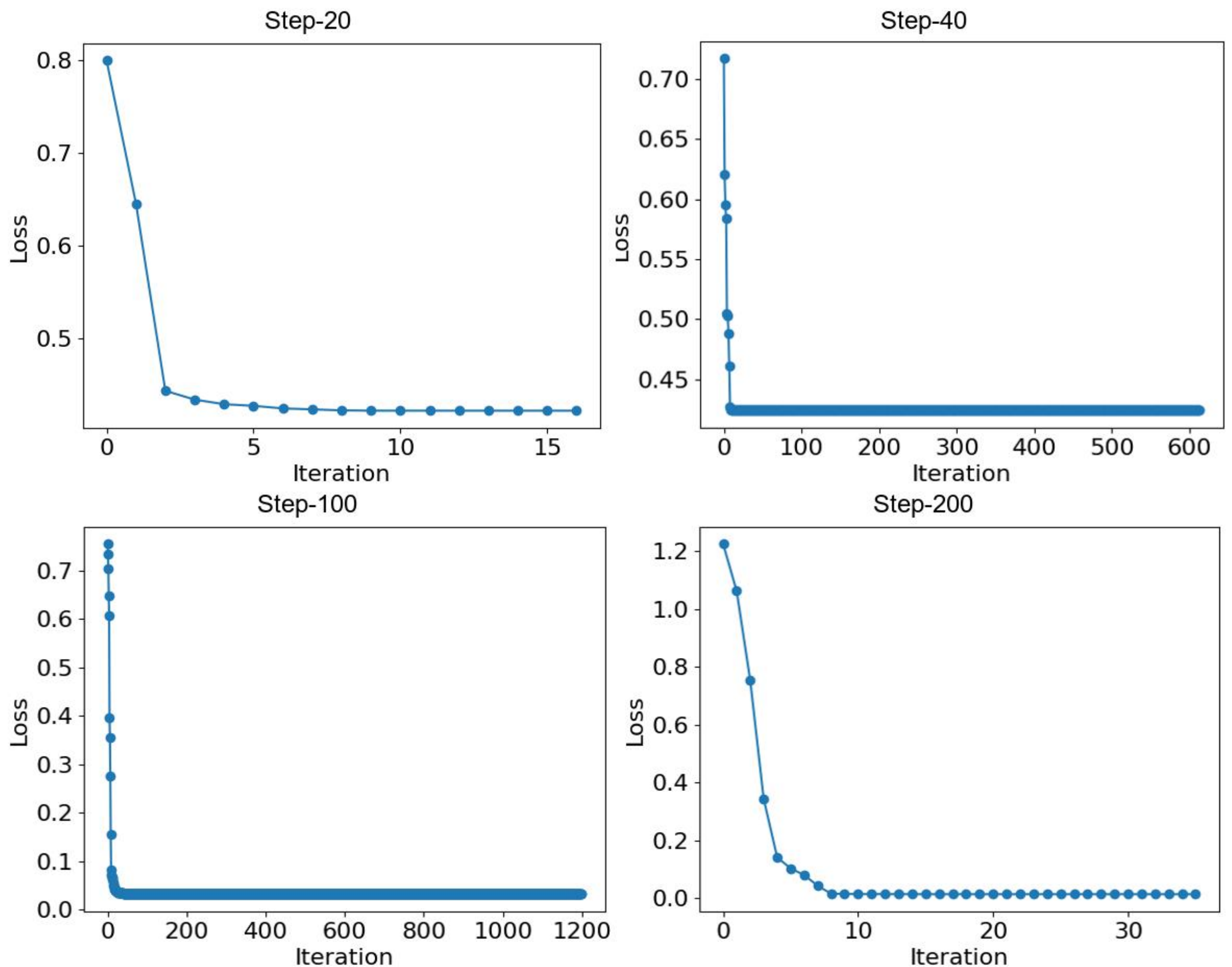}
    \caption{\textcolor{black}{BFGS optimation convergence curve.}}
    \label{fig:loss_bfgs}
\end{figure}

\begin{table*}[!h]
\centering
\caption{Standard Benchmarks.}
\label{Standard Benchmarks.}
\scalebox{1}{
\begin{tabular}{@{}lllll@{}}
\toprule
Benchmark&Name & Expression&Name & Expression \\ 
\toprule
\multirow{6}{*}{\textbf{Nguyen}}&Nguyen-1 & $x^3 + x^2 + x$ & Nguyen-2 & $x^4 + x^3 + x^2 + x$ \\
&Nguyen-3 & $x^5 + x^4 + x^3 + x^2 + x$ &Nguyen-4 & $x^6 + x^5 + x^4 + x^3 + x^2 + x$ \\
&Nguyen-5 & $\sin(x^2) \cos(x) - 1$ & Nguyen-6 & $\sin(x) + \sin(x + x^2)$ \\
&Nguyen-7 & $\log(x+1) + \log(x^2+1)$ & Nguyen-8 & $\sqrt{x}$ \\
&Nguyen-9 & $\sin(x) + \sin(y^2)$ & Nguyen-10 & $2\sin(x) \cos(y)$ \\
&Nguyen-11 & $x^y$ &Nguyen-12 & $x^4 - x^3 + \frac{1}{2}y^2 - y$ \\
\midrule
\multirow{3}{*}{\textbf{Nguyen*}}&Nguyen-1c & $3.39x^3 + 2.12x^2 + 1.78x$ &Nguyen-5c & $\sin(x^2) \cos(x) - 0.75$ \\
&Nguyen-7c & $\log(x+1.4) + \log(x^2+1.3)$&Nguyen-8c & $\sqrt{1.23x}$ \\
&Nguyen-10c & $\sin(1.5x) \cos(0.5y)$& &\\
\midrule
\multirow{4}{*}{\textbf{Constant}}&Constant-1 & $3.39x^3 + 2.12x^2 + 1.78x$ &Constant-2 & $\sin(x^2) \cos(x) - 0.75$ \\
&Constant-3 & $\sin(1.5x) \cos(0.5y)$ & Constant-4 & $2.7xy$ \\
&Constant-5 & $\sqrt{1.23x}$ &Constant-6 & $x^{0.426}$\\
&Constant-7 & $2\sin(1.3x)\cos(y)$ & Constant-8 & $\log(x+1.4) + \log(x^2+1.3)$ \\
\midrule
\multirow{6}{*}{\textbf{Keijzer}}&Keijzer-3 & $0.3 \cdot x \cdot \sin(2\pi x)$ &Keijzer-4 & $x^3 \exp(-x) \cos(x) \sin(x) (\sin(x^2) \cos(x)- 1)$ \\
&Keijzer-6 & $\frac{x(x+1)}{2}$ &Keijzer-7 & $\ln x$ \\
&Keijzer-8 & $\sqrt{x}$&Keijzer-9 & $\log(x + \sqrt{x^2 + 1})$ \\
&Keijzer-10 & $x^y$ &Keijzer-11 & $xy + \sin((x-1)(y-1))$ \\
&Keijzer-12 & $x^4 - x^3 + \frac{y^2}{2} - y$ &Keijzer-13 & $6\sin(x)\cos(y)$ \\
&Keijzer-14 & $\frac{8}{2 + x^2 + y^2}$ &Keijzer-15 & $\frac{x^3}{5} + \frac{y^3}{2} -y - x$\\
\midrule
\multirow{11}{*}{\textbf{Livermore}}&Livermore-1 & $\frac{1}{3} + x + \sin(x^2)$ &Livermore-2 & $\sin(x^2)\cos(x) - 2$ \\
&Livermore-3 & $\sin(x^3)\cos(x^2) - 1$&Livermore-4 & $\log(x+1) + \log(x^2+1) + \log(x)$\\
&Livermore-5 & $x^4 - x^3 + x^2 - y$ &Livermore-6 & $4x^4 + 3x^3 + 2x^2 + x$\\
&Livermore-7 & $\sinh(x)$ &Livermore-8 & $\cosh(x)$ \\
&Livermore-9 & $x^9 + x^8 + x^7 + x^6 + x^5 + x^4 + x^3 + x^2 + x$ &Livermore-10 & $6\sin(x)\cos(y)$ \\
&Livermore-11 & $\frac{x^2 + y^2}{x + y}$ & Livermore-12 & ${\frac{x^4}{y^4}}$ \\
&Livermore-13 & $x^{\frac{2}{3}}$ &Livermore-14 & $x^3 + x^2 + x + \sin(x) + \sin(x^2)$ \\
&Livermore-15 & $x^{\frac{1}{5}}$ &Livermore-16 & $x^{\frac{2}{5}}$ \\
&Livermore-17 & $4\sin(x)\cos(y)$&Livermore-18 & $\sin(x^2)\cos(x) - 5$ \\
&Livermore-19 & $x^5 + x^4 + x^2 + x$ &Livermore-20 & $\exp(-x^2)$ \\
&Livermore-21 & $x^8 + x^7 + x^6 + x^5 + x^4 + x^3 + x^2 + x$&Livermore-22 & $\exp(-0.5x^2)$\\
\midrule
\multirow{3}{*}{\textbf{Jin}}&Jin-1 & $2.5x^4 - 1.3x^3 + 0.5y^2 - 1.7y$&Jin-2 & $8.0x^2 + 8.0y^3 - 15.0$\\
&Jin-3 & $0.2x^3 + 0.5y^3 - 1.2y - 0.5x$ &Jin-4 & $1.5\exp(x) + 5.0\cos(y)$\\
&Jin-5 & $6.0\sin(x)\cos(y)$ &Jin-6 & $1.35xy + 5.5\sin(x - 1.0)(y - 1.0)$\\
\midrule
\multirow{2}{*}{\textbf{R}}&R-1 & $\frac{(x+1)^3}{x^2 -x + 1}$ &R-2 & $\frac{x^5 - 3x^3 + 1}{x^2 + 1}$ \\
&R-3 & $\frac{x^6 + x^5}{x^4 + x^3 + x^2 + x + 1}$ &&\\
\midrule
\multirow{1}{*}{\textbf{Koza}}&Koza-2 & $x^5 - 2x^3 + x$ &Koza-4 & $x^6 - 2x^4 + x^2$\\
\bottomrule
\end{tabular}}
\end{table*}

\begin{table*}[!ht]
    \centering
    \caption{Comprehensive Overview of Baselines.}
    \label{tab: Comprehensive Overview of Baselines.}
    \begin{tabular}{c p{12.5cm}}
        \toprule
        \multicolumn{2}{c}{\textbf{In-domain Methods}}\\
        \toprule
        \textbf{Method} & \textbf{Description} \\
        \midrule
        EQL & A method\citep{sahoo2018learning} that directly trained a fully connected neural network without any guidance on changing the symbolic network framework. \\
        \midrule
        DySymNet &A RL-based method\citep{li2024neural} uses a controller for sampling various architectures of symbolic network guided by policy gradients.  \\
        \midrule
        NeSymRes & A Transformer-based method\citep{biggio2021neural} that used a pre-trained model to obtain the equation's skeleton, treating operators (e.g., '+', '-', 'exp', 'log') as tokens. \\
        \midrule
        End-to-end & A Transformer-based method\citep{kamienny2022end} that didn't rely on skeleton structures, using a hybrid symbolic-numeric vocabulary that encoded both symbolic operators and variables, as well as numeric constants. \\
        \midrule
        TPSR & A Transformer-based method\citep{shojaee2023transformer} for Symbolic Regression that incorporates Monte Carlo Tree Search planning algorithm into the transformer
        decoding process.\\ 
        \midrule
        \textcolor{black}{ParFam} &\textcolor{black}{A novel algorithm\citep{scholl2025parfam} that leverages the inherent structural patterns of physical formulas to transform the original discrete optimization problem into a continuous one, making the search process more efficient and tractable.}\\
        \toprule
        \multicolumn{2}{c}{\textbf{Out-of-domain Methods}} \\
        \toprule
        \textbf{Method} & \textbf{Description} \\
        \midrule
        BSR & A method \citep{jin2019bayesian} generates models as a linear combination of symbolic trees using a Markov Chain Monte Carlo algorithm. \\
        \midrule
        AIFeynman & A divide-and-conquer method \citep{udrescu2020ai} uses neural networks to identify modularities and decompose problems into simpler subproblems. \\
         \midrule
        AFP & A method \citep{schmidt2010age} incorporates model age as an objective to prevent premature convergence and bloat. \\
        \midrule
        AFP$\_$FE & A method combines AFP with Eureqa’s method for fitness estimation. \\
        \midrule
        EPLEX & A parent selection method \citep{la2016epsilon} filters models through randomized subsets of training cases, rewarding those that excel in challenging data regions.\\
        \midrule
        FEAT& A method \citep{lalearning} focuses on discovering simple solutions that generalize well by storing solutions with accuracy-complexity trade-offs.\\
        \midrule
        FFX & A method \citep{mcconaghy2011ffx} initializes a population of equations, selects the Pareto optimal set, and returns a single linear model by treating the population as features.\\
        \midrule
        GP(gplearn) & A method initializes a random population of models and iteratively applies selection, mutation, and crossover operations.\\
        \midrule
        GP-GOMEA & A method\citep{virgolin2021improving} that adapts recombination by modeling interdependencies and recombining components to preserve collaboration.\\
        \midrule
        DSR & A method \citep{petersen2020deep} uses RL to train a generative RNN model of symbolic expressions, employing a risk-seeking policy gradient to bias the model towards exact expressions.\\
        \midrule
        ITEA & A method \citep{de2020discovery} where each model is an affine combination of interaction-transformation expressions, comprising a unary transformation function and a polynomial interaction function.\\
        \midrule
        MRGP & A method\citep{arnaldo2014multiple} where the entire program trace, including each subfunction of the model, is decomposed into features and used to train a Lasso model.\\
        \midrule
        Operon& A method \citep{kommenda2020parameter} integrates nonlinear least squares constant optimization through the Levenberg-Marquardt algorithm within a GP framework.\\
        \midrule
        SBP-GP & A method \citep{virgolin2019linear} refines SBP-based recombination by dynamically adjusting outputs via affine transformations.\\
        \bottomrule
    \end{tabular}
\end{table*}
\begin{table*}[!]
\color{black}
\centering
\caption{\textcolor{black}{Comparison of different initialization methods. The results with the highest accuracy and symbolic solution rate, as well as the shortest testing time, in each dataset are highlighted.}}
\label{tab: Comparison of different initialization methods.}
\renewcommand{\arraystretch}{0.8} 
\scalebox{1}{
\begin{tabular}{cccccc}
\toprule
\textbf{Benchmark} & \textbf{Metrics} & \textbf{Random} & \textbf{Grid search} & \textbf{Latin hypercube} & \textbf{Evolutionary} \\
\toprule
\multirow{4}{*}{\textbf{Koza}} & \emph{Com} & 13.50 & 13.00 & \textbf{13.50} & 14.50 \\
\cmidrule(lr){2-6}
& \emph{Acc} & 1.0000 & 1.0000 & \textbf{1.0000} & 1.0000  \\
\cmidrule(lr){2-6}
& \emph{Sol} & 0.5000 & 0.5000 & \textbf{0.5000} & 0.5000 \\
\cmidrule(lr){2-6}
& \emph{Time} & 81.03 & 37.54 & \textbf{37.23} & 46.15 \\
\midrule
\multirow{4}{*}{\textbf{Keijzer}} & \emph{Com} & 10.00 & \textbf{10.50} & 10.16 & 10.25 \\
\cmidrule(lr){2-6}
 & \emph{Acc} & 1.0000 & \textbf{1.0000} & 1.0000  & 1.0000 \\
\cmidrule(lr){2-6}
 & \emph{Sol} & 0.4167 & \textbf{0.4167} & 0.4167 & 0.4167 \\
 \cmidrule(lr){2-6}
& \emph{Time} & 78.45 & \textbf{72.69} & 74.91 & 75.98 \\
\midrule
\multirow{4}{*}{\textbf{Nguyen}} & \emph{Com} & 13.08 & 13.17 & \textbf{11.50} & 13.17\\
\cmidrule(lr){2-6}
& \emph{Acc} & 0.9167 & 0.8333 & \textbf{0.9167} & 0.9167\\
\cmidrule(lr){2-6}
 & \emph{Sol} & 0.2500 & 0.1667 & \textbf{0.2500} & 0.2500\\
 \cmidrule(lr){2-6}
& \emph{Time} & 87.28 & 75.35 & \textbf{71.22} & 72.01 \\
\midrule
\multirow{4}{*}{\textbf{Nguyen*}} & \emph{Com} & 11.80 & 9.40 & \textbf{11.80} & 11.80\\
\cmidrule(lr){2-6}
 & \emph{Acc} & 1.0000 & 0.8000 & \textbf{1.0000} & 1.0000\\
 \cmidrule(lr){2-6}
& \emph{Sol} & 0.4000 & 0.2000 & \textbf{0.4000} & 0.4000\\
\cmidrule(lr){2-6}
& \emph{Time} & 75.38 & 54.82 & \textbf{50.40} & 52.01\\
\midrule
\multirow{4}{*}{\textbf{R}} & \emph{Com} & 19.33 & 19.33 & 20.00 & 19.33 \\
\cmidrule(lr){2-6}
 & \emph{Acc} & 1.0000 & 1.0000 & 1.0000 & 1.0000\\
 \cmidrule(lr){2-6}
& \emph{Sol} & 0.3333 & 0.3333 & 0.3333 & 0.3333 \\
\cmidrule(lr){2-6}
& \emph{Time} & 100.00 & 100.00 & 100.00 & 100.00 \\
\midrule
\multirow{2}{*}{\textbf{Jin}} & \emph{Com} & 16.50 & \textbf{16.50} & 15.67 & 16.83\\
\cmidrule(lr){2-6}
 & \emph{Acc} & 1.0000 & \textbf{1.0000} & 1.0000 & 1.0000\\
 \cmidrule(lr){2-6}
& \emph{Sol} & 0.5000 & \textbf{0.5000} & 0.5000 & 0.5000\\
\cmidrule(lr){2-6}
& \emph{Time} & 88.79 & \textbf{69.14} & 69.79 & 72.01 \\
\midrule
\multirow{4}{*}{\textbf{Livermore}} & \emph{Com} &11.00 & 10.86 & 11.50 & \textbf{10.82}\\
\cmidrule(lr){2-6}
 & \emph{Acc} & 1.0000 & 0.09091 & 1.0000 & \textbf{1.0000}\\
 \cmidrule(lr){2-6}
& \emph{Sol} & 0.4091 & 0.3183 & 0.4091 & \textbf{0.4091}\\
\cmidrule(lr){2-6}
& \emph{Time} & 68.60 & 49.65 & 53.30 & \textbf{52.29}\\
\midrule
\multirow{4}{*}{\textbf{Constant}} & \emph{Com} & 9.75 & 8.63 & 9.63 & \textbf{9.63}\\
\cmidrule(lr){2-6}
 & \emph{Acc} & 1.0000 & 0.7500 & 1.0000 & \textbf{1.0000}\\
 \cmidrule(lr){2-6}
 & \emph{Sol} & 0.7500 & 0.5000 & 0.7500 & \textbf{0.7500}\\
 \cmidrule(lr){2-6}
& \emph{Time} & 66.11 & 62.72 & 59.16 & \textbf{57.46} \\
\bottomrule
\end{tabular}}
\end{table*}

\begin{table*}[!]
\color{black}
\centering
\caption{\textcolor{black}{Relative changes of Huber and Quantile losses with respect to MSE baseline.}}
\label{tab:loss_comparison_delta}
\renewcommand{\arraystretch}{0.9}
\scalebox{1}{
\begin{tabular}{c|ccc|ccc}
\toprule
\multirow{2}{*}{Benchmark} 
& \multicolumn{3}{c|}{Huber $\Delta$ (vs. MSE)} 
& \multicolumn{3}{c}{Quantile $\Delta$ (vs. MSE)} \\
\cmidrule(lr){2-4} \cmidrule(lr){5-7}
& $\Delta R^2$ & $\Delta$ Sol & $\Delta$ Com 
& $\Delta R^2$ & $\Delta$ Sol & $\Delta$ Com \\
\midrule
Koza       & 0 & 0 & $\uparrow\!0.50$ & 0 & 0 & $\downarrow\!1.50$ \\
Keijzer    & $\uparrow 2e-5$     & 0   & $\downarrow 0.17$    & $\downarrow 8e-4$   & 0 & 0 \\
Nguyen     & $\downarrow 2.02e-3$     & 0   & $\downarrow 1.92$    & $\downarrow 1.87e-3$  & $\uparrow 0.0833$  & $\downarrow 2.25$  \\
Nguyen*    & $\downarrow 2.50e-3$     & 0   & $\downarrow 2.20$    & $\downarrow 2.37e-3$  & 0  & $\downarrow 2.20$  \\
R          & $\uparrow 3e-5$& 0   & $\downarrow 3.00$& 0   & 0 & 0 \\
Jin        & $\downarrow 3.98e-3$   & 0   & $\downarrow 0.83$    & $\downarrow 3.36e-3$  & 0 & $\downarrow 0.33$\\
Livermore  & $\downarrow 2.42e-3$     & 0   & $\downarrow 0.64$    & $\downarrow 1.47e-3$   & 0 & $\downarrow 0.23$ \\
Constant   & $\downarrow 1e-5$&  0   & $\uparrow 0.75$& $\downarrow 8e-5$  & 0   & $\downarrow 0.75$ \\
\bottomrule
\end{tabular}}
\end{table*}
\begin{table*}[!h]
\color{black}
\centering
\caption{\textcolor{black}{Sensitivity analysis of the parameters. The results with the highest Average $R^2$ Score and Symbolic Solution Rate, as well as the lowest Average Complexity, are highlighted.}}
\label{tab: Sensitivity analysis of the parameters}
\scalebox{1.1}{
\begin{tabular}{@{}lccccccccc@{}}
\toprule
Temperature Scheduling& \multicolumn{3}{c}{Constant} & \multicolumn{3}{c}{Linear}& \multicolumn{3}{c}{Exponent} \\
\midrule
Entropy Coefficient & 0.005 & 0.01 & 0.05 & 0.005 & 0.01 & 0.05& 0.005 & 0.01 & 0.05 \\
\cmidrule(lr){1-1}\cmidrule(lr){2-4} \cmidrule(lr){5-7}\cmidrule(lr){8-10}
Average $R^2$ Score  
& 0.9976 & 0.9976 & 0.9976 
& 0.9976 & 0.9976 & 0.9976 
& \textbf{0.9989} & 0.9976 & 0.9976 \\
\scriptsize{$\pm$ 95\% CI.} 
& \scriptsize{5.21e-4} & \scriptsize{5.21e-4} & \scriptsize{5.21e-4} 
& \scriptsize{5.21e-4} & \scriptsize{5.20e-4} & \scriptsize{5.20e-4} 
& \scriptsize{5.18e-4} & \scriptsize{5.20e-4} & \scriptsize{5.21e-4} \\
Symbolic Solution Rate & 0.1667 & 0.1667 & 0.1667 & 0.2500 & 0.2458 & 0.2458 & \textbf{0.2500} & 0.2458 & 0.2458 \\
Average Complexity & 14.83 & 14.83 & 14.87 & 14.83 & 14.85 & 14.85 & \textbf{13.50} & 14.85 & 14.89 \\
\bottomrule
\end{tabular}}
\end{table*}

\begin{table*}[!h]
\centering
\caption{Hyperparameters Setting.}
\label{tab:Hyperparameters Setting.}
\begin{tabular}{@{}lccc@{}}
\toprule
\textbf{Hyperparameter} & \textbf{Symbol} &  \multicolumn{2}{c}{\textbf{Value}} \\ 
\toprule
\textbf{Data Generation} & && \\
\midrule
 &  &  $d\le4$ & $d>4$ \\
\midrule
Max Input Dimension & \( d_{\text{max}} \) & 4 & 10 \\
Max Binary Operators &\( b_{\text{max}} \) & $d+5$ & $d+1$ \\
Max Hidden Layers &\( L_{\text{max}} \) & 6 & 7 \\
Number of repetitions of operators &\( m \) & 5 & 7 \\
Number of Functions &- & 1,500,000 & 3,750,000 \\
Input Dimension Distribution  &- &   \{1:0.1,2:0.2,3:0.3,4:0.4\} & \{5:0.2,6:0.2,7:0.15,8:0.15,9:0.15,10:0.15\} \\

\midrule
Train Data Points &\( N \) & \multicolumn{2}{c}{200} \\
Max Unary Operators & \( u_{\text{max}} \)  & \multicolumn{2}{c}{$\min\{5,d+1\}$} \\
Binary Operator Sampling Frequency &- & \multicolumn{2}{c}{$\{\text{add}$: 1, $\text{sub}$: 1, $\text{mul}$: 1, $\text{pow}$: 1$\}$} \\
Unary Operator Sampling Frequency &- & \multicolumn{2}{c}{$\{\log$: 0.3, $\exp$: 1.1, $\sin$: 1.1, $\cos$: 1.1, $\text{sqrt}$: 3, 
$\text{inv}$: 5, $\text{abs}$: 1, $\text{pow2}$: 2$\}$} \\
Distribution of $(w,b)$ &$\mathcal{D}_{\text{aff}}$ & \multicolumn{2}{c}{$\{\text{sign} \sim \mathcal{U}(-1,1)\}$, $\{\text{mantissa} \sim \mathcal{U}(0,1)\}$, $\{\text{exponent} \sim \mathcal{U}(-1,1)\}$} \\
\midrule
\textbf{Pre-Training} & & \\
\midrule
Learning rate & \( \eta \) & \multicolumn{2}{c}{0.0001} \\
Learning rate decay &\( \gamma \) & \multicolumn{2}{c}{0.99} \\
Batch size & \( B \) & \multicolumn{2}{c}{512} \\
Embedding size & - & \multicolumn{2}{c}{512} \\
Training epochs  & $K$ & \multicolumn{2}{c}{20($d \leq 4$), 40($d > 4$)} \\
\midrule
\textbf{Beam Search} & & \\
\midrule
Beam size & - & \multicolumn{2}{c}{20} \\
Number of candidate sequences  & - & \multicolumn{2}{c}{5} \\
\midrule
\textbf{Differentable Network Training} & & \\
\midrule
Learning rate & \( \eta \) & \multicolumn{2}{c}{0.01} \\
Batch size & \( B \) & \multicolumn{2}{c}{64} \\
Weight initialization parameter & - & \multicolumn{2}{c}{2} \\
Training epochs & \( K \) & \multicolumn{2}{c}{$10L$} \\
Penlty threshold of $log_{reg}$ & \( \theta_{log} \) & \multicolumn{2}{c}{$0.0001$} \\
Penlty threshold of $exp_{reg}$ & \( \theta_{exp} \) & \multicolumn{2}{c}{$100$} \\
Penlty Epoch &- & \multicolumn{2}{c}{$10$} \\
\midrule
\textbf{Symbolic Function Optimization} & & \\
\midrule
Number of restarts & - & \multicolumn{2}{c}{10} \\
Number of points  & - & \multicolumn{2}{c}{200} \\
Random exponent & $\mathcal{V}$ & \multicolumn{2}{c}{$\subset$\{1, -1, 0.5, -0.5, 2, -2, 3, -3, 4, -4, 5, \textcolor{black}{*}\}} \\ 
\bottomrule
\end{tabular}
\end{table*}

\begin{table*}[!h]
\color{black}
\centering
\caption{\textcolor{black}{Representative examples of equations optimized with DNO-NP from the Feynman dataset.}}
\label{tab: dnonp_examples}
\scalebox{1.1}{
\begin{tabular}{@{}ll@{}}
\toprule
Method&Datasets\\ 
\toprule
\multirow{2}{*}{DNO-NP}&Feynman\_I\_18\_4, Feynman\_I\_40\_1, Feynman\_I\_50\_26,
Feynman\_III\_14\_14, Feynman\_test\_2, \\
&Feynman\_test\_4, Feynman\_test\_6, Feynman\_test\_7, Feynman\_test\_16,
Feynman\_test\_19\\
\bottomrule
\end{tabular}}
\end{table*}

\begin{table*}[!h]
\color{black}
\centering
\caption{\textcolor{black}{Representative examples of discovered equations in Standard Benchmarks.}}
\label{tab: Representative examples in Standard_benchmarks.}
\scalebox{1}{
\begin{tabular}{@{}llllcc@{}}
\toprule
Benchmark&Name & Ground Truth& Discovered Equation&$R^2$&Sol \\ 
\toprule
\multirow{2}{*}{\textbf{Koza}}&Koza-2 & $x^5 - 2x^3 + x$ & $x(x^2-1)^2$&1.0000&T\\
&Koza-4 & $x^6 - 2x^4 + x^2$ &  $4.2x(0.032x^2+x-0.692)^4+0.084$&0.9999&F\\
\midrule
\multirow{5}{*}{\textbf{Keijzer}}&Keijzer-3 & $0.3\sin(2\pi x)$& $0.3\sin(2\pi x)$ &1.0000&T \\
&Keijzer-6 & $\frac{x(x+1)}{2}$ &$\frac{x^2+x}{2}$&1.0000&T \\
&Keijzer-7 & $\log x$ &$\log x$&1.0000&T \\
&Keijzer-10 & $x^y$ &$0.087x^{1.78\sqrt{y}}\exp(1.326\sqrt{y})$&1.0000&F \\
&Keijzer-11 & $xy+\sin(x-1)(y-1)$ &$-0.002x^4y+14.8\sin(0.261x+0.274y+4.75)+7.96$ &0.9912&F\\
&Keijzer-13 & $6\sin x\cos y$ &$6\sin x\cos y$ &1.0000&T \\
\midrule
\multirow{6}{*}{\textbf{Nguyen}}&Nguyen-1 & $x^3 + x^2 + x$ &  $x(x^2+x+1)$ &1.0000&T\\
&Nguyen-2 & $x^4+x^3 + x^2 + x$ &  $0.826x+1.03x^2(x+0.49)^2+2.33$&0.9999&F \\
&Nguyen-5 & $\sin(x^2) \cos(x) - 1$ & $8.04\sin^2(0.365x)cos^2(0.824x)-1$ &0.9994&F\\
&Nguyen-7 & $\log(x+1) + \log(x^2+1)$ & $5.97\log(2.38x+1.64)+2.62\log(10.2x+4.01)$ &0.9999&F\\
&Nguyen-8 & $\sqrt{x}$  & $x^{0.5}$ &1.0000&T\\
&Nguyen-9 & $\sin(x) + \sin(y^2)$  & $\sin(x) + \sin(y^2)$  &1.0000&T\\
&Nguyen-10 & $2\sin x\cos y$  & $2\sin x\cos y$  &1.0000&T\\
\midrule
\multirow{4}{*}{\textbf{Nguyen*}}&Nguyen-1c & $3.39x^3 + 2.12x^2 + 1.78x$ & $1.34x+3.39(x+0.208)^3-0.031$&0.9999&F\\
&Nguyen-5c & $\sin(x^2)\cos x-0.75$ & $0.818\sin(1.22x^2)\cos^2(0.692x)-0.75$ &0.9999&F\\
&Nguyen-8c & $\sqrt{1.23x}$ & $1.11x^{0.5}$ &1.0000&T\\
&Nguyen-10c & $\sin(1.5x)\cos(0.5y)$ & $\sin(1.5x)\cos(0.5y)$ &1.0000&T\\
\midrule
\multirow{2}{*}{\textbf{R}}&R-1 & $\frac{(x+1)^3}{x^2-x+1}$ &  $\frac{(x+1)^3}{x^2-x+1}$ &1.0000&T\\
&R-2 & $\frac{x^5-3x^2+1}{x^2+1}$ &  $\frac{1.17(1-0.936x^2)^2}{x}$&0.9999&F\\
\midrule
\multirow{2}{*}{\textbf{Jin}}&Jin-2 & $8.0x^2+8.0y^3-15.0$ &  $8.0x^2+8.0y^3-15.0$&1.0000&T\\
&Jin-4 & $1.5\exp(x)+5.0\cos y$ &  $1.5\exp(x)+5.0\cos y$&1.0000&T\\
\midrule
\multirow{6}{*}{\textbf{Livermore}}&Livermore-5 & $x^4-x^3+x^2-y$ &  $0.98x^4-0.764x^3+1.74x-y-1.02$&0.9999&T \\
&Livermore-6 & $4x^4+3x^3+2x^2+x$ &  $x^3(3.79x+0.00316x^4+4.36)+2.59$&0.9999&T\\
&Livermore-7 & $\sinh x$ &  $\frac{e^x-e^{-x}}{2}$ &1.0000&T\\
&Livermore-12 & $\frac{x^4}{y^4}$ &  $\frac{x^4}{y^4}$&1.0000&T\\
&Livermore-15 & $x^{\frac{1}{5}}$ &  $x^{0.2}$&1.0000&T\\
&Livermore-20 & $\exp(-x^2)$ &  $\exp(-x^2)$&1.0000&T\\
\midrule
\multirow{2}{*}{\textbf{Constant}}&Constant-4 & $2.7xy$ &  $2.7xy$&1.0000&T\\
&Constant-8& $\log(x+1.4)+\log(x^2+1.3)$ &  $2.48\log(0.27x+1.52)+0.428\log(0.908x^{3.4}+0.871)$&0.9999&F\\
\bottomrule
\end{tabular}}
\end{table*}

\begin{table*}[!h]
\color{black}
\centering
\caption{\textcolor{black}{Representative examples of discovered equations in SRBench.}}
\label{tab: Representative examples in SRBench.}
\scalebox{1}{
\begin{tabular}{@{}llllcc@{}}
\toprule
Benchmark&Name & Ground Truth& Discovered Equation&$R^2$&Sol \\ 
\toprule
\multirow{5}{*}{\textbf{ODE-Strogatz}}&Strogatz\_bacres2 & $10-\frac{xy}{1+0.5x^2}$ & $\frac{-1.434y^{0.992}}{x^{0.768}}-0.001x+0.041y+10.051$&0.9999&F\\
&Strogatz\_glider1 & $-0.05x^2-\sin y$ & $-0.05x^2-\sin y$&1.0000&T\\
&Strogatz\_lv1 & $3x-2xy-x^2$ & $3x-2xy-x^2$&1.0000&T\\
&Strogatz\_lv2 & $2y-xy-y^2$ & $2y-xy-y^2$&1.0000&T\\
&Strogatz\_predprey2 & $y(\frac{x}{1+x}-0.075y)$ & $\frac{-140.822(0.016y+1)^2}{14.927x-2.422y+39.805}+4.065$&0.9904&F\\
\midrule
\multirow{25}{*}{\textbf{Feynman}}&Feynman\_I\_6\_2a & $\frac{e^{-0.5\theta^2}}{\sqrt{2\pi}}$& $0.399e^{-0.5\theta^2}$ &0.9999&T \\
&Feynman\_I\_8\_14 & $\sqrt{(x_2-x_1)^2+(y_2-y_1)^2}$& $\sqrt{(x_2-x_1)^2+(y_2-y_1)^2}$ &1.0000&T \\
&Feynman\_I\_11\_19 & $x_1y_1+x_2y_2+x_3y_3$& $x_1y_1+x_2y_2+x_3y_3$ &1.0000&T \\
&Feynman\_I\_12\_2 & $\frac{q_1q_2r}{4\pi \epsilon r^3}$& $0.130\frac{q_1q_2}{4\pi \epsilon r^2}$ &1.0000&T \\
&Feynman\_I\_12\_11 & $q(Ef+Bv\sin\theta)$&$q(Ef+Bv\sin\theta)$&1.0000&T \\
&Feynman\_I\_13\_4 & $\frac{1}{2}m(v^2+u^2+w^2)$&$0.500m(v^2+u^2+w^2)$&1.0000&T \\
&Feynman\_I\_13\_12 & $\frac{Gm_1m_2}{\frac{1}{r_2}-\frac{1}{r_1}}$&$\frac{Gm_1m_2}{\frac{1}{r_2}-\frac{1}{r_1}}$&1.0000&T \\
&Feynman\_I\_18\_14 & $mrv\sin\theta$&$mrv\sin\theta$&1.0000&T \\
&Feynman\_I\_24\_6 & $\frac{1}{4}m(w^2+w_0^2)x^2$&$0.25m(w^2+w_0^2)x^2$&1.0000&T \\
&Feynman\_I\_25\_13 & $\frac{q}{c}$&$\frac{q}{c}$&1.0000&T \\
&Feynman\_I\_27\_6 & $\frac{1}{\frac{1}{d_1}+\frac{n}{d_2}}$&$\frac{d_2}{\frac{d_2}{d_1}+n}$&1.0000&T \\
&Feynman\_I\_30\_5 & $\arcsin{\frac{\lambda}{nd}}$&$\frac{0.953\lambda^{1.059}}{n^{1.063}(d-0.136)^{0.984}}+0.005$&0.9997&F \\
&Feynman\_I\_32\_5 & $\frac{q^2a^2}{6\pi\epsilon c^3}$&$0.0531\frac{q^2a^2}{\epsilon c^3}$&1.0000&T \\
&Feynman\_I\_37\_4 & $I_1+I_2+\sqrt{I_1I_2}\cos\theta$&$I_1+I_2+2(I_1I_2)^{0.5}\cos\theta$
&1.0000&T\\ 
&Feynman\_I\_38\_12 & $4\pi\epsilon\frac{(\frac{h}{2\pi})^2}{mq^2}$&$0.318\frac{\epsilon h^2}{mq^2}$&1.0000&T \\
&Feynman\_I\_44\_4 & $K_bT\log{\frac{v_2}{v_1}}$&$-K_bT\log{\frac{v_1}{v_2}}$&1.0000&T \\
&Feynman\_I\_47\_23 & $\sqrt{\frac{\gamma p_r}{\rho}}$&$({\frac{\gamma p_r}{\rho}})^{0.5}$&1.0000&T \\
&Feynman\_II\_6\_11 & $\frac{p_d\cos\theta}{4\pi\epsilon r^2}$&$\frac{0.015p_d(2.613\cos(0.986\theta)-0.056)}{\epsilon(1.442r-1.003)}$&0.9941&F \\
&Feynman\_II\_6\_15b & $\frac{3p_d\sin\theta\cos\theta}{4\pi\epsilon r^3}$&$0.119\frac{p_d\sin(2\theta)}{\epsilon r^3}$&1.0000&T \\
&Feynman\_II\_10\_19 & $\frac{\sigma_{den}(1+\chi)}{\epsilon}$&$\frac{\sigma_{den}(1+\chi)}{\epsilon}$&1.0000&T \\
&Feynman\_II\_11\_28 & $1+\frac{n\alpha}{1-\frac{n\alpha}{3}}$&$0.802n^{1.222}(1.464n^{0.889}+0.207\alpha)^{1.418}+1.015$&0.9992&F \\
&Feynman\_II\_13\_23 & $\frac{\rho_{c_0}}{\sqrt{1-\frac{v^2}{c^2}}}$&$\frac{\rho_{c_0}}{\sqrt{1-\frac{v^2}{c^2}}}$&1.0000&T \\
&Feynman\_II\_21\_32 & $\frac{q}{4\pi\epsilon r(1-\frac{v}{c})}$&$0.0531\frac{q}{\epsilon r(1-\frac{v}{c})}$&1.0000&T \\
&Feynman\_II\_35\_18 & $\frac{n_0}{\exp(\frac{momB}{K_bT}+\exp(-\frac{momB}{K_bT})}$&$\frac{0.506n_0^{0.995}}{(\frac{0.2mom^{1.888}B^{1.89}}{(K_bT)^{1.889}+1})^{2.396}}-0.003$&0.9999&F \\
&Feynman\_III\_4\_32 & $\frac{1}{\exp(\frac{hw}{2\pi K_bT})-1}$&$\frac{1}{\exp(0.159\frac{hw}{K_bT})-1}$&1.0000&T \\
&Feynman\_III\_15\_12 & $2U(1-\cos(kd))$&$2U(1-\cos(kd))$&1.0000&T \\
\bottomrule
\end{tabular}}
\end{table*}

\FloatBarrier

\clearpage

\begin{spacing}{0.93}   
\bibliographystyle{UniSymNet}
\bibliography{UniSymNet}
\end{spacing}

\end{document}